\theoremstyle{plain}
\newtheorem{theorem}{Theorem}
\newtheorem{lemma}{Lemma}
\newtheorem*{remark}{Remark}
\newtheorem{proposition}{Proposition}
\newtheorem{assumption}{Assumption}
\newcommand{\cmark}{\ding{51}}%
\newcommand{\E}{\mathbb{E}}
\newcommand{\Tr}{{\rm Tr}}
\newcommand{\cov}{{\rm cov}}
\title{Formation of Representations in Neural\\
 Networks}
\author{Liu Ziyin$^{1,3}$, Isaac Chuang$^1$, Tomer Galanti$^{2}$, Tomaso Poggio$^1$\\
\textit{$^1$Massachusetts Institute of Technology}\\
\textit{$^2$Texas A\&M University}\\
\textit{$^3$NTT Research}
\vspace{-2mm}
}
\begin{document}

\maketitle

\begin{abstract}
\vspace{-2mm}
    Understanding neural representations will help open the black box of neural networks and advance our scientific understanding of modern AI systems. However, how complex, structured, and transferable representations emerge in modern neural networks has remained a mystery. Building on previous results, we propose the Canonical Representation Hypothesis (CRH), which posits a set of six alignment relations to universally govern the formation of representations in most hidden layers of a neural network. Under the CRH, the latent representations (R), weights (W), and neuron gradients (G) become mutually aligned during training. This alignment implies that neural networks naturally learn compact representations, where neurons and weights are invariant to task-irrelevant transformations. We then show that the breaking of CRH leads to the emergence of reciprocal power-law relations between R, W, and G, which we refer to as the Polynomial Alignment Hypothesis (PAH). We present a minimal-assumption theory proving that the balance between gradient noise and regularization is crucial for the emergence of the canonical representation. The CRH and PAH lead to an exciting possibility of unifying major key deep learning phenomena, including neural collapse and the neural feature ansatz, in a single framework.
    %\vspace{-2mm}
\end{abstract}

\vspace{-3mm}
\section{Introduction}%\label{sec: problem setting}
\vspace{-2mm}

The success of deep learning is often attributed to its ability to learn meaningful latent representations from data \citep{bengio2013representation}. These latent representations, progressively formed as data passes through the network's layers, are found to encode increasingly abstract features of the input:
\begin{equation}
    x \to h^1 \to h^2  \to ... \to h^D\to \hat{y},
\end{equation}
where $x$ is the input, $\hat{y}$ the output, $D$ the network depth, and $h^i$ the activation of the $i$-th layer. For neural networks to perform well, the transformations between layers must capture meaningful structures in the data. Understanding how these latent representations are formed and structured is a foundational problem in deep learning, with implications for both theoretical understanding and practical applications. Despite significant advances, how neural networks organize and transform these internal representations remains an open question. This gap in understanding hinders our ability to design more efficient, interpretable, and generalizable models.

In this work, we seek to bridge this gap by introducing the Canonical Representation Hypothesis (CRH). At its core, the CRH posits that neural networks, during training, inherently align their representations with the gradients and weights. The satisfaction and breaking of a subset of CRH equations are found to delineate the universal phases, which are empirically observable scaling relationships between the weights, activations, and gradients. The CRH reveals a striking aspect of representation learning: there may exist a set of \textit{universal equations} that govern the formation of representations and \textit{universal phases} which distinguish the layers in modern neural networks, independent of the task, architecture, or loss function. 
Thus, the CRH provides an useful perspective on how neural networks evolve toward compact and interpretable solutions.

Our main contributions are the following:
\begin{enumerate}[noitemsep,topsep=0pt, parsep=0pt,partopsep=0pt, leftmargin=20pt]
    \item Proposal and justification of the CRH which states that within a layer, the neuron gradients, latent representation, and parameters are driven into mutual alignment after training, due to noise-regularization balance (Section~\ref{sec: crh} and \ref{sec: theory});
    \item Identification of mechanisms that break alignments, quantified via a Polynomial Alignment Hypothesis, which predicts power law scaling behaviors that characterize distinct phases of neural networks arising when the CRH is broken (Section~\ref{sec: crh breaking}).
\end{enumerate}
The experiments are presented in Section~\ref{sec: exp}. Section~\ref{sec: insight} discusses the implications of CRH to the formation of neural representations and the connection of the CRH to prior observations. Related works are discussed in Section~\ref{sec: related works}. The proofs are left to the Appendix Section~\ref{app sec: theory}.

\vspace{-3mm}
\section{Related Works}\label{sec: related works}
\vspace{-2mm}

Empirical results show that the representations of well-trained neural networks share universal characteristics \citep{maheswaranathan2019universality, huh2024platonic, ziyin2025parameter}. 
For us, a closely related phenomenon is the neural collapse (NC) \citep{papyan2020prevalence}, which studies how structured and noise-robust low-rank representations emerge in a classification model. The CRH can be seen as a generalization of NC because one can prove that when restricted to certain settings, the CRH is equivalent to the NC (Section~\ref{sec: insight}). Another related phenomenon is the neural feature ansatz (NFA) \citep{radhakrishnan2023mechanismfeaturelearningdeep}, which shows that the weight matrices of fully connected layers evolve according to the gradient outer product during training. However, the NFA studies the weight evolution, not the representations. 
Empirical power-laws are known to exist in large neural networks \citep{kaplan2020scaling, bahri2024explaining}, which relates the model performance to their sizes. The power laws discovered in our work are different because they are reciprocal relations that relate dual objects (R, G, W) to each other rather than to the performance. Other related works are discussed in the context where they become relevant. More related works are discussed in Appendix~\ref{app sec: related works}.

\vspace{-2mm}
\section{Canonical Representation Hypothesis}\label{sec: crh}
\vspace{-1mm}

Let us consider an arbitrary hidden layer $h_b$ of any model after a linear transformation:
\begin{equation}\label{eq: layer}
    h_b = W h_a(x).
\end{equation}
For convention, $h_b$ is called the ``preactivation" of the next layer, and $h_a$ is the ``postactivation" of the previous layer. The gradients of the representations are also of interest: $g_a = -\nabla_{h_a} \ell$ and $g_b = -\nabla_{h_b} \ell$, where $\ell$ is the sample-wise loss function. We note the generality of this setting, as $h_a$ can be an arbitrarily nonlinear function of $x$, and $f(x) = f(h_b(x))$ can be another arbitrary transformation. Also, letting $h_a= (h_a'(x),1)$ accounts for when there is a trainable bias.\footnote{As an example, consider a two-layer network, $f(x) = W_2 \sigma(W_1 x)$, with fully connected layers. For the first layer, $h_a^1 = x$, and $h_b^1= W_1 x$. For the second layer, $h_a^2 = \sigma(W_1 x)$ and $h_b^2 = W_2 h_a^2$.} 

Much recent literature has suggested that the quantities $h_a,\ h_b,\ W$ and their gradients are correlated with each other after or throughout training. The neural collapse phenomenon suggests that in a deep overparameterized classifier, $\E[h_a h_a^\top] \propto W^\top W$ for the penultimate fully connected layer \citep{papyan2020prevalence, Xu2023dynamics, ji2021unconstrained, kothapalli2022neural, rangamani2023feature}. In the study of kernel and feature learning, a primary mechanism of how the neural tangent kernel changes is that after a few steps of update, the representations become correlated with the weights \citep{everett2024scaling} and so the quantity $W h_a$ will be significantly away from zero, which also implies a strong relationship between $W^\top W$ and $\E[h_a h_a^\top]$. The recent work on neural feature ansatz shows that $W^\top W \propto \nabla_{h_a} f \nabla_{h_a}^\top f$ for fully connected networks \citep{radhakrishnan2023mechanismfeaturelearningdeep, gan2024hyperbfs}. The idea that the latent variables will become correlated with the weight updates is also a central notion in the feature learning literature \citep{yang2020feature, everett2024scaling}. 

Taken together, these results suggest a simple and unifying set of equations that can describe a fully connected layer after (and perhaps during) training. Let $c\in\{a,b\}$ and define $H_c = \E[h_ch_c^\top]$, $G_c = \E[g_cg_c^\top]$, and $Z_c = M_cM_c^\top$, where $M_a = W^\top = M_b^\top$. One can imagine six alignment relations between all the quantities within the same layer:
\begin{align}
    \text{representation-gradient alignment (RGA): }& H_c \propto G_c,\label{eq: rga}\\
    \text{representation-weight alignment (RWA): }& H_c \propto Z_c,\label{eq: rwa}\\
    \text{gradient-weight alignment (GWA): }& G_c \propto Z_c,\label{eq: gwa}
\end{align}
where $\E$ denotes the averaging over the training set. Because $h_b$ comes after $h_a$ during computation, we will refer to the alignment between any of the $b$-subscript matrices as a \textit{forward} relation and all $a$-subscript matrices as a \textit{backward} relation. 

For the formation of representations in neural networks, the most important relations in Eq.~\eqref{eq: rga}-\eqref{eq: gwa} are perhaps the forward and backward RGA, as they directly relate the representations $\E[hh^\top]$ to their gradients with respect to the loss function. Because there is no scientific reason for us to believe that the forward representation is more important than the backward representation or vice versa, one should study both directions carefully. The backward RWA in its general form has not been discussed in the literature but has been implicitly studied in the particular setting of neural collapse, which happens in the penultimate layer of an image classification task (Section~\ref{sec: insight}), while the backward GWA seems to be unknown to the best of our knowledge. The backward GWA is not identical to the neural feature ansatz (NFA) but can be seen as an equivariant correction to the NFA (Section~\ref{sec: insight}), and the forward GWA is also unknown to the community. Adding together the forward and backward versions, there are six alignment relations. That these relations simultaneously hold for any fully connected layer will be referred to as the \textbf{canonical representation hypothesis} (CRH).

Three scientifically fundamental questions are thus (1) does there exists a rigorous set of assumptions under which the CRH can be proved;  (2) what mechanisms can cause the CRH to break; and (3) can predictions of the CRH and its breaking be empirically observed in realistic deep neural network settings? We devote the rest of the paper to answering these three questions in the given order, and then we collect insights from all these answers.

\vspace{-2mm}
\section{Noise-Regularization Balance Leads to Alignment}\label{sec: theory}
\vspace{-1mm}
\paragraph{Notation} $\Delta A$ denotes the difference in the quantity $A(\theta)$ after one step of training algorihtm iteration at time step $t$: $\Delta A := A(\theta_{t+1}) - A(\theta_t)$. $\eta$ denotes the learning rate and $\gamma$ denotes the weight decay. $\E$ denotes the empirical average over the training set. $\ell(x,\theta)$ denotes the per-sample loss function, where $x$ is the data point and $\theta$ is the parameters. Its empirical average is the empirical loss $L$: $L(\theta)=\E[\ell(x, \theta)]$.

In this section, we present a formal and rigorous framework under which the CRH can be proved. As will become clear in the next section, this proof also explains how and when the CRH may fail. The problem setting is the same as in Eq.~\eqref{eq: layer}. The training proceeds in an online learning setting in which the training proceeds with weight decay of strength $\gamma$. We make the following assumption.
\begin{assumption}[Mean-field norms]\label{assump: mean field norm} The norms of $g$ and $h$ approximate their empirical averages: {\bf (A1)} $\|h_a\|^2 = \E[\|h_a\|^2]$, {\bf (A2)} $\|g_b\|^2 = \E[\|g_b\|^2]$.
\end{assumption}

This assumption holds, for example, for a high-dimensional Gaussian random vector, whose norm is of order $O(d)$ with a $\sqrt{d}$ standard deviation. A1 also holds automatically if the representations are normalized. Note that only a subset of all the assumptions is needed to prove each equation we derive below. For example, Eq.~\eqref{eq: fdt} below only requires A1 to prove. The minimal set of assumptions required to prove each equation in this section are stated in Section~\ref{app sec: proof of fdt}. We discuss the main intuition for the proof in the main text, and present the formal theorem at the end of the section.

\paragraph{Forward alignment.}  Consider the time evolution of $h_b h_b^\top$ during SGD training:
\begin{align}
    \Delta &(h_b(x) h_b^\top(x)) = \eta (\|h_a\|^2 g_b h_b^\top  + \|h_a\|^2  h_b g_b^\top - 2 \gamma h_b h_b^\top )  + \eta^2  \|h_a\|^4 g_bg_b^\top  +  O (\eta^2 \gamma + \|\Delta (h_a h_a^\top)\|),\nonumber
\end{align}
where $\eta$ is the learning rate. At the end of training, the learned representations should reach stationarity and so $ \Delta \E[h_b h_b^\top]\approx 0$.\footnote{See Figure~\ref{fig: convergence resnet} for the evolution of $\Delta H$.} Taking the expectation of both sides, we obtain 
\begin{equation}\label{eq: fdt}
    0 =  \underbrace{z_b\E [g_b h_b^\top] + z_b\E [h_b  g_b^\top]}_{\text{learning}} - \underbrace{2 \gamma \E[h_b h_b^\top]}_{\text{regularization}} + \underbrace{\eta z_b^2\E[g_bg_b^\top]}_{\text{noise}} ,
\end{equation}
where $z_b = \E[ \|h_a\|^2 ]$. The noise term is due to the discretization error of SGD and can be significant either when the step size is large, or the gradient is noisy. The mechanism behind this alignment is that while the gradient noise expands the representation, weight decay contracts it. When the dynamics reaches approximate stationarity, the dynamics due to learning plus these two effects must balance at the end of training.

Now, if additionally either (a) $\E[\Delta W] = 0$ (namely, at a local minimum) or (b) $\E[\Delta (WW^\top)] = 0$ holds (Section~\ref{app sec: theory}), the weight will also align with the cross terms between $g_b$ and $h_b$: $WW^\top \propto \E[g_b h_b^\top] + \E[h_b g_b^\top]$, which leads to the effect that the learning term above must also balance with the regularization term. Thus, eventually, the regularization effect will have to be balanced with the gradient noise. Lastly, if both (a) and (b) hold, the alignment between all three matrices emerges: $G_b \propto H_b \propto Z_b$.

\paragraph{Backward Alignment.} %If $\E[h_b h_b^\top]$ stops changing and $\E[h_b h_b^\top] \propto \E[g_b g_b^\top]$, it implies that $\E[g_bg_b^\top]$ also stops changing. Thus, 
One can similarly derive condition for $\E[\Delta G_a]=0$: $z_a(\E[h_a g_a^\top] + \E[g_a h_a^\top]) + \eta z_a^2 \E[h_a h_a^\top]=  2 \gamma \E[g_a g_a^\top]$, where we have defined $z_a = \E [\|g_b\|^2]$. The forward CRH can then be derived if $W^\top W$ and $W$ reaches stationarity.

The following theorem formalizes these results.
\begin{theorem}\label{theo: fdt}
    Under Assumption~\ref{assump: mean field norm}, when $\E[\Delta H_a]=0$, $\E[\Delta G_b]=0$, $\E[\Delta (W W^\top)]=0$, and $\E[\Delta (W^\top W)]=0$, there exist real-valued constants $c_1,\ c_2,\ c_3,\ c_4>0$ such that 
    \begin{align}
        WW^\top + c_1 \E[g_b g_b^\top] = c_2 \E[h_b h_b^\top],\quad W^\top W + c_3 \E[h_a h_a^\top] = c_4 \E[g_a g_a^\top].
    \end{align}
    Additionally, if at a local minimum,
    \begin{align}
        WW^\top \propto  \E[g_b g_b^\top] \propto  \E[h_b h_b^\top],\quad W^\top W \propto  \E[h_a h_a^\top] \propto  \E[g_a g_a^\top].
    \end{align}
\end{theorem}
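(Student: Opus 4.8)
The plan is to turn the informal balance arguments from the ``Forward alignment'' and ``Backward alignment'' paragraphs into exact matrix identities by carefully tracking the one-step update of the relevant Gram matrices under SGD with weight decay, using Assumption~\ref{assump: mean field norm} to replace the random scalars $\|h_a\|^2$ and $\|g_b\|^2$ by their expectations $z_b$ and $z_a$. First I would write the SGD update for a single layer: the gradient of the per-sample loss with respect to $W$ is $-g_b h_a^\top$, so with weight decay $\gamma$ the update is $\Delta W = \eta g_b h_a^\top - \eta\gamma W$. From this, $\Delta h_b = (\Delta W) h_a + W(\Delta h_a)$, and expanding $\Delta(h_b h_b^\top)$ to the orders shown in the displayed equation preceding Eq.~\eqref{eq: fdt} yields, after invoking A1 ($\|h_a\|^2 \to z_b$) and taking the training-set expectation, exactly Eq.~\eqref{eq: fdt}. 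Imposing $\E[\Delta(h_b h_b^\top)] = 0$ then gives the relation $2\gamma\,\E[h_b h_b^\top] = z_b\big(\E[g_b h_b^\top] + \E[h_b g_b^\top]\big) + \eta z_b^2\,\E[g_b g_b^\top]$.

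Next I would bring in the weight stationarity conditions. From $\Delta W = \eta g_b h_a^\top - \eta\gamma W$ one computes $\Delta(WW^\top) = \eta\big((g_b h_a^\top)W^\top + W(g_b h_a^\top)^\top\big) - 2\eta\gamma WW^\top + O(\eta^2)$, and since $W h_a = h_b$, the cross term is $\eta(g_b h_b^\top + h_b g_b^\top)$. Setting $\E[\Delta(WW^\top)] = 0$ thus yields $2\gamma\, WW^\top = \E[g_b h_b^\top] + \E[h_b g_b^\top]$ (to leading order), which says the symmetrized cross-correlation is itself a positive multiple of $WW^\top$. Substituting this into the stationarity identity for $H_b$ eliminates the cross term and produces the first claimed equation, $WW^\top + c_1\,\E[g_b g_b^\top] = c_2\,\E[h_b h_b^\top]$, with $c_1 = \eta z_b^2/(z_b\cdot 2\gamma)\cdot(\text{const})$ and $c_2$ likewise an explicit positive ratio of $\gamma$, $\eta$, $z_b$; the positivity of $c_1,\dots,c_4$ follows because $\gamma,\eta,z_a,z_b > 0$ and the Gram matrices are PSD. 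The backward identity $W^\top W + c_3\,\E[h_a h_a^\top] = c_4\,\E[g_a g_a^\top]$ is obtained by the mirror-image computation: the loss gradient flows back through $W^\top$, so $\Delta(g_a g_a^\top)$ picks up $\|g_b\|^2$ factors, A2 replaces these by $z_a$, and $\E[\Delta(W^\top W)] = 0$ plays the role that $\E[\Delta(WW^\top)] = 0$ did in the forward case.

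For the ``additionally'' part, the extra hypothesis is that we are at a local minimum, i.e.\ $\E[\nabla_W L] = 0$, equivalently $\E[g_b h_a^\top] = \gamma W$ exactly (not just in symmetrized, leading-order form). I would use this to show that the noise term must separately vanish in expectation relative to the others, or more precisely that at a genuine critical point the ``learning'' contribution $z_b(\E[g_b h_b^\top] + \E[h_b g_b^\top])$ is already proportional to $WW^\top$ with no residual, which forces $WW^\top \propto \E[h_b h_b^\top]$ directly from the $H_b$ stationarity equation; then the same equation forces $\E[g_b g_b^\top] \propto \E[h_b h_b^\top]$ as well, and chaining gives the three-way proportionality. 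The backward chain is analogous. I expect the main obstacle to be making the order-of-magnitude bookkeeping honest: the displayed pre-Eq.~\eqref{eq: fdt} expansion carries $O(\eta^2\gamma + \|\Delta(h_a h_a^\top)\|)$ remainders, and the ``$\propto$'' conclusions are really statements that hold once these remainders are dropped, so I would state precisely in what limit (small $\eta\gamma$, or exact stationarity of $h_a h_a^\top$ as assumed) the proportionalities are exact, and verify that combining the forward $H_b$ equation with the $WW^\top$ equation does not reintroduce a term of the same order as the ones retained. A secondary subtlety is that $\E[g_b h_b^\top] + \E[h_b g_b^\top] \propto WW^\top$ gives a scalar proportionality constant that must be shown positive (it equals $2\gamma$ up to the retained order), which is needed so that $c_2 > 0$ and the PSD ordering in the first pair of identities is consistent.
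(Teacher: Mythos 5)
Your overall route is the same as the paper's: expand the one-step SGD-with-weight-decay updates of $h_bh_b^\top$, $g_ag_a^\top$, $WW^\top$ and $W^\top W$, use A1/A2 to replace $\|h_a\|^2$ and $\|g_b\|^2$ by $z_b$ and $z_a$, set the expected increments to zero, and combine the resulting identities; the backward identity is indeed the mirror image. The derivation of Eq.~\eqref{eq: fdt} and the use of $\E[\Delta(WW^\top)]=0$ to eliminate the cross term $\E[g_bh_b^\top]+\E[h_bg_b^\top]$ match the paper's first lemmas.

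There is, however, a genuine gap in your argument for the proportionality claim at a local minimum. You assert that once the learning term is proportional to $WW^\top$, the $H_b$ stationarity equation ``forces $WW^\top \propto \E[h_bh_b^\top]$'' and then ``the same equation forces $\E[g_bg_b^\top]\propto\E[h_bh_b^\top]$.'' A single linear relation $2\gamma H_b = 2\gamma z_b\,WW^\top + \eta z_b^2\,\E[g_bg_b^\top]$ among three matrices cannot yield pairwise proportionality; you need a second, independent relation between two of them. The paper obtains it from the observation that at a local minimum $\E[\Delta W]=0$ makes the first-order part of $\E[\Delta(WW^\top)]$ vanish identically, so the stationarity of $WW^\top$ reduces to the pure second-moment condition $\E[\Delta W\,\Delta W^\top]=0$, which together with $\E[g_bh_b^\top]=\gamma WW^\top$ gives $z_b\,\E[g_bg_b^\top]=\gamma^2\,WW^\top$; only then does substitution into Eq.~\eqref{eq: fdt} close the three-way chain. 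Your bookkeeping works against you here: earlier you write $\Delta(WW^\top)$ with the quadratic term relegated to an $O(\eta^2)$ remainder to be dropped, but that quadratic term is exactly the ``noise'' contribution that carries $\E[g_bg_b^\top]$ and cannot be discarded --- dropping it both removes the mechanism that produces the alignment and (in the non-minimum case) perturbs the sign of the constant multiplying $\E[g_bg_b^\top]$ in the additive identity. To repair the proof, keep $\E[\Delta W\,\Delta W^\top]$ throughout and treat the local-minimum hypothesis as the statement that this quadratic term alone must vanish in expectation.
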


\begin{remark}
A strength of this derivation is that it is oblivious to the loss function, the model architecture, or the type of activation used (as long as the second moments exist). This may explain the wide applicability of the RGA observed in Section~\ref{sec: exp}. The above alignment relations can be seen as a type of fluctuation-dissipation theorem in theoretical physics \citep{kubo1966fluctuation}, which states that in a driven stochastic dynamics, the fluctuation of the force must balance with the rate of energy loss -- a fundamental law first discovered by \cite{einstein1906theory}. Prior applications of the fluctuation-dissipation theorem to deep learning have focused on the covariance of the model parameters \citep{yaida2018fluctuation, liu2021noise} and are not directly relevant to the representations.
\end{remark}

\vspace{-3mm}
\section{CRH Breaking and  Polynomial Alignment Hypothesis}\label{sec: crh breaking}
\vspace{-1mm}

While the CRH can be found to hold for many scenarios, it is highly unlikely that it always holds perfectly and for every layer (e.g., see Section~\ref{sec: exp}). In this section, we study what happens if the CRH is broken; we then suggest two mechanisms which cause the CRH to break. 
The following theorem shows that all six relations are intimately connected, even if only a subset of the CRH holds. 
For a square matrix $A$, we use $A^{-n} := (A^+)^n $ to denote the $n-$th power of the pseudo inverse of $A$, and $A^0 = AA^+$ is an orthogonal projection matrix to the column space of $A$. 
\begin{theorem}[CRH Master Theorem]\label{theo: reciprocal}
Let $A$, $B$, $C$ be a permutation of $\E[hh^\top]$, $\E[gg^\top]$, and $Z$, and let $\tilde{D}:= PDP$ be a projected version of $D$ for a projection matrix $P$. Then,
    \begin{enumerate}[noitemsep,topsep=0pt, parsep=0pt,partopsep=0pt, leftmargin=13pt]
        \item (Directional Redundancy) if any two forward (backward) alignments hold, all forward (backward) alignments hold;
        \item (Reciprocal Polynomial Alignments) if one of any forward alignments and one of any backward alignments hold, there exists scalars $\alpha_c$, $\beta_c$, and $\delta_c$ satisfying $-1\leq  \alpha_c,\beta_c,\delta_c \leq 3$ such that 
        \begin{equation}
            \tilde{A}_c^{\alpha_c} \propto \tilde{B}_c^{\beta_c} \propto \tilde{C}_c^{\delta_c},
        \end{equation}
        (as detailed in Table~\ref{tab:reciprocal relations})
        where $c\in \{a,b\}$ denotes the backward and forward relations respectively, and the corresponding projection $P_c\in \{Z_c^0, \E[h_ch_c^\top]^0, \E[g_c g_c^\top]^0\}$, e.g. such that $\tilde{A} = P_c A P_c$.
        \item (Canonical Alignment I) If (any) one more relation holds in addition to part 2, then all six alignments hold in the $Z^0$ subspace; in addition, at a local minimum, all six alignments hold;
        \item (Canonical Alignment II) If all six alignments hold, $\E[hh^\top] \propto \E[gg^\top] \propto Z \propto P$, where $P$ is an orthogonal projection matrix.
    \end{enumerate}
\end{theorem}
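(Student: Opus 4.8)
The plan is to reduce the whole theorem to two elementary transfer identities for the layer of Eq.~\eqref{eq: layer} together with some pseudoinverse bookkeeping. Since $h_b=Wh_a$ and, by the chain rule, $g_a=W^\top g_b$, one has
\[
\E[h_bh_b^\top]=W\,\E[h_ah_a^\top]\,W^\top,\qquad \E[g_ag_a^\top]=W^\top\,\E[g_bg_b^\top]\,W,
\]
while $Z_b=WW^\top$ and $Z_a=W^\top W$. Fixing a thin SVD $W=U\Sigma V^\top$, I would record the routine identities $W^+=V\Sigma^{-1}U^\top=(W^\top W)^+W^\top$, $W^+(WW^\top)^k(W^+)^\top=(W^\top W)^{k-1}$ and $(W^+)^\top(W^\top W)^kW^+=(WW^\top)^{k-1}$ on the relevant column spaces. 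The upshot is a power-shift principle: a relation of the form ``(matrix) $\propto Z_c^{k}$'' pushed across the layer in its natural direction (via $W(\cdot)W^\top$ or $W^\top(\cdot)W$) becomes ``$\propto Z_{c'}^{k+1}$'', pulled back (via $W^+$) becomes ``$\propto Z_{c'}^{k-1}$'', always after compression onto a common range; an RGA relation $\E[h_ch_c^\top]\propto\E[g_cg_c^\top]$ transports a relation within a side with no change of exponent.

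For part~1 I would use only transitivity of $\propto$: among the three forward objects $\E[h_bh_b^\top]$, $\E[g_bg_b^\top]$, $Z_b$, any two of the three proportionalities force the third, and likewise backward; the degenerate sub-case in which one object vanishes is disposed of separately. For part~4 the argument is short: forward and backward RWA give $\E[h_bh_b^\top]\propto Z_b$ and $\E[h_ah_a^\top]\propto Z_a$, and feeding the latter through the transfer identity yields $\E[h_bh_b^\top]\propto WZ_aW^\top=Z_b^2$, so $Z_b\propto Z_b^2$; every nonzero eigenvalue of $Z_b=WW^\top$ must then coincide, so $W$ is a scaled partial isometry, $Z_a=W^\top W$ is likewise a scaled orthogonal projection $P$, and all six alignments collapse to $\E[hh^\top]\propto\E[gg^\top]\propto Z\propto P$.

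Part~2 is the technical core and the step I expect to be the main obstacle. I would run a $3\times3$ case analysis over the choice of one forward and one backward alignment. RWA$_c$ and GWA$_c$ each state ``(matrix) $\propto Z_c^{1}$'', whereas RGA$_c$ is only an internal link on side $c$; starting from the two hypotheses I close the system under the two transfer identities and the RGA links, propagating the $Z$-exponent with the power-shift principle. A short argument shows that at most two successive transfers ever chain (both in the same direction), so every exponent stays in $\{-1,0,1,2,3\}$, giving $-1\le\alpha_c,\beta_c,\delta_c\le3$ and filling in Table~\ref{tab:reciprocal relations}; the extremes $-1$ and $3$ are attained already in the RWA--RGA and RGA--RWA combinations. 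The delicate points I expect to fight with are: (i) each step only makes sense after compression, so I must verify that the relevant projection is consistently one of $Z_c^0$, $\E[h_ch_c^\top]^0$, $\E[g_cg_c^\top]^0$ and that $\tilde Z_c=P_cZ_cP_c$ still acts like $Z_c$ there; and (ii) the combinations with RGA on both sides (and, differently, a few ``same-type'' combinations) reduce not to an exponent chain but to a fixed-point equation $\E[h_bh_b^\top]\propto Z_b\,\E[h_bh_b^\top]\,Z_b$, which I would resolve by diagonalizing $Z_b$ to conclude that it is a scaled identity on $\mathrm{range}(\E[h_bh_b^\top])$, so the relation degenerates to exponent $0$ (still within the stated range).

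For part~3 I would observe that a third alignment necessarily puts two alignments on the same side, so by part~1 that whole side aligns; combined with part~2 this forces an exact relation of the type $Z_c^{m}\propto Z_c^{0}$ with $m\ge1$, hence $Z_c\propto$ projection exactly as in part~4, which collapses the surviving exponents to equality and yields all six alignments inside the common $Z^0$ subspace (the caveat being only that $\E[g_bg_b^\top]$, resp. $\E[h_ah_a^\top]$, may still carry a component in $\ker W$, resp. $\ker W^\top$, invisible to the transfer identities). To upgrade ``within $Z^0$'' to ``globally'' at a local minimum, I would call on the stationarity structure behind Theorem~\ref{theo: fdt}: at a critical point of the regularized loss $\E[g_bh_a^\top]=\gamma W$, whence $\E[g_bh_b^\top]=\gamma Z_b$, and substituting into Eq.~\eqref{eq: fdt} gives $\eta z_b^2\E[g_bg_b^\top]=2\gamma(\E[h_bh_b^\top]-z_bZ_b)$, which forces $\E[g_bg_b^\top]$ — and, dually, $\E[h_ah_a^\top]$ — to be supported on $\mathrm{range}(Z_c)$; the projection $P_c$ is then the identity on every matrix in sight and all proportionalities hold without restriction.
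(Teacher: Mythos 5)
Your proposal follows essentially the same route as the paper's proof: part 1 by transitivity, part 2 by conjugating relations across the layer with $W$ and $W^\top$ (the paper's ``every forward relation implies a backward relation'') with the resulting $Z$-exponent bookkeeping and a separate resolution of the $B=ABA$-type fixed-point cases by diagonalization, part 3 by pigeonhole plus the first-order stationarity condition $\E[g_bh_a^\top]=\gamma W$ to remove the projection, and part 4 by deriving $Z_b\propto Z_b^2$. The only cosmetic differences are your explicit SVD/pseudoinverse framing in place of the paper's $B=ABA$ lemmas, and routing the local-minimum upgrade in part 3 through Eq.~\eqref{eq: fdt} rather than the kernel-containment lemma, both of which rest on the same stationarity identity.
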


\begin{table}[b!]
    \centering
    \vspace{-1.5em}
    \setlength{\tabcolsep}{1.9pt}
    {\small 
    \begin{tabular}{c|c|c|c|c || c  cc c  }
        \hline
        Phase & Back. Alignment & Forw. Alignment & Back. Power Law & Forw. Power Law &  NC & NFA & CU & llm  \\
        \hline
        CRH  & $H_a\propto Z_a \propto  G_a$ &$H_b\propto Z_b \propto  G_b$  & - &-  & \cmark & \cmark & \cmark\\ 
        \hdashline
        back. CRH & $H_a\propto Z_a \propto  G_a$ & -& -& $\tilde{H}_b^0 \propto \tilde{Z}_b^{0} \propto \tilde{G}_b $ &  & \cmark & \cmark \\
        &&& &  ($H_b \propto Z_b^2$) & \\
        \hdashline
        forw. CRH & - & $H_b\propto Z_b \propto  G_b$ & $\tilde{H}_a \propto \tilde{Z}_a^{0} \propto \tilde{G}_a^0 $ & -& &  && \cmark(3-6)\\ 
        &&&   ($Z_a^2 \propto G_a$) & & \\
        
        \hline
        1 & $H_a \propto G_a$ & $H_b \propto G_b$ & $\tilde{H}_a^0 \propto \tilde{Z}_a \propto \tilde{G}_a^0$ & $\tilde{H}_b^0 \propto \tilde{Z}_b \propto \tilde{G}_b^0$ & &  & & \cmark(3-6)\\
        
        2 & $H_a \propto Z_a$ & $H_b \propto Z_b$ & $\tilde{H}_a \propto \tilde{Z}_a \propto \tilde{G}_a^0$ & $\tilde{H}_b \propto \tilde{Z}_b \propto \tilde{G}_b^0$ & &  & \cmark\\
        
        3 & $G_a \propto Z_a$ & $G_b \propto Z_b$ & $\tilde{H}_a^0 \propto \tilde{Z}_a \propto \tilde{G}_a$ & $\tilde{H}_b^0 \propto \tilde{Z}_b \propto \tilde{G}_b$ & & \cmark\\
        
        4 & $H_a \propto G_a$ & $H_b \propto Z_b$ & $\tilde{H}_a \propto \tilde{Z}_a^0 \propto \tilde{G}_a$ & $\tilde{H}_b \propto \tilde{Z}_b \propto \tilde{G}_b^{-1}$ \\
        
        5 & $H_a \propto Z_a$ & $H_b \propto G_b$ & $\tilde{H}_a^3 \propto \tilde{Z}_a^3 \propto \tilde{G}_a$ & $\tilde{H}_b \propto \tilde{Z}^2 \propto \tilde{G}_b$ & & & \cmark& \cmark(3-6)\\
        
        6 & $H_a \propto G_a$ & $G_b \propto Z_b$ & $\tilde{H}_a \propto \tilde{Z}_a^2 \propto \tilde{G}_a$ & $\tilde{H}_b \propto \tilde{Z}_b^3 \propto \tilde{G}_b^3$ & & & & \cmark(1)\\
        
        7 & $G_a \propto Z_a$ & $H_b \propto G_b$ & $\tilde{H}_a^{-1} \propto \tilde{Z}_a \propto \tilde{G}_a$ & $\tilde{H}_b \propto \tilde{Z}_b^0 \propto \tilde{G}_b $ & & \cmark\\
        
        8 & $H_a \propto Z_a$ & $G_b \propto Z_b$ & $\tilde{H}_a^2 \propto \tilde{Z}_a^2 \propto \tilde{G}_a$ & $\tilde{H}_b \propto \tilde{Z}_b^2 \propto \tilde{G}_b^2$ & & & \cmark & \cmark(2) \\
        
        9 & $G_a \propto Z_a$ & $H_b \propto Z_b$ & $\tilde{H}_a \propto \tilde{Z}_a^0 \propto \tilde{G}_a^0$ & $\tilde{H}_b^0 \propto \tilde{Z}_b^0 \propto \tilde{G}_b$  & & \cmark\\

        \hline
    \end{tabular}
    }\vspace{-0.8em}
    \caption{\small The reciprocal polynomial relations of the CRH Master Theorem. When one forward relation and one backward relation hold simultaneously, all six matrices are polynomially aligned in a subspace (Theorem~\ref{theo: reciprocal}). Each scaling relationship can be regarded as a possible phase for the layer during actual training. The right panel shows how existing observations about neural networks fit into the phase diagram. A \cmark\ denotes that this phenomenon is compatible with the specified phase. NC refers to the neural collapse. NFA refers to the neural feature ansatz. CU (correlated update) refers to the (idealization of the) common observation that $h_a$ is correlated with $W$ a few steps after training \citep{everett2024scaling}. The llm column shows the compatibility of the scaling relation for transformer observed in Figure~\ref{fig:power laws}.}
    \label{tab:reciprocal relations}
\end{table}

% The Google paper says that that H_a \propto Z_a essentially.  Thus, either the Google model was in phase #2, #5, or #8, or it simultaneously satisfied all the relations and was in a perfect CRH phase.

The idea behind the proof is that there is some redundancy in the six matrices: every forward relation implies a backward relation and vice versa. As an example, if $Z_a \propto H_a$, then we also have $Z_b^2 \propto H_b$, which can be obtained by multiplying $W$ on the left and $W^\top$ on the right. 

Part (4) of the theorem clarifies what it means to satisfy the CRH: the latent representation is fully compact, where weight and representation are only nonvanishing in the subspaces where the gradient is nonvanishing. Moreover, the weight matrix does nothing but rotates the representation, implying that the information processing is invertible once the CRH is fully satisfied. This is consistent with the observation that once a layer has an almost perfect alignment, all the layers after it also have perfect alignment, a sign that the representation cannot be further compressed (Section~\ref{sec: exp}). Therefore, the CRH is consistent with the observation that last layers of neural networks are low-rank and invariant to irrelevant features.

Part (2) is especially relevant when the CRH is broken. Depending on which subset of the hypotheses holds, the learning process may be classifiable into as many as $2^6 = 64$ phases. In different phases, the learning dynamics and the found solution will likely be different due to different scaling relations. For example, positive exponents between $Z_a$ and $H_a$ will imply that the layer is enhancing the principle components of $H_a$, while suppressing the lesser features; a negative exponent would imply the converse. Even if we remove the redundancy implied by the theorem, there are still at least $22$ phases. One can also define additional phases according to the ordering of the degree of breaking for each relation, which gives $6!=720$ phases, although these ordering phases may not have a major influence. In many experiments, we performed, at least one of the forward relations and one of the backward relations are observed to hold very well (e.g., see Figure~\ref{fig:nfa}). This means that one is quite likely to observe the power-law relations predicted in Table~\ref{tab:reciprocal relations}. We also find it common for different layers to be in different phases, even within the same network. We discuss more potential meanings and examples in Section~\ref{app sec: pah}.

Broadly interpreted, part (2) predicts a power law relation between the spectrum of all six matrices, which is also what we observe in almost all experiments (Section~\ref{sec: exp}). What is quite surprising is that almost all positive exponents we observed are within the range $[1/3, 3]$, which is exactly the range of exponents that the theorem predicts (e.g., see Section~\ref{app sec: fc crh}). Formally, that the $H$, $Z$, and $G$ are polynomially related can be called the ``\textbf{Polynomial Alignment Hypothesis (PAH)}" and is a natural extension of the CRH. That scaling relations can be used to characterize different phases of matter is an old idea in science. In physics, phases can be classified according to their  scaling exponents, and having a different set of exponents implies that the underlying dynamics and mechanism are entirely different \citep{pelissetto2002critical}. This connection corroborates our physics-inspired proof. 
\vspace{-2mm}
\paragraph{Breaking of CRH.} A major remaining question is whether we can find mechanisms such that the CRH breaks. The theory in the previous section implies one primary mechanism that the CRH breaks. For all six alignments to hold, it needs to be the case that both $\E[\Delta W]$, $\E[\Delta Z]$ are zero, but these two conditions may not be easily compatible with each other, as they together imply that $\Delta W$ has zero variance. While this may be possible for some subspaces (e.g., see \cite{ziyin2024losssymmetrynoiseequilibrium}), it does not hold for every subspace. In fact, it is easy to show that unless the minibatch size is large enough the SGD updates will never reach zero variance that is $\Delta W \neq 0$ even for $t \to \infty$ (see Lemma 8 in \cite{Xu2023dynamics}; see also \cite{xu2023janus}).
Thus, the competition between reaching a training loss of zero and the need to reach a stationary fluctuation is a primary cause of the CRH breakage. Because the rank tends to decrease for later layers in networks performing classification, we expect that CRH holds better for later layers than for earlier layers.

This problem is especially troublesome in the layers where $\E[\Delta W]$ has a high rank, which holds true for the earlier layers of the network but not the later layers \citep{xu2023janus}. This is consistent with the observation that the CRH holds much better in the latter layers than in the beginning layers. This analysis is consistent with the fact that a stronger alignment is strongly correlated with a more compact representation (Figure~\ref{fig:rank}). %Another mechanism for its breaking is discussed in the next section.

\paragraph{Finite-Time Breaking of CRH.} To leading order in $\gamma$, the proof of the CRH implies that
\begin{align}
      G_b + O(\gamma) \propto \gamma^2 H_b \propto \gamma^2 WW^\top,\quad H_a + O(\gamma) \propto \gamma^2 G_a \propto \gamma^2 W^\top W.
\end{align}
This means that when $\gamma$ is small, the forward alignment between $H_b$ and $WW^\top$ are strong (because the prefactor is independent of $\gamma$), while the other two are weak -- because the huge disparity between the two matrices, it might take gradient descent longer than practical to reach such a solution. Similarly, the backward alignment between $G_a$ and $W^\top W$ is strong for a small $\gamma$. This prediction will be verified in Section~\ref{sec: insight} when we discuss the neural feature ansatz.

\begin{figure}[t!]
    \centering
    \vspace{-1em}
    \includegraphics[width=0.243\linewidth]{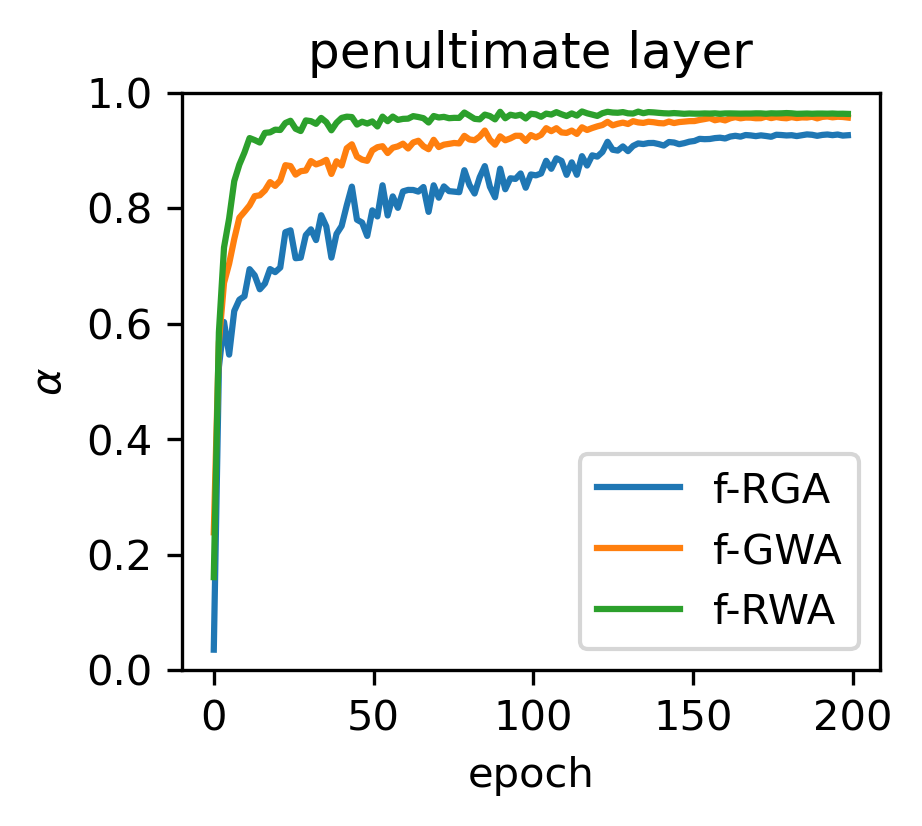}
    \includegraphics[width=0.243\linewidth]{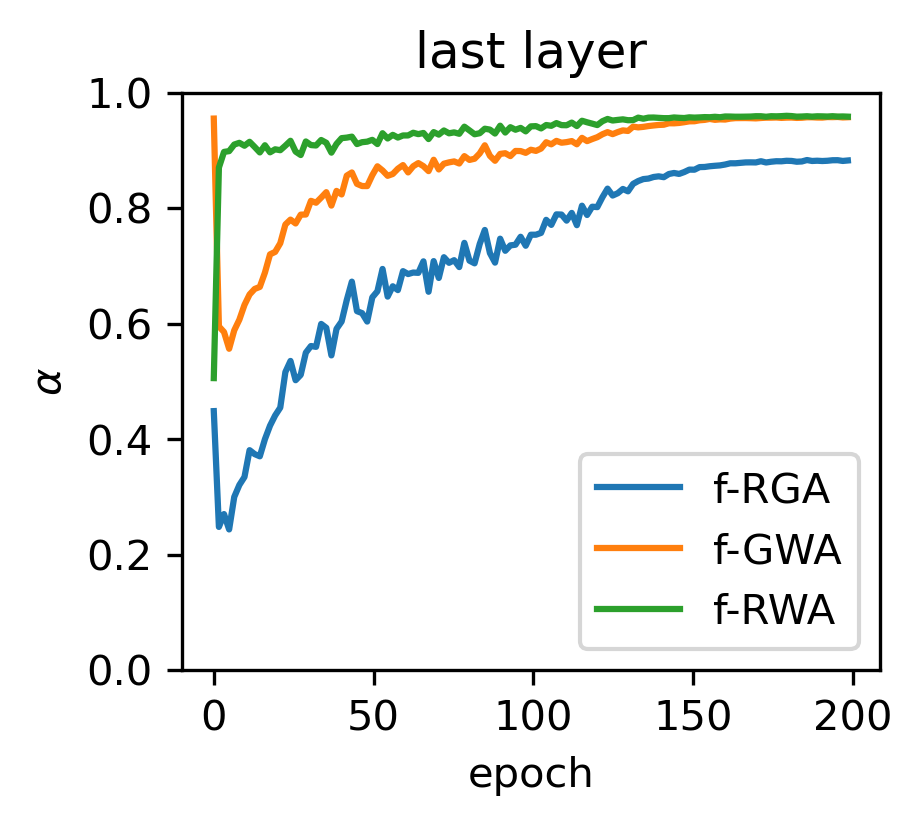}
    \includegraphics[width=0.243\linewidth]{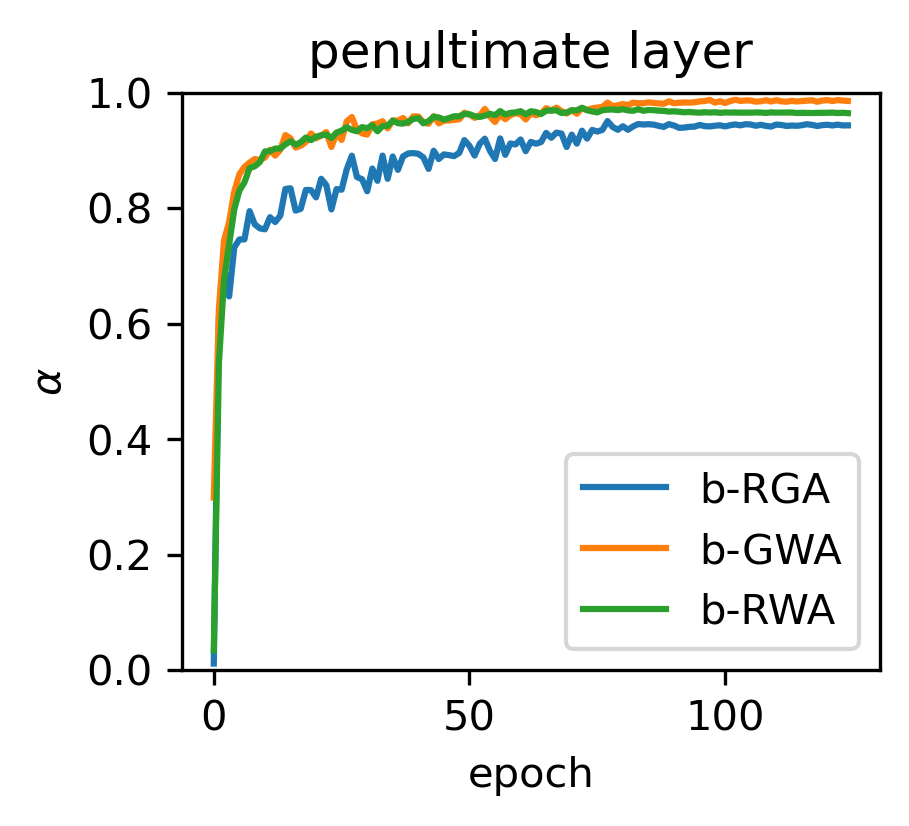}
    \includegraphics[width=0.243\linewidth]{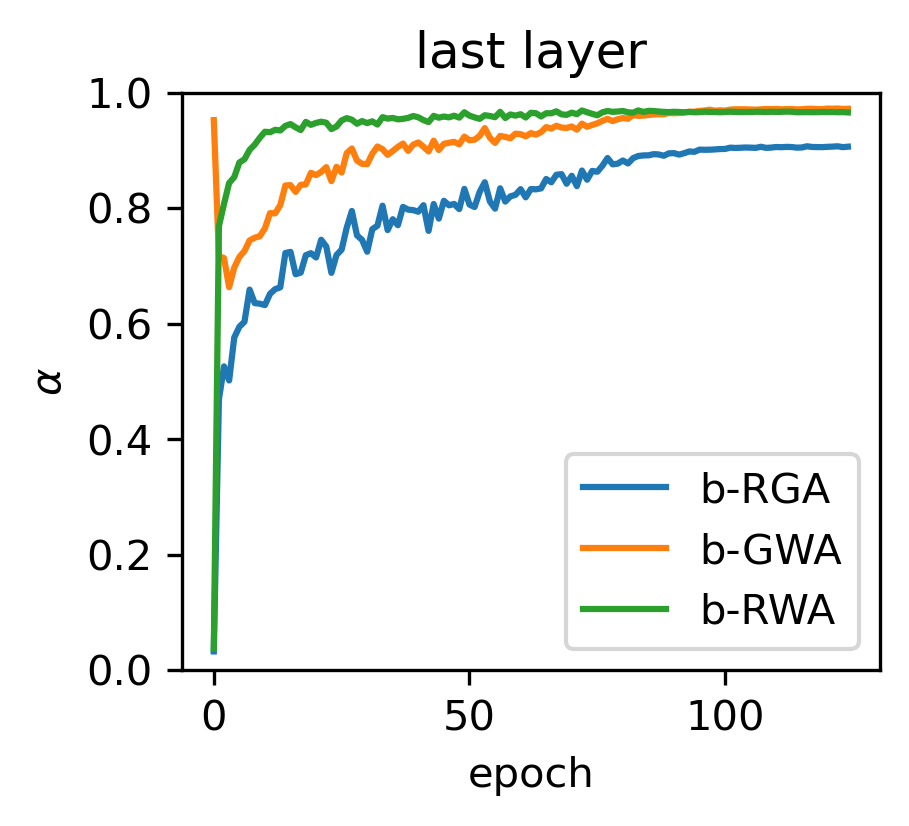}
    \vspace{-1em}
    \caption{\small Six alignment relations in the penultimate layer and output layer of a ResNet18 trained on CIFAR-10 (\textbf{res1}). \textbf{Left}: forward CRH. \textbf{Right}: backward CRH. We see that all six relations hold significantly across two fully connected layers. Also, we show that the matrix $\cov(g, h)$ is well aligned with $WW^\top$ in the appendix Section~\ref{app sec: resnet crh}, which is a strong piece of evidence supporting the key theoretical step that the cross terms will be aligned with the weights (and $G,\ H$). }
    \vspace{-1em}
    \label{fig:resnet CRH}
\end{figure}

\begin{figure}
    \centering
    \includegraphics[width=0.26\linewidth]{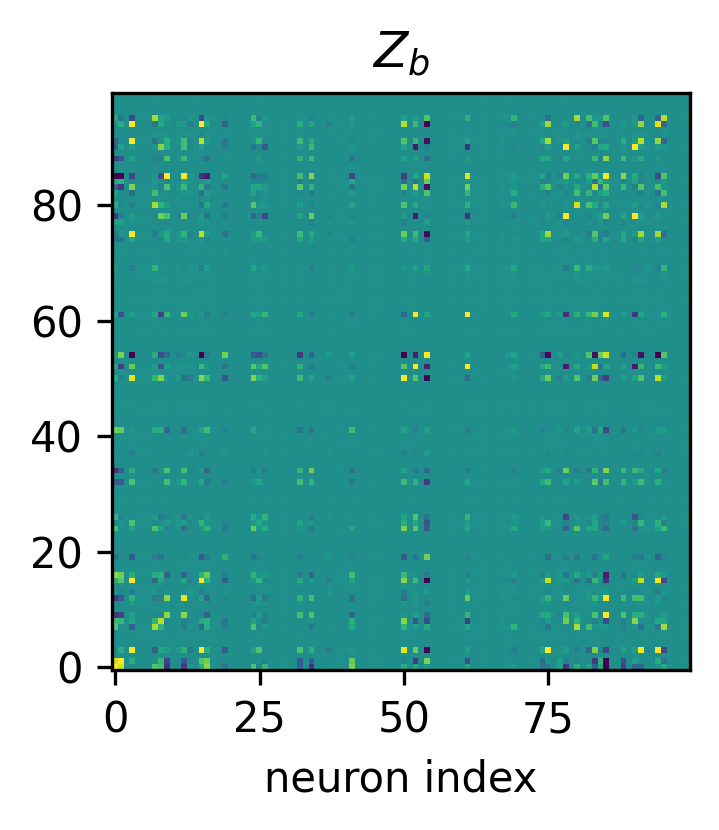}
    \includegraphics[width=0.26\linewidth]{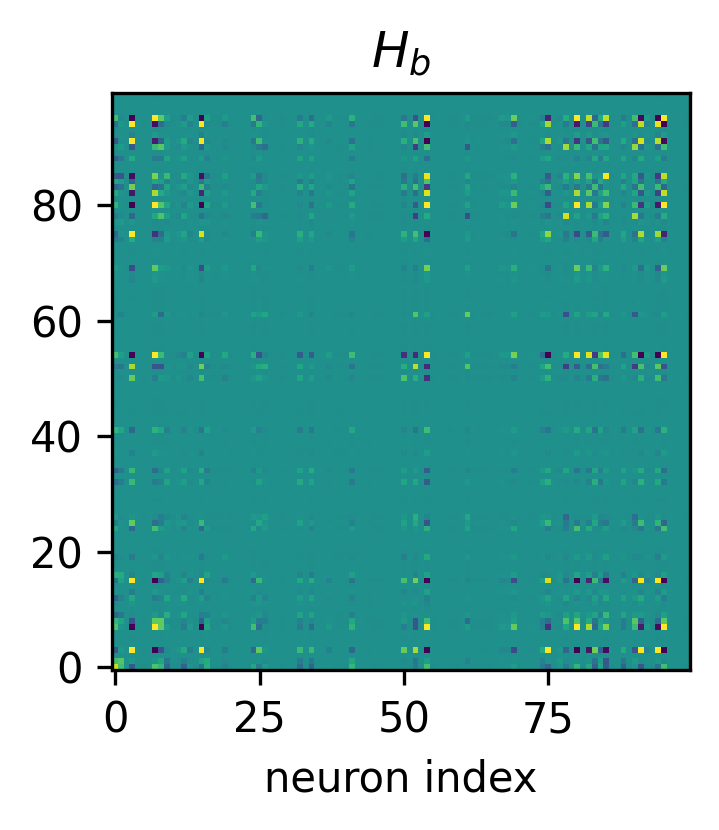}
    \includegraphics[width=0.26\linewidth]{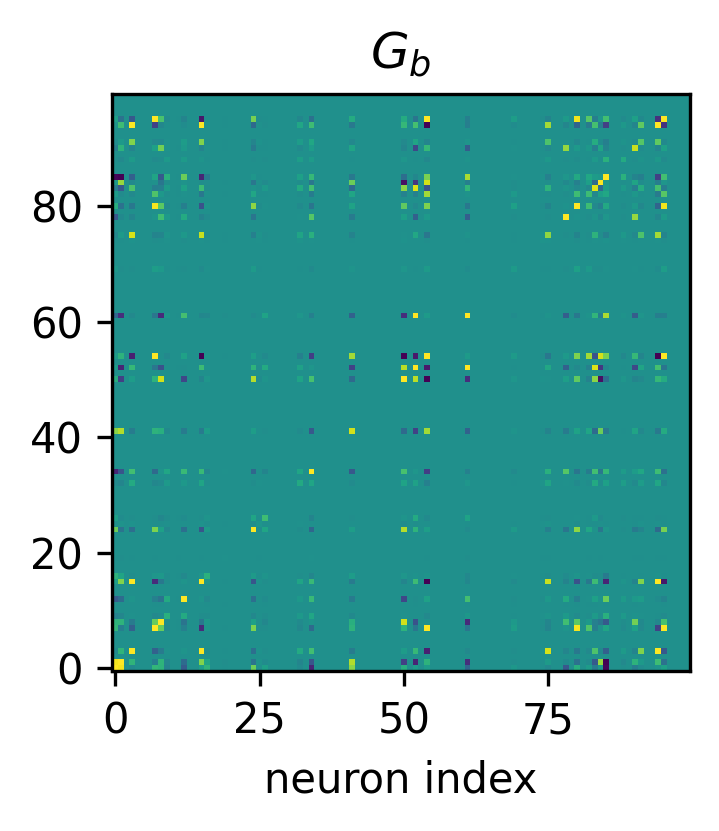}
    \vspace{-1em}
    \caption{\small Penultimate layer of the conjugate matrices ($H,G,Z$) after training (\textbf{fc2}). This is an example of CRH being well satisfied, where all three matrices are well aligned after training.}
    \label{fig:fc examples}
    \vspace{-1em}
\end{figure}

\vspace{-2mm}
\section{Experiments}\label{sec: exp}
\vspace{-1mm}
In this section, we present experimental evidence that supports predictions resulting from the CRH.  We also perform experiments to test mechanisms that break the CRH. When the CRH is broken, we put special emphasis on verifying the RGA, as it directly links to the formation of representations and may arguably be the most important relation of the three subsets.

\paragraph{Metric for alignment.} We would like to measure how similar and well-aligned the six matrices in the CRH are. Let $A$ and $B$ be two square matrices, each with $d^2$ real-valued elements. We use the Pearson correlation to measure the alignment between two matrices \citep{herdin2005correlation}:
\begin{equation}
    \alpha(A,B) :=  \frac{1}{K}\left( \frac{1}{d^2}\sum_{ij} A_{ij}  B_{ij} - \frac{1}{d^4}\sum_{ij} A_{ij} \sum_{ij}  B_{ij}\right),
\end{equation}
where $K$ is a normalization factor that ensures $\alpha \in [-1, 1]$. $\alpha(A,B)$ will be referred to as the \textbf{alignment} between the two matrices $A$ and $B$. Note that $\alpha =\pm 1$ if and only if $A =  c_0B$ for some constant $c_0$. Therefore, $\alpha$ can be seen as a quantitative metric for the alignment and if $|\alpha|=1$, the alignment is perfect. Our initial pilot experiments suggest that the alignment effects are the strongest when the $h$ and $g$ are normalized and if the mean of each is subtracted.\footnote{These may be due to the fact that the gradient and activation have rare outliers that tend to disturb the balance. See Section~\ref{app sec: second moment alignment} for the RGA on transformers without subtracting the mean.} Thus, we always normalize $h$ and $g$ and subtract the mean to be consistent in the experiments. This is equivalent to measuring $\cov(\hat{h},\hat{h})$ and $\cov(\hat{g}, \hat{g})$, where $\cov$ denotes covariance and $\hat{a} = a/\|a\|$. As a notational shorthand, we use $\alpha_{ab,cd}$ to denote $\alpha(\cov(a,b), \cov(c,d))$ for the rest of the paper.

\paragraph{Settings.} We experiment with the following settings and name each setting with a unique identifier. \textbf{fc1}: Fully connected neural networks trained on a synthetic dataset that we generated using a two-layer teacher network. This experiment is used for a controlled study of the effect of different hyperparameters. \textbf{fc2}: the same as fc1, except that the output dimension is extended to 100 and the input distribution interpolates between an isotropic and nonisotropic distribution. 
\textbf{res1}: ResNet-18 ($11$M parameters) for the image classification; \textbf{res2}: ResNet-18 self-supervised learning tasks with the CIFAR-10/100 datasets. \textbf{llm}: a six-layer eight-head transformer ($100$M parameters) trained on the OpenWebText (OWT) dataset. The details of training methods are described in Section~\ref{app sec: exp}.

\paragraph{CRH.} We start with the supervised learning setting with ResNet-18 trained on CIFAR-10. We measure the covariances matrices with data points from the test set. Figure~\eqref{sec: crh} shows that very good alignment $\alpha>0.7$ is achieved quite early in the training, and continues to improve during the later stage of training. This case might remind some readers of neural collapse (NC) -- because NC is also most significant in the penultimate layer of large image classifiers. As we will show in the next section, in the interpolation regime of classification tasks, the NC is equivalent to the CRH.

\begin{figure}
    \centering
    \vspace{-1em}
    \includegraphics[width=0.18\linewidth]{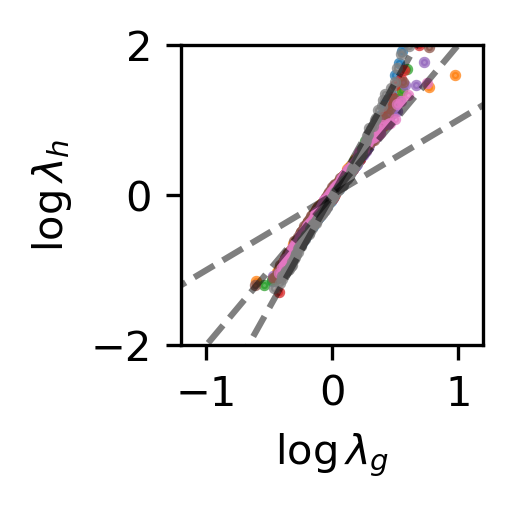}
    \includegraphics[width=0.153\linewidth, trim={6.1mm 0 0 0},clip]{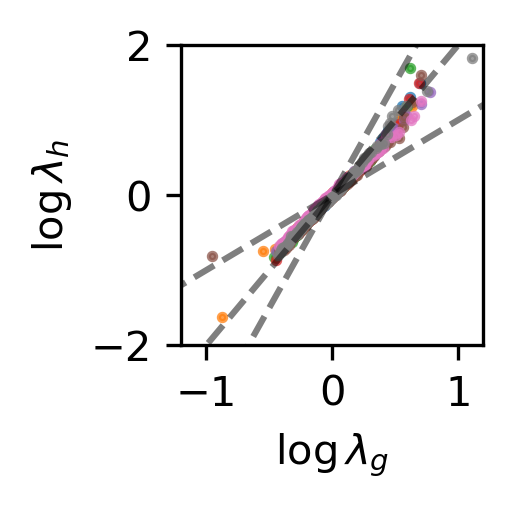}
    \includegraphics[width=0.153\linewidth, trim={6.1mm 0 0 0},clip]{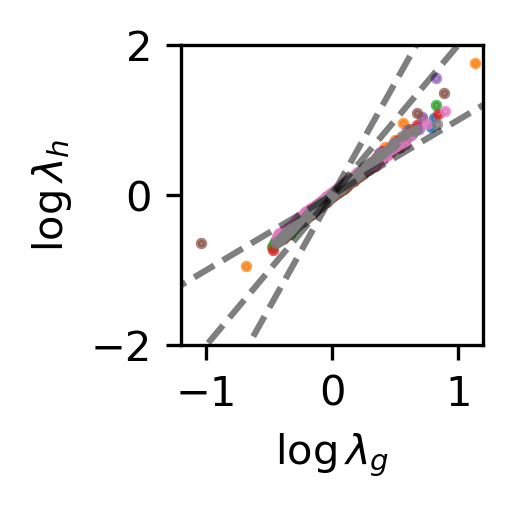}
    \includegraphics[width=0.153\linewidth, trim={6.1mm 0 0 0},clip]{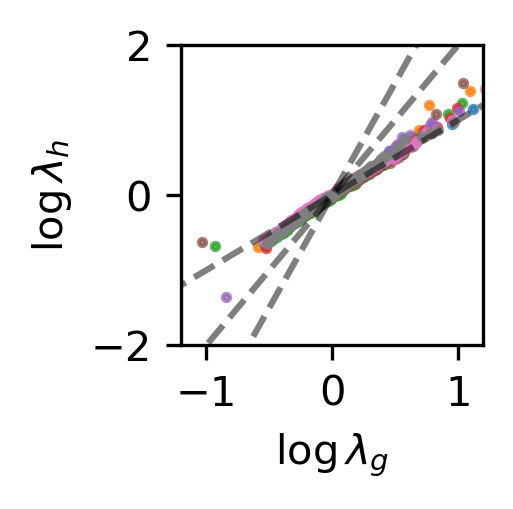}
    \includegraphics[width=0.153\linewidth, trim={6.1mm 0 0 0},clip]{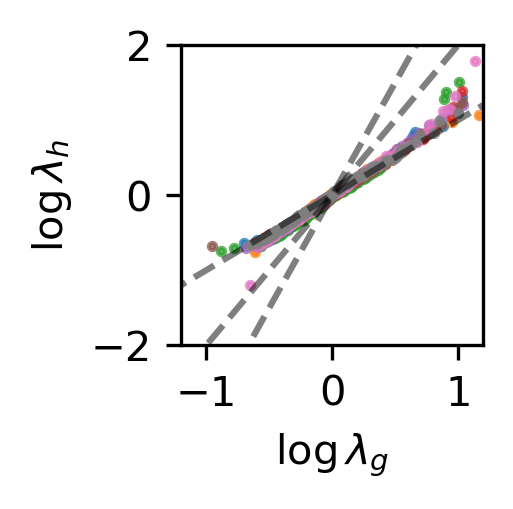}
    \includegraphics[width=0.153\linewidth, trim={6.1mm 0 0 0},clip]{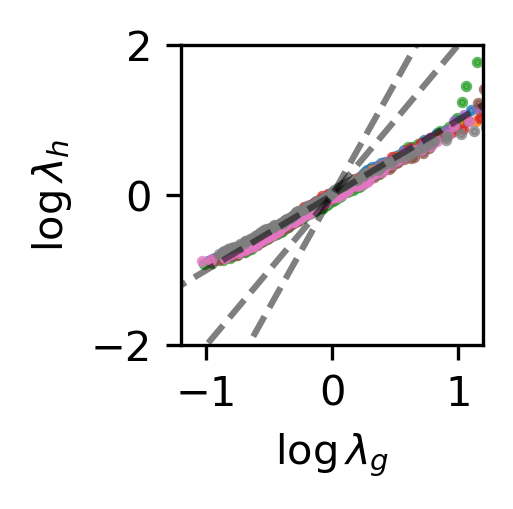}
    \vspace{-1em}
    \caption{\small The power-law alignment between the eigenvalues $\lambda_h$ and $\lambda_g$ of $H_b$ and $G_b$ in a six-hidden layer transformer (\textbf{llm}). Left to Right: first to the penultimate layers. The grey dashed lines show the power-law relations $\lambda_h \propto \lambda_g^\alpha$ for $\alpha =1,\ 2, 3$ respectively. We see that the first layer has an exponent of $3$, the second has an exponent of $2$, and all the layers after it are observed to have an exponent of $1$. Different colors show different heads within the same layer. The range of the power exponents is in almost perfect agreement with the predicted range in Table~\ref{tab:reciprocal relations}. Referring to the table, this implies that these layers are in phases 5, 8, and 6, respectively. The setting is the same as the LLM experiment. Also, see Section~\ref{app sec: fc crh} for fully connected nets.}
    \vspace{-1em}
    \label{fig:power laws}
\end{figure}

\begin{wrapfigure}{r}{0.32\linewidth}
\vspace{-2em}
    \centering
    \includegraphics[width=\linewidth]{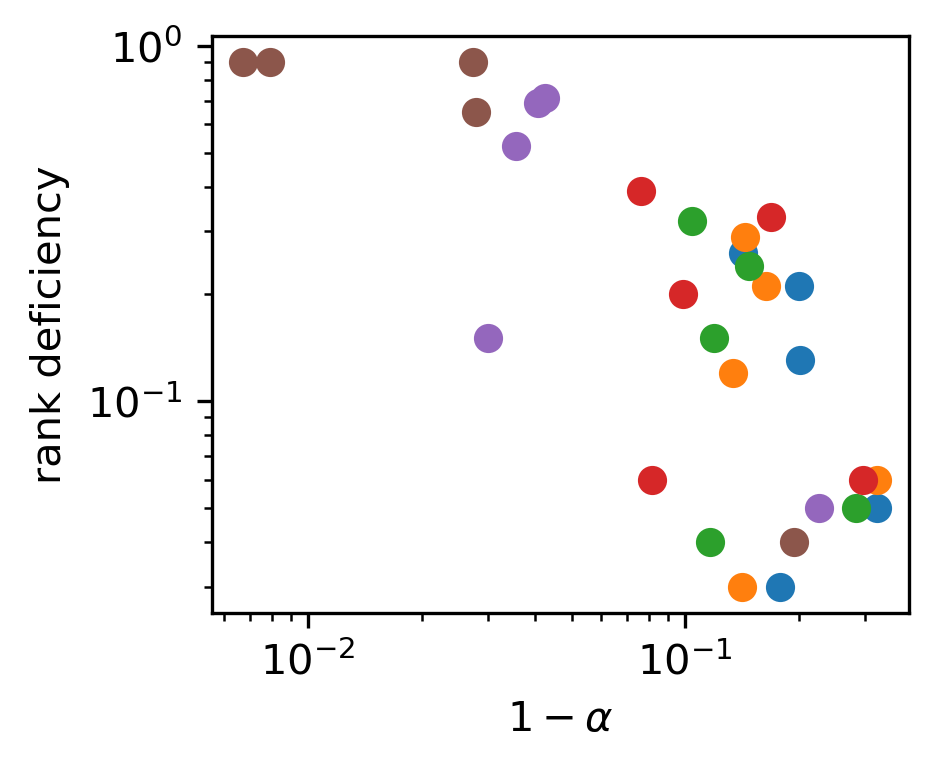}
    \vspace{-2em}
    \caption{\small The rank deficiency and the backward $\alpha_{gg,hh}$ in fully connected nets (\textbf{fc2}). The rank of representation is strongly negatively correlated with $\alpha$. Here, every color is a different weight decay (from $10^{-6}$ to $10^{-4}$), and every point is a different layer in the net. The setting is the same as the fully connected net experiment.}
    \label{fig:rank}
    \vspace{-3.5em}
\end{wrapfigure}
Another example of CRH is provided in Figure~\ref{fig:nfa} below, in case of a fully connected network in a regression task. The result shows that when the weight decay is not too small, the CRH is quite close to being perfectly satisfied for intermediate layers. Examples of the representation and the dual matrices are presented in Figure~\ref{fig:fc examples} for a layer that (almost) satisfies the CRH.

\paragraph{Breaking of CRH.} A clear evidence for the correctness of the theory is that at a small weight decay, the strongest alignments are the forward-RWA and the backward-RGA, which will be presented in the experiment in Figure~\ref{fig:nfa} after we discuss the relationship of the CRH to the NFA.

Two indirect evidences are that (1) the compactness of the representation is found to be negatively correlated with the alignment level (Figure~\ref{fig:rank}), and (2) the observed positive exponents between the eigenvalues of the matrices are almost aways within the range $[1/3,3]$, which is the predicted range by Theorem~\ref{theo: reciprocal} (Figure~\ref{fig:power laws}).

\paragraph{RGA.} A relation of particular interest to the formation of representations is the RGA, which predicts that the represenations are aligned with the gradients across them. Now, we show that the RGA holds well across a broad range of tasks in common training settings. When the CRH holds, the RGA holds as well, and so the experiments in the CRH section already shows that the RGA holds for the last layers of ResNet.

\textit{Large Language Models} (\textbf{llm}). We measure the covariances of the output of each attention head in every layer. 
See Figure~\ref{fig:gpt alpha} for the evolution of the alignment and Figure \ref{fig:gpt rep} for examples of the representation. Three baselines of comparisons are (1) the alignment between the covariances of a rank-$40$ (roughly equivalent to the actual ranks of $\cov(h,h)$ and $\cov(g,g)$) random projection of two $200$ dimensional isotropic Gaussian, which stays close to $0.14$, (2) the alignment between the feature covariance of different heads, which starts high but drops to a value significantly lower than $\alpha_{gg,hh}$, (3) the alignment between the initial feature and current feature for the same head, which starts from $1$ and also drops quickly. The RGA is found to hold stronger than the other baselines.

\begin{figure}[t!]
\vspace{-1em}
    \centering
    \includegraphics[width=0.232\linewidth]{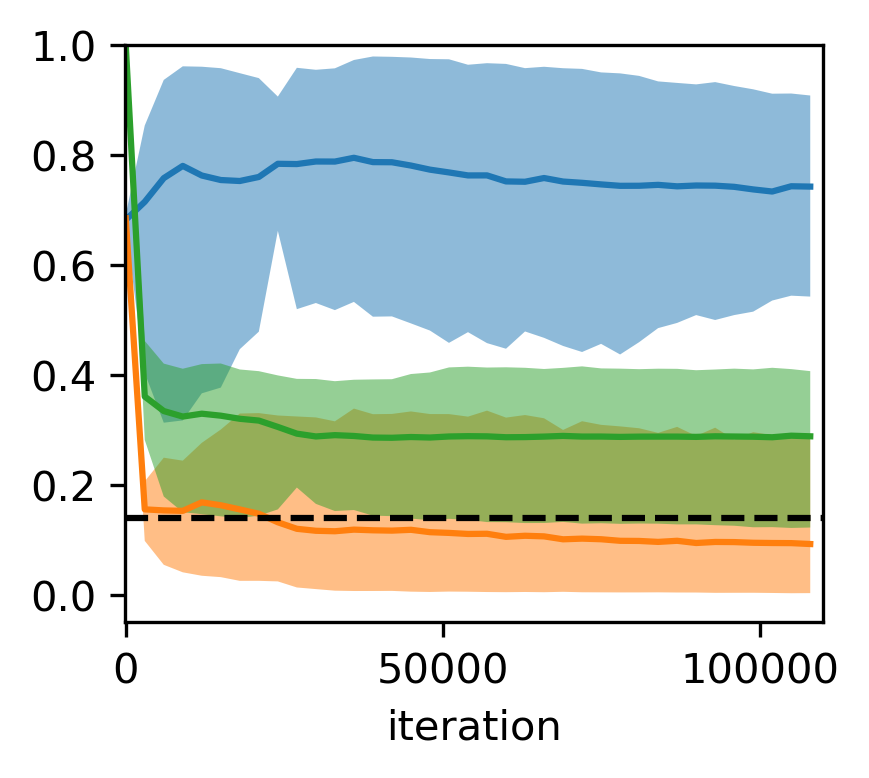}
    \includegraphics[width=0.232\linewidth]{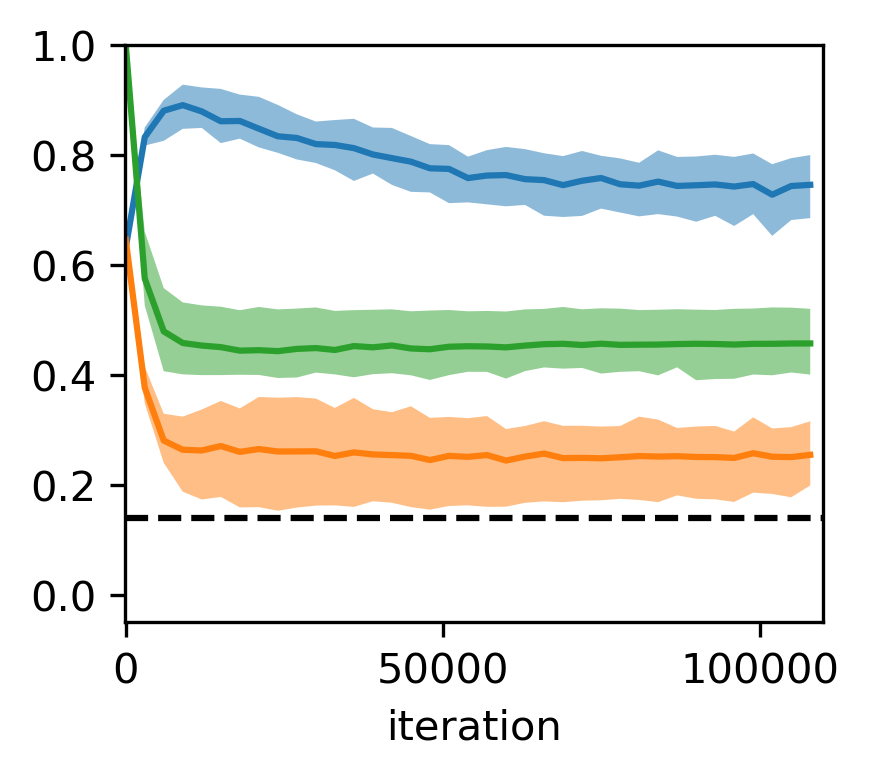}
    \includegraphics[width=0.232\linewidth]{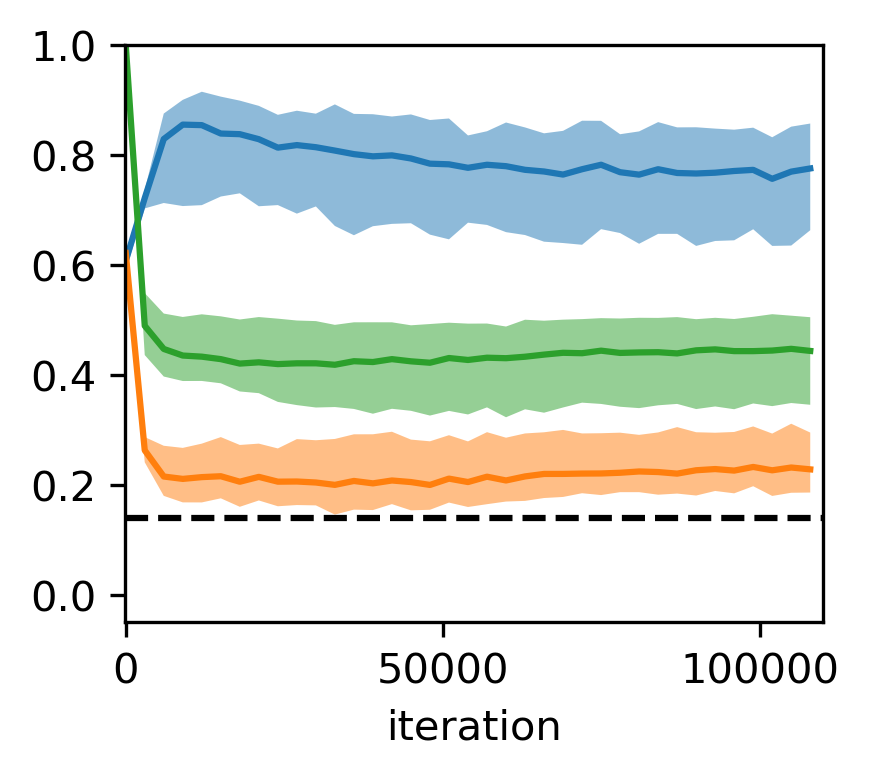}
    \includegraphics[width=0.263\linewidth]{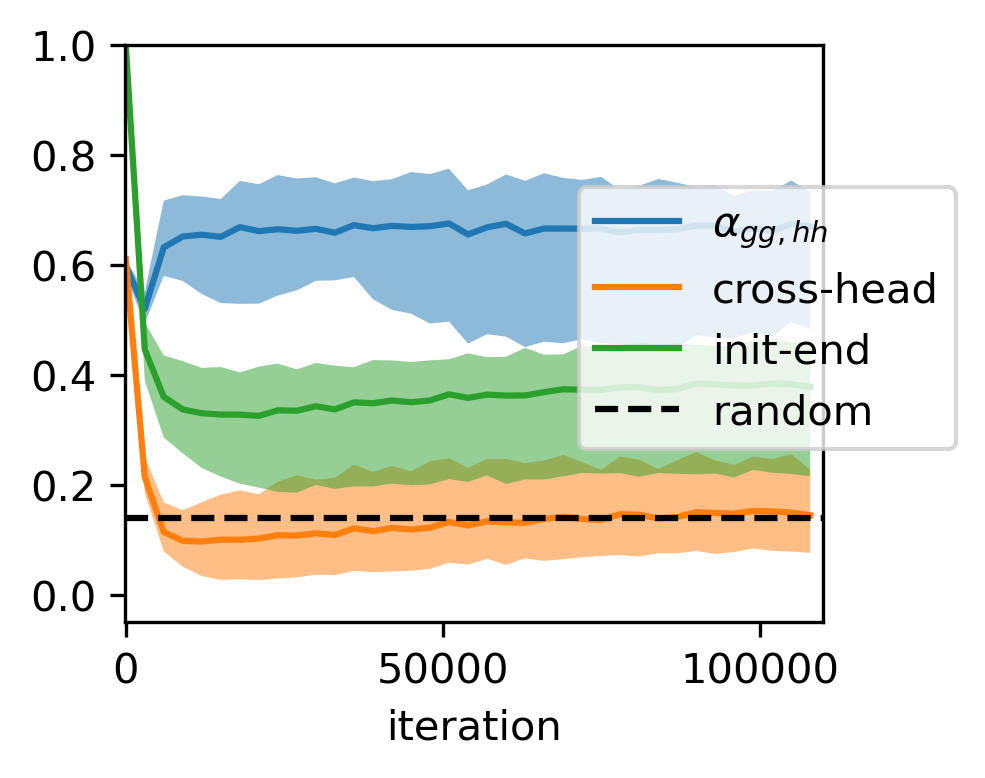}

    \vspace{-1em}
    \caption{\small Alignment between $\cov(h,h)$ and $\cov(g,g)$ in a six-layer transformer trained on the OWT dataset (\textbf{llm}). From \textbf{left} to \textbf{right}: layer 1, 2, 4, 5. Also, see layer 3 in Figure~\ref{fig: intro gpt}. The shaded region shows the variation (min and max) across eight different heads in the same layer. The RGA is significnatly stronger than the alignment between initial and final representation, and the alignment between different heads.}
    \label{fig:gpt alpha}
    \vspace{-1em}
\end{figure}

\begin{wrapfigure}{r}{0.33\linewidth}
\vspace{-2em}
    \includegraphics[width=\linewidth]{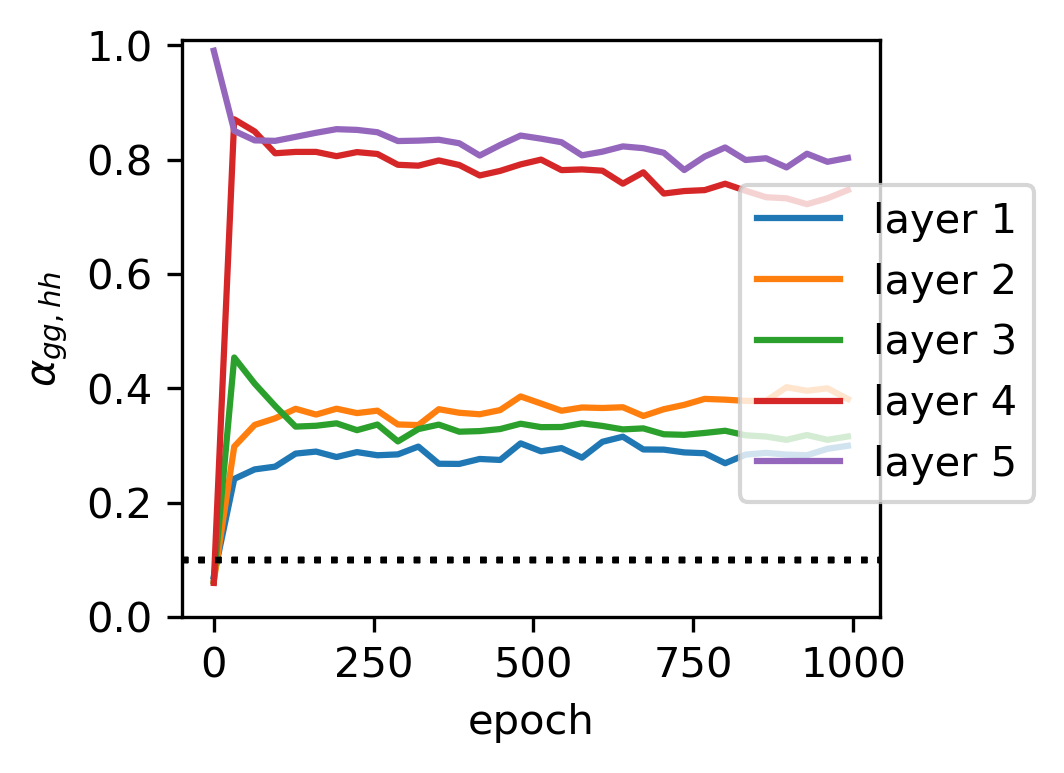}
    \vspace{-2em}
    \caption{\small The alignment for different layers during 1000 training epochs (\textbf{res2}). Layers 1-3 are convolutional layers, and layers 4-5 are fully connected ones.}
    \label{fig: simclr total}
    \vspace{-1em}
\end{wrapfigure}

\textit{Self-Supervised Learning} (\textbf{res2}). Self-supervised learning focuses on learning a good and versatile representation without knowing the labels for the problem. See Figure~\ref{fig: simclr total} for the time evolution of $\alpha$. We see that the RGA holds well for the fully connected layers, but not so strongly for the conv layers. %We see that it still holds true that later layers have a stronger alignment. %Another interesting observation is that the early layers have a slightly better alignment ($\sim 0.3$) than the supervised learning case, whereas the later layers are worse. 
Experiments do show increased $\alpha$ values if we decrease the batch size and increase $\gamma$. Also, see Figure~\ref{fig:resnet simclr cov examples} for examples of the representations in the convolutional layers. We see very good qualitative alignment between the two matrices.

\textit{Fully Connected Nets} (\textbf{fc1}). We also perform a systematic exploration of how hyperparameters of training and model architectures affect the formation of representations. Results presented in Section~\ref{app sec: fc nets} show the following phenomena: (1) \textit{Gradual Alignment}: deeper layers tend to have better alignment; if a layer has an almost perfect alignment ($\alpha\approx 1$), then any layer after it will also have almost perfect alignment (note the similarity of this phenomenon with neural collapse); this is consistent with the tunnel effect discovered in \citep{masarczyk2024tunnel}; (b) \textit{Critical Depth and Alignment Separation}: batch size $B$ affects the alignment significantly; earlier layers have worse alignment as $B$ increases; later layers have better alignment as $B$ increases; wider networks have similar level of alignment as thinner ones for SGD; for Adam, it is more subtle: early layers have worse alignment for a large width, later layers have better alignment for a large width; (c) weight decay $\gamma$ affects the alignment significantly and systematically; larger $\gamma$ always leads to better alignment.

\begin{wrapfigure}{r!}{0.32\linewidth}
\vspace{-5em}
    \centering
    \includegraphics[width=\linewidth]{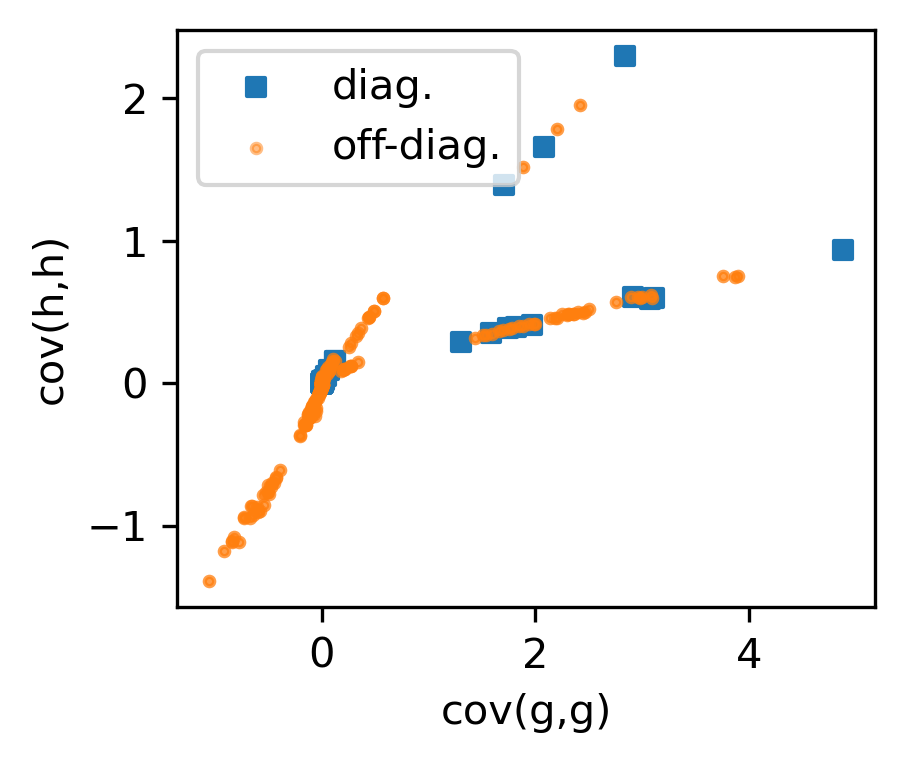}
    \vspace{-2em}
    \caption{\small The response diversity (the diagonal terms of the covariance) and correlation is coupled to the plasticity of neurons (\textbf{fc1}).}
    \label{fig:coupling}
    \vspace{-3em}
\end{wrapfigure}

\vspace{-3mm}
\section{Insights}\label{sec: insight}
\vspace{-2mm}

This section first studies the implication of the CRH (mainly the RGA) for the representation of neural networks (\ref{sec: insight}.1-2). We then clarify the relation of the CRH to NC and NFA (\ref{sec: insight}.3-4).

\vspace{-2mm}
\subsection{Plasticity-Expressivity Coupling}
\vspace{-1mm}

A direct interpretation of the RGA is that the plasticity of neurons in neural networks is strongly coupled to their expressivity \textit{after training}:
\begin{equation}
    \underbrace{\E[h_bh_b^\top]}_{\text{expressivity}} \propto \underbrace{\E[g_bg_b^\top]}_{\text{plasticity}}.
\end{equation}
The first term directly measures the variability of the neuron response to different inputs, thus measuring how expressive or capable this neuron is. The second term measures the degree of plasticity because the gradient for the weight matrix prior to $h_b$ is proportional to $g_b h_a^\top$, and if $g_b$ is zero with zero variance, the input weights to this neuron will never be updated. It is, therefore, a measure of plasticity. This coupling is quite unexpected because the expressivity should be independent of plasticity at initialization. See Figure~\ref{fig:coupling} for a scattering plot of how the elements of $\E[hh^\top]$ are strongly aligned with that of $\E[gg^\top]$. While this experiment shows a strong alignment between $H$ and $G$, it also shows a limitation of our theory. If the activation norm $h$ is really independent of its direction $h/\|h\|$, one would only see one branch in the figure, whereas there are three branches. This can be another reason for CRH breaking.

\vspace{-2mm}
\subsection{Feature-Importance Alignment and Invariant Learning}
\vspace{-1mm}
Another direct interpretation of $\E[gg^\top]$ is the saliency of a latent feature. The quantity $\E[gg^\top]$ measures how the loss function changes as a neuron activation is changed by a small degree and is often used on the input layer to measure the importance of a feature \citep{selvaraju2017grad, adadi2018peeking}. This interpretation justifies many heuristics for understanding neural networks: take the principal components of a hidden layer and study its largest eigenvalues. If there is no direct link between the magnitude of the eigenvalues in these latent representations and their importance in affecting the learning (measured by the training loss), these analyses will be unreasonable. 

Let $\hat{n}$ be any unit vector that encodes a feature irrelevant to the task. The following theorem shows that the invariance of the representation to such features is equivalent to the invariance of the model.
\begin{proposition}\label{prop: invariance}
    Let $f(h(x))$ be a model whose hidden states $h$ obey RGA. Let $\hat{n}$ be a vector and $\epsilon$ a scalar. The following statements are equivalent: (1) $\ell(f(h +  \epsilon \hat{n})) = \ell(f(h)) + O(\epsilon^2)$; (2) $\E[hh^\top]\hat{n} = 0$; (if the CRH also holds) $ W \hat{n} = 0$, and $G \hat{n} = 0$.
\end{proposition}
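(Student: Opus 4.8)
The plan is to collapse all three clauses onto one statement — that $\hat n$ lies in the common kernel of $\E[hh^\top]$ and $\E[gg^\top]$ — by handling (1) with a first-order Taylor expansion and moving between the three matrices via the RGA and (for the last clause) the RWA. Throughout I write $g = -\nabla_h \ell(f(h))$ as in the setup, and I assume $\ell\circ f$ is twice differentiable in $h$ and that the relevant second moments are finite, consistent with the standing assumptions of Section~\ref{sec: theory}. Expanding the loss along $\hat n$, $\ell(f(h+\epsilon\hat n)) = \ell(f(h)) + \epsilon\,\hat n^\top\nabla_h\ell(f(h)) + \tfrac{1}{2}\epsilon^2\,\hat n^\top\nabla_h^2\ell\,\hat n + O(\epsilon^3)$, so statement (1) is exactly the vanishing of the linear coefficient, i.e. $\hat n^\top g = 0$; read along the data distribution, this means $\hat n^\top g(x) = 0$ for $\E$-almost every $x$.

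Next I would pass from this directional-derivative condition to a statement about the second moment. Since $G := \E[gg^\top]$ is positive semidefinite and finite, $\hat n^\top g = 0$ a.s. is equivalent to $\E[(\hat n^\top g)^2] = \hat n^\top G\hat n = 0$, which for a PSD matrix is equivalent to $G\hat n = 0$ (using $v^\top A v = 0 \Leftrightarrow Av = 0$ for PSD $A$). Hence (1) $\Leftrightarrow G\hat n = 0$. Now RGA gives $H := \E[hh^\top] = \kappa G$ with a constant $\kappa > 0$ (the constants produced in Theorem~\ref{theo: fdt} are strictly positive), so $H$ and $G$ share a kernel and $G\hat n = 0 \Leftrightarrow H\hat n = 0$; this is exactly (2), closing the equivalence (1) $\Leftrightarrow$ (2). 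If the CRH also holds, the RWA adds $H \propto W^\top W$ ($= Z_a$ in the paper's notation, appropriate when $h$ is the postactivation feeding $W$), so $H\hat n = 0 \Leftrightarrow W^\top W\hat n = 0 \Leftrightarrow \|W\hat n\|^2 = 0 \Leftrightarrow W\hat n = 0$, while $G\hat n = 0$ is already available; this gives the third clause.

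The main obstacle is not computational but interpretive: statement (1) must be read per-sample — equivalently, as the directional derivative of the loss along $\hat n$ vanishing over the data distribution — rather than as invariance of the \emph{expected} loss. The expectation-level reading would only deliver $\hat n^\top \E[g] = 0$, a first-moment condition that is strictly weaker than $\E[gg^\top]\hat n = 0$ and would not imply (2); so in a careful write-up I would make this reading explicit. Granting that, the remaining ingredients are all elementary: one Taylor expansion, the characterization of a PSD matrix by its quadratic form, and the fact that proportional matrices have the same kernel.
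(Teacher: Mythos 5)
Your proof is correct and follows essentially the same route as the paper's: a first-order Taylor expansion reduces clause (1) to the per-sample condition $\hat n^\top g = 0$, the positive-semidefiniteness of $\E[gg^\top]$ converts that to $G\hat n = 0$ (and back), and the proportionalities supplied by the RGA/CRH transfer the kernel condition to $\E[hh^\top]$ and $W$. The only difference is presentational: you make explicit the per-sample reading of (1) and the quadratic-form characterization of a PSD kernel, both of which the paper uses implicitly.
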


In some sense, Proposition~\ref{prop: invariance} can be seen as a generalization of NC (next section) because it applies to both regression and classification tasks. Note that there are two ways for the condition $\ell(f(h + \epsilon \hat{n})) = \ell(f(h))$ to be satisfied: (1) $f(h +  \epsilon \hat{n} ) = f(h)$, which means that when the model output itself invariant to such a variation, the latent representation in this direction will vanish; (2) $\ell(f(h) + \epsilon \hat{n}^T\nabla f(h)) = \ell(f(h))$, which means that any variation that does not change the loss function value will vanish. This fact also means that the model will learn a compact representation: any irrelevant latent space will have zero variation. This is consistent with the often observed matching between the rank of the representation and the inherent dimension of the problem \citep{papyan2018full, ziyin2024symmetry}. A major phenomenon of well-trained CNNs is that the latent representations learn to become essentially invariant to task-irrelevant features \citep{zeiler2014visualizing, selvaraju2017grad}, an observation that matches the high-level features of the human visual cortex \citep{booth1998view}. The CRH thus suggests a reason for these observed invariances.

%\begin{corollary}\label{prop: perturbation}
%    Let $n \in {\rm ker}(\cov(h,h))$. Then, when RGA holds, $\ell(f(h + \epsilon n)) = \ell(f(h)) + O(\|n\|^2)$.
%\end{corollary}
This result also means that the invariances of the models achieved through the CRH are robust: the objective function is invariant to small noises in the irrelevant activations. While artificial neural networks rarely contain such noises in the activations, it is plausibly the case that biological neurons do suffer from environmental noises, and our result suggests a mechanism to achieve robustness against irrelevant noises or perturbation.

\iffalse
\begin{wrapfigure}{r!}{0.3\linewidth}
\vspace{-4em}
    \includegraphics[width=1.0\linewidth]{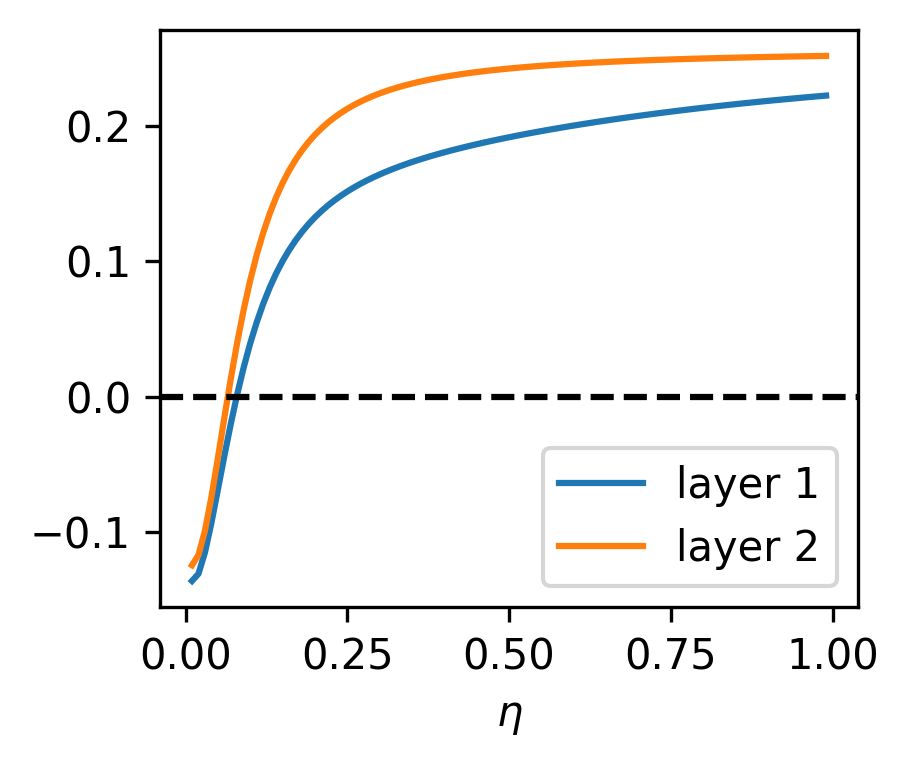}
    \vspace{-2em}
    \caption{\small The alignment between $\E[h h^\top]$ and $\Delta \E[hh^\top]$ after a sudden change of learning rate (\textbf{fc2}). At the end of the training, SGD approximates Hebbian learning at a large learning rate and anti-Hebbian learning at a small learning rate.}
    \vspace{0em}
    \label{fig: hebbian}
\end{wrapfigure}

\vspace{-2mm}
\subsection{SGD Performs Approximate Hebbian Learning}
\vspace{-1mm}
Let $\Sigma = \cov(h,h)$. Its dynamics for one step of training is 
\begin{equation}
    \Delta H \propto \E[h g^\top] + \E[gh^\top] +\eta \E[gg^\top] - 2\gamma H.
\end{equation}
At the end of training, due to alignment, one has that 
\begin{equation}
    \Delta H = (\eta -c_0(\gamma)) H,
\end{equation}
where $c_0(\gamma) >0$. This means that if we suddenly change the learning rate to $\eta'$, the covariance in $H$ will decay for a small $\eta'$ and self-reinforce for a large $\eta'$. Thus, there is a critical learning rate $\eta^* = c_0(\gamma)$ beyond which a positive correlation between neurons tends to strengthen itself. This can be regarded as a special type of Hebbian learning that happens between neurons of the same layer: after a change in learning rate, a small learning rate leads to anti-Hebbian learning, whereas a large learning rate leads to Hebbian learning. See Figure~\ref{fig: hebbian}.
\fi

\vspace{-2mm}
\subsection{Reduction to Neural Collapse in Perfect Classification}
\vspace{-1mm}
We now prove that NC is equivalent to the CRH in an interpolating classifier. We focus on the first two properties here because NC3-4 are essentially consequences of NC1-2 \citep{rangamani2022neural}. NC1 states that the inner-class variations of the penultimate representation vanishes. NC2 states that the average representation $\mu_c$ of the class $c$ is orthogonal to each other: $\mu_c^\top \mu_{c'} = \delta_{cc'}$. The ground truth model must be invariant to inner class variations in a classification task. Let $c$ denote the index of a class, it should be the case that the ground truth model $f'$ satisfies: $f'_c(x_c) = \zeta \mathbf{1}_c$, where $\mathbf{1}_c$ is a one hot vector on the $c$-th dimension and $\zeta >0$ is an arbitrary scalar. Proposition~\ref{prop: invariance} thus suggests that any model that recovers the ground truth must have such invariance in the output and in the latent representation $h$, which is exactly NC1.

Now, we show that when the CRH holds, a perfectly trained model must have neural collapse. Let $x_c$ denote any data point belonging to the $c$-class among a set of $C$ labels.

\begin{theorem}\label{theo: neural collapse}
    Consider a classification task and the penultimate layer postactivation $h_a$. If the model is quasi-interpolating: $f(x_c) =h_b = Wh_a(x_c) = \zeta \mathbf{1}_c$, and the loss covariance is proportional to identity: $\E[\nabla_{f}\ell\nabla_{f}^\top\ell] \propto I$, then $h$ satisfies all four properties of neural collapse (NC1-NC4) if and only if $h$ satisfies the CRH.
\end{theorem}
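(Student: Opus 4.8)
The plan is to reduce the CRH, in this special setting, to two essential conditions and then prove the two implications separately. Write $M$ for the $C\times d$ matrix whose $c$-th row is the class mean $\mu_c^\top = \E[h_a\mid c]^\top$, and note that here the network output is $f=h_b$, so $g_b=-\nabla_{h_b}\ell$, $g_a=-\nabla_{h_a}\ell = W^\top g_b$, and hence $G_a = \E[g_ag_a^\top] = W^\top G_b W$. Under the two hypotheses of the theorem several of the six alignments are automatic: from $G_b\propto I$ we get $G_a\propto W^\top W = Z_a$ (backward GWA); from $h_b(x_c)=\zeta\mathbf{1}_c$ we get $H_b = \zeta^2\,\mathrm{diag}(p_1,\dots,p_C)$ with $p_c$ the class frequencies, so with balanced classes $H_b\propto I\propto G_b$ (forward RGA). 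In the CRH$\Rightarrow$NC direction balance need not be assumed, since forward RGA forces $\mathrm{diag}(p_c)\propto I$; in the converse direction it is part of the standard NC picture. Therefore the full CRH is equivalent, in this setting, to just (i) $WW^\top\propto I$ (i.e.\ forward RWA/GWA) together with (ii) $H_a\propto W^\top W$ (i.e.\ backward RWA/RGA). On the NC side, since NC4 follows from NC1--NC3 \citep{rangamani2022neural}, it suffices to match (i)+(ii) with NC1 (within-class variance of $h_a$ vanishes), NC2 ($\mu_c^\top\mu_{c'}\propto\delta_{cc'}$) and NC3 (self-duality $W\propto M$).

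The direction NC$\Rightarrow$CRH is then a direct substitution. NC1 gives $h_a(x_c)=\mu_c$ almost surely, so $H_a = \tfrac1C\sum_c\mu_c\mu_c^\top = \tfrac1C M^\top M$; NC2 gives $MM^\top\propto I$; NC3 gives $W = c_0 M$. Hence $WW^\top = c_0^2 MM^\top\propto I$, which is (i), and $W^\top W = c_0^2 M^\top M\propto H_a$, which is (ii); together with the automatic relations above, all six CRH alignments hold.

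The substantive direction is CRH$\Rightarrow$NC. From (i), $W$ has full row rank $C$ and $WW^\top = \rho I$ for some $\rho>0$ (full rank is automatic because $\mathrm{Im}\,W$ contains $\zeta\mathbf{1}_1,\dots,\zeta\mathbf{1}_C$). Quasi-interpolation forces $\V[h_b\mid c]=0$, hence $W\,\V[h_a\mid c]\,W^\top = 0$, and since the rows of $W$ are orthogonal this is equivalent to $\V[h_a\mid c]$ being supported on $\ker W$. Now write $H_a = \E_c[\mu_c\mu_c^\top] + \E_c[\V[h_a\mid c]]$ as a sum of two PSD matrices; the second lies in $\ker W$, while by (ii) $H_a\propto W^\top W$ lies in $(\ker W)^\perp$; positivity of the PSD cone then forces $\E_c[\V[h_a\mid c]]=0$, hence $\V[h_a\mid c]=0$ for every $c$ --- this is NC1. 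With NC1, $W\mu_c = \zeta\mathbf{1}_c$ reads $WM^\top = \zeta I$; moreover (ii) forces $\mathrm{row}(M)=\mathrm{row}(W)$, so the columns of $M^\top$ lie in $\mathrm{row}(W)$ while the columns of $M^\top - \zeta W^+$ lie in $\ker W$, whence $M^\top = \zeta W^+ = \zeta\rho^{-1}W^\top$, i.e.\ $M\propto W$ --- this is NC3. Then $MM^\top\propto WW^\top\propto I$ is NC2, and NC1--NC3 yield NC4.

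The step I expect to be the main obstacle is extracting NC1 in the reverse direction: quasi-interpolation alone only kills the within-class variance along $\mathrm{row}(W)$, and upgrading this to the variance vanishing on all of $\mathbb{R}^d$ genuinely requires the backward alignment $H_a\propto W^\top W$ together with the PSD-cone argument above. Secondary care-points are the rank/surjectivity of $W$ and the bookkeeping around class balance (derived from forward RGA in one direction, assumed in the other). Everything else --- the substitutions and the manipulations around $WM^\top = \zeta I$ --- is mechanical.
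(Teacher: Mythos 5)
Your proof is correct and follows essentially the same route as the paper's: NC\,$\Rightarrow$\,CRH by direct substitution of NC1--NC3, and CRH\,$\Rightarrow$\,NC by first killing the within-class variance of $h_a$ using the fact that backward alignment confines $H_a$ to $(\ker W)^\perp$ while quasi-interpolation confines the variance to $\ker W$, then reading NC3 and NC2 off $WM^\top=\zeta I$ together with $WW^\top\propto I$. The only differences are cosmetic but favorable: you make the NC1 step self-contained via the PSD decomposition rather than invoking Proposition~\ref{prop: invariance}, and you are more explicit than the paper about where class balance enters the forward relations.
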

The assumption $\E[\nabla_{f}\ell\nabla_{f}^\top\ell] \propto I$ is empirically found to hold very well in standard classification tasks at the end of training. See Section~\ref{app sec: resnet18} for an experiment with Resnet on CIFAR-10, where the phenomenon of neural collapse is primarily observed.

\vspace{-2mm}
\subsection{Equivariance of CRH and Neural Feature Ansatz}
\vspace{-1mm}

A closely related hypothesis is the NFA, which is related to the feature learning in fully connected layers \citep{radhakrishnan2023mechanismfeaturelearningdeep, beaglehole2023mechanismfeaturelearningconvolutional}. Using our notation, the NFA states $W^\top W \propto \E \left[\nabla_{h_a} f \nabla_{h_a} f^\top \right]$.
To see its connection to the CRH, note that it can be written as
\begin{equation}
     W^\top  W \propto W^\top \E[g_b g_b^\top] W = \E[g_a g_a^\top]   = \E \left[(\nabla_{h_b} f)  B (\nabla_{h_b} f^\top) \right],
\end{equation}
where $B= \nabla_f \ell (\nabla_f\ell)^\top$. %we have assumed that the output covariance $B= \E[\nabla_f \ell (\nabla_f\ell)^\top]$ is independent of $\nabla f$.
Therefore, the NFA is identical to the GWA (Eq.~\eqref{eq: gwa}) when $B \propto I$.

\begin{wrapfigure}{r}{0.38\linewidth}
    \centering
    \vspace{-1em}
    \includegraphics[width=0.48\linewidth]{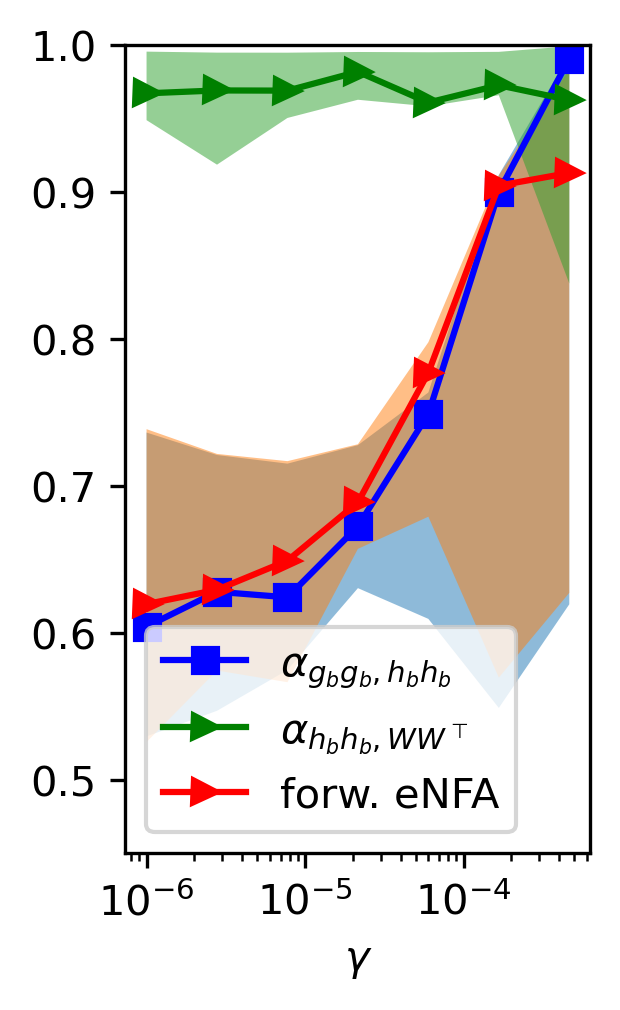}
    \includegraphics[width=0.48\linewidth]{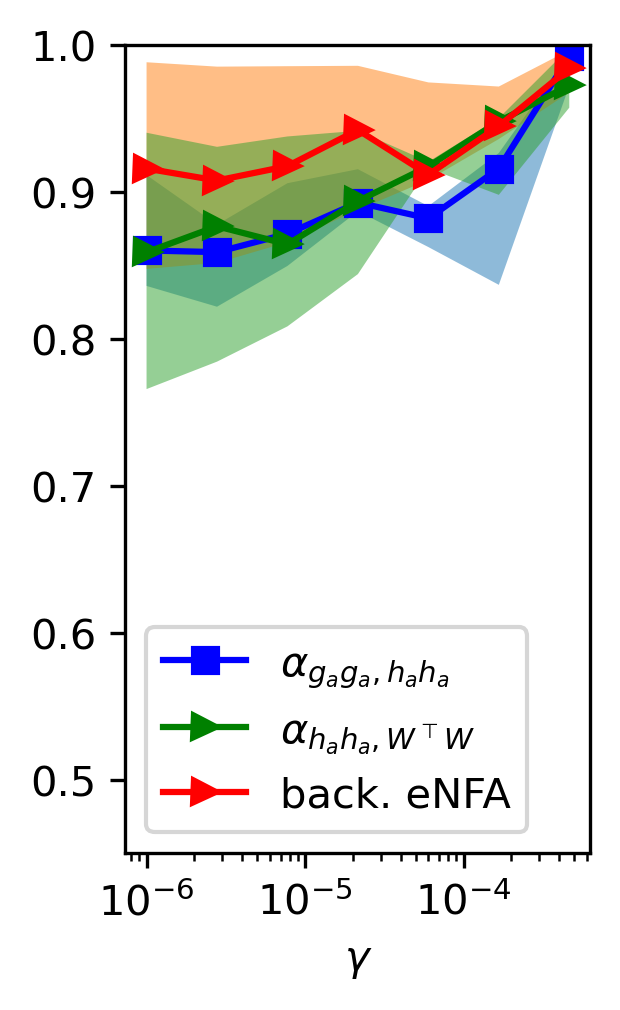}
    \vspace{-1em}
    \caption{\small After the training of a six-layer fully connected network, all six relations (Eq.~\eqref{eq: rga}-\eqref{eq: gwa}) hold strongly at a large weight decay (\textbf{fc2}). For a small weight decay, at least one forward and backward relation holds strongly. The shaded region shows the variation across five hidden layers, and the solid lines show the median of these alignments. At a small $\gamma$, the best alignment is between $H_b$ and $W^\top W$ for the forward relation, and $G_a$ and $WW^\top$ for the backward, in agreement with the theoretical prediction.}
    \vspace{-1em}
    \label{fig:nfa}
\end{wrapfigure}
This is an instance of a consistency problem of the NFA: the NFA is not invariant to trivial redefinitions of the loss function and model. Let $f(x)$ be a model trained on loss function $\ell(f)$. Let us assume that NFA applies to this model. Now, let us construct a trivially redefined model and loss: $f'(x) := Zf(x)$ where $Z$ is an invertible matrix, and $\ell'(f):= \ell(Z^{-1}f)$. Now, for NFA to hold for $f$, we must have $W^\top W \propto \E[\nabla f \nabla^\top f]$, but for NFA to hold for $f'$, we need $W^\top W \propto \E[\nabla f Z Z^\top \nabla^\top f]$. These cannot hold simultaneously. %This is why NFA can only hold for special cases.

In contrast, the GWA is invariant to such redefinitions:
\begin{equation}
    \nabla \ell(f) \nabla \ell(f) = \nabla \ell'(f') \nabla \ell'(f').
\end{equation}
Therefore, the CRH is more likely to be a fundamental law of learning as it is invariant to a subjective choice of basis. For this reason, one may also refer to the GWA as an ``equivariant NFA" (eNFA). Furthermore, if we treat NFA as a special case of the forward alignment, the backward alignment relations imply a novel variant of the NFA, which can be referred to as the \textit{forward NFA}: $W W^\top \propto \E \left[\nabla_{h_b} g \nabla_{h_b} g^\top \right]$. In this picture, the original NFA should thus be called the \textit{backward NFA}. See Figure~\ref{fig:nfa}, which validates both forward and backward eNFA.

%\subsection{Breaking of CRH}

\vspace{-2mm}
\section{Conclusion}
\vspace{-2mm}

In this work, we propose the Canonical Representation Hypothesis (CRH), a new perspective for studying the formation of representations in neural networks. The CRH suggests that representations align with the weights and gradients after training. It is a generalization of the neural collapse phenomenon for any fully connected layer in a neural network. In this view, representations are formed based on the degree and modes of deviation from the CRH. This deviation leads to the Polynomial Alignment Hypothesis (PAH), which posits that when the CRH is broken, distinct phases emerge in which the representations, gradients, and weights become polynomial functions of each other. A key future direction is to understand the conditions that lead to each phase and how these phases affect the behavior and performance of models. The CRH may also have biological implications as it implies that neural networks tend to learn an orthogonalized representation, which has been observed in the biological brain recently \citep{sun2025learning}. The CRH may also have algorithmic implications. If representations align with the gradients (as in RGA), it might be possible to manually inject noise into neuron gradients to engineer specific structures in the model's representations. However, the CRH and PAH have several limitations. They apply only to fully connected layers, and a future step is extending them to other types of layers. Additionally, we have primarily focused on characterizing the final stage of representation formation. A more comprehensive theory of representation dynamics could lead to better training algorithms.

\section*{Acknowledgements}
ILC acknowledges support in part from the Institute for Artificial Intelligence and Fundamental Interactions (IAIFI) through NSF Grant No. PHY-2019786. This work was also supported by the Center for Brains, Minds and Machines (CBMM), funded by NSF STC award  CCF - 1231216.

%\clearpage
%\bibliography{ref}
%\bibliographystyle{iclr2025_conference}
%\bibliographystyle{unsrt}
%\bibliography{ref}

\clearpage
\appendix

\section{More Related Works}\label{app sec: related works}

The neural representation is a central object of study in both AI and neuroscience \citep{esser2020disentangling, wang2018towards}. The widespread use of techniques like t-SNE \citep{van2008visualizing} for analyzing the penultimate layer representation in deep learning highlights the importance of understanding representation space. However, the theory of how the representations are formed in neural networks is scarce. A recent work approached this problem from the angle of symmetries \citep{ziyin2024symmetry}, showing that permutation symmetries in the latent layers lead to neuron merging and low rankness in the representation. \cite{ziyin2024parameter} showed that after training, the latent representation is aligned with a linear transformation of the prediction residual. A recent work showed that the evolution of the representation during the initial stage of training have universal properties shared by different types of models \citep{van2024representations}. {Empirically, \cite{roeder2021linear} and \cite{huh2024platonic} showed that the neural representations learned by different networks are essentially similar, which may offer a partial explanation for why the CRH seems to hold across different architectures.} Also, an emergent field in neuroscience studies how the representations learned by neural networks closely resemble those of biological neurons \citep{rajalingham2018large}. These results suggest the existence of some fundamentally shared mechanisms of the formation of representations. 

{The finding that the neural networks find invariant structures is interesting and should be contrasted with the finding that sometimes the neural networks learns an invertible function \citep{zimmermann2021contrastive, reizinger2024cross}. This may have to do with the fact that both neural collapses have been found to emerge when there is nonnegligible strength of regularization (such as weight decay \citep{rangamani2021dynamics}). This suggests that some form of simplicity bias is required to make neural networks prefer invariant structures, which may be another interesting topic for future research.
}

%Many works also study the early learning dynamics of representation learning. 
%[todo]

\section{Theory and Proofs}\label{app sec: theory}

\subsection{Gradient Moment is Dominated by Gradient Covariance}
We show that when there is a bias term in the layer, the expected neuron gradient must be negligible close to a local minimum:
\begin{equation}
    h_b = W' h_a' = W h_a + \beta,
\end{equation}
At the local minimum, we have
\begin{equation}
   0 =  \E [\Delta b] = - \eta (\E[\nabla_{h_b} \ell] + \gamma b) = \eta (\E[g_b] - \gamma b),
\end{equation}
which implies that 
\begin{equation}
   \gamma b =  \E[g_b].
\end{equation}
This implies that 
\begin{equation}
    \E[g_b]\E[g_b^\top] = O(\gamma^2),
\end{equation}
which is negligible. This agrees with the experimental observation in Section~\ref{app sec: second moment alignment}.

\subsection{Proof of Theorem~\ref{theo: reciprocal}}

We need a few lemmas. The following lemmas are quite general and apply to arbitrary symmetric matrices $A$ and $B$, which is clear from the lemma statement. Eventually, when we apply these lemmas, the specific $A$ and $B$ will be the $H$, $G$, $Z$ matrices we defined in the CRH statement.

\begin{lemma}
    Let $A$, $B$ be symmetric matrices such that $B = A BA$, then, 
    \begin{equation}
        {\rm ker}(A) \subseteq {\rm ker}(B) = I - B^0,
    \end{equation}
    and $B^0AB^0$ is a projection matrix.
    %where $P= BB^+$ is a projection onto the space of $A$.
\end{lemma}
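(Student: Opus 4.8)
The plan is to establish both containments and the projection-matrix claim by standard manipulation of the pseudoinverse and the orthogonal projection $B^0 = BB^+$. Throughout I will use the facts (valid for symmetric $B$) that $B^0$ is the orthogonal projection onto the column space $\mathrm{ran}(B)$, that $I - B^0$ is the orthogonal projection onto $\ker(B)$, that $B B^0 = B^0 B = B$, and that $B^0$ commutes with $B$ and with $B^+$. I would first write out clearly what the displayed equation ``${\rm ker}(B) = I - B^0$'' is meant to assert — namely that the orthogonal projector onto $\ker(B)$ equals $I - B^0$ — so the statement to prove is really the pair: (i) $\ker(A) \subseteq \ker(B)$, and (ii) $B^0 A B^0$ is a projection matrix, with the middle equality being the just-recalled identification of the kernel projector.

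For (i), take any $v \in \ker(A)$, so $Av = 0$. Apply the hypothesis $B = ABA$: then $Bv = ABAv = AB(Av) = AB\cdot 0 = 0$, so $v \in \ker(B)$. This gives $\ker(A)\subseteq\ker(B)$ immediately, with no use of symmetry beyond what is assumed. (Equivalently, passing to orthogonal complements, $\mathrm{ran}(B) = \ker(B)^\perp \subseteq \ker(A)^\perp = \mathrm{ran}(A)$, which is the dual form I may also want to record since it is what later arguments in the paper use.)

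For (ii), the cleanest route is to show $B^0 A B^0$ is idempotent and symmetric. Symmetry is clear: $(B^0 A B^0)^\top = B^0 A^\top B^0 = B^0 A B^0$ since $A$ and $B^0$ are symmetric. For idempotency, I want $B^0 A B^0 A B^0 = B^0 A B^0$; the key is to insert the hypothesis. Note $B^0 B^0 = B^0$, and from $B = ABA$ we get, multiplying by $B^+$ on the left and using $B^+ B = B^0$ (since $B$ symmetric), $B^0 = B^+ A B A$; similarly $B^0 = B A B A^+$ ... — here is where I must be careful, because $A$ need not be invertible and $A$ need not commute with $B^0$. The honest approach: from $B = ABA$, multiply on both sides by $B^0$ to get $B = B^0 B B^0 = B^0 A B A B^0$ (using $B^0 B = B$). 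Now also $B = B B^+ B = B^0 B$, and I can write $B^0 A B A B^0 = B$. Multiplying this identity on left and right by $B^+$: $B^+ B^0 A B A B^0 B^+ = B^+ B B^+ = B^+$. Since $B^+ B^0 = B^+$ and $B^0 B^+ = B^+$, this reads $B^+ A B A B^+ = B^+$. Then $(B^0 A B^0)(B^0 A B^0) = B^0 A B^0 A B^0$, and to collapse this I replace the middle $B^0 = B B^+$ ... this still needs $B^0 A B A B^0 = B$ combined with $B^0 A B^0$; the precise chain is the one routine computation I will carry out in full in the writeup, using repeatedly $B^0 B = B$, $B B^+ = B^0$, and $B = ABA$. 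The main obstacle, and the only subtle point, is exactly this idempotency check: one must resist the temptation to commute $A$ past $B^0$, and instead route everything through the two clean identities $B^0 A B A B^0 = B$ and $B^+ A B A B^+ = B^+$ derived above, from which $B^0 A B^0 A B^0 = B^0$ follows by sandwiching. Once idempotency and symmetry are in hand, $B^0 A B^0$ is an orthogonal projection, completing the proof.
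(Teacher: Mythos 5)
Your first containment $\ker(A)\subseteq\ker(B)$ is correct and is exactly the paper's argument, and your reading of ``$\ker(B)=I-B^0$'' as a statement about the kernel projector is the right one. The problem is the projection claim. The route you sketch for idempotency cannot be completed, for two reasons. First, the identity you say ``follows by sandwiching,'' namely $B^0AB^0AB^0=B^0$, is not the idempotency condition you correctly wrote a few lines earlier ($B^0AB^0AB^0=B^0AB^0$); even if you had it, it would only make $M=B^0AB^0$ a symmetric involution on $\mathrm{ran}(B)$ (a square root of $B^0$, with eigenvalues $\pm1$), not a projection. Second, and more fundamentally, neither identity follows from $B=ABA$ by pseudoinverse algebra alone, because the lemma as literally stated is false for general symmetric matrices: take $A=\mathrm{diag}(2,\tfrac12)$ and $B=\left(\begin{smallmatrix}0&1\\1&0\end{smallmatrix}\right)$. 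Then $ABA=B$, both matrices are symmetric, $B^0=I$, yet $B^0AB^0=A$ is not a projection and $B^0AB^0AB^0=A^2\neq B^0$. So any argument that never invokes positivity of $A$ and $B$ must fail; the statement is really about positive semidefinite matrices (second-moment and Gram matrices in the paper's application), and that hypothesis has to enter somewhere.

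The paper's proof is spectral rather than algebraic. Setting $\tilde A=B^0AB^0$, one has $B=\tilde A B\tilde A$ and $\mathrm{rank}(\tilde A)=\mathrm{rank}(B)$ (since $\mathrm{rank}(B)\leq\mathrm{rank}(\tilde A)\leq\mathrm{rank}(B^0)$), so $\tilde A$ is invertible on $\mathrm{ran}(B)$. In an orthonormal eigenbasis of $\tilde A$ restricted to $\mathrm{ran}(B)$, the relation becomes the entrywise identity $B'_{ij}=a_ia_jB'_{ij}$, where the $a_i$ are the nonzero eigenvalues of $\tilde A$. Taking $i=j$ and using $B'_{ii}>0$ (positive definiteness of $B$ on its range) forces $a_i^2=1$, and $\tilde A\succeq 0$ then selects $a_i=1$, i.e.\ $\tilde A=B^0$ --- a conclusion strictly stronger than idempotency. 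The counterexample above shows exactly which steps break without positivity: $B'_{ii}$ can vanish and the $a_i$ can come in reciprocal pairs. To salvage your write-up you would need to add the PSD hypothesis explicitly and replace the ``sandwiching'' computation with this eigenbasis argument.
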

\begin{proof}
    Let $n$ be in the null space of $A$. We have
    \begin{equation}
        Bn = ABAn = 0.
    \end{equation}
    Thus, $n$ is also in the null space of $B$. This means that the kernel of $A$ is in the kernel of $B$. Now, Let $P = B^0$ and $\tilde{D} := P D P$, we have that 
    \begin{equation}
        B = \tilde{A} B \tilde{A},
    \end{equation}
     where the rank of $A$ is the same as the rank of $B$. This implies that there is an orthonormal matrix $O$ such that 
     \begin{equation}
         B' = OB O^\top,\ A' =O\tilde{A} O^\top
     \end{equation}
     are full rank, $A'$ is diagonal, and 
     \begin{equation}
         B' = A' B' A'.
     \end{equation}
     This implies that
     \begin{equation}
         B'_{ij} = a_i a_j B'_{ij},
     \end{equation}
     where $a_i$ is the $i$-th diagonal term of $A'$. Because $B'$ is full-rank. We have that for all $i$,
     \begin{equation}
         a_i = 1.
     \end{equation}
     This implies that $\tilde{A} = O^\top A' O$ is an orthogonal projection matrix. Because it also has the same kernel as $B$, we have
     \begin{equation}
         \tilde{A} = B^0.
     \end{equation}
\end{proof}

The following two lemmas are straightforward to prove by multiplying $A^{-1}$ from the left and right.\footnote{Recall that $A^{-1}$ is the pseudo inverse, and $A_0 = A A^{-1}$ is an orthogonal projector.}
\begin{lemma}
    Let $A$, $B$ be symmetric matrices such that $A^2 = A BA$, then, 
    \begin{equation}
        A^0 = A^0 B A^0.
    \end{equation}
\end{lemma}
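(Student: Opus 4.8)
The statement to prove is the second auxiliary lemma: if $A$ and $B$ are symmetric with $A^2 = ABA$, then $A^0 = A^0 B A^0$, where $A^0 = A A^+$ is the orthogonal projector onto the column space of $A$. The plan is exactly as the footnote suggests: multiply the hypothesis identity on both the left and the right by the pseudoinverse $A^+$. Concretely, I would start from $A^2 = ABA$ and left-multiply by $A^+$ to get $A^+ A^2 = A^+ A B A$, i.e. $(A^+ A) A = (A^+ A) B A$. Since $A$ is symmetric, $A^+ A = A A^+ = A^0$ and $A^0 A = A$ (the projector acts as the identity on the column space, which contains the range of $A$), so this reads $A = A^0 B A$.

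Next I would right-multiply the resulting identity $A = A^0 B A$ by $A^+$: this gives $A A^+ = A^0 B A A^+$, that is $A^0 = A^0 B A^0$, which is the claim. The only points requiring a word of justification are the standard pseudoinverse facts for symmetric $A$: $A^+$ is symmetric, $A A^+ = A^+ A = A^0$ is the orthogonal projector onto $\operatorname{col}(A) = \operatorname{row}(A)$, and $A^0 A = A A^0 = A$. These follow immediately from the spectral decomposition $A = O \Lambda O^\top$ with $O$ orthogonal and $\Lambda$ diagonal, since then $A^+ = O \Lambda^+ O^\top$ and $\Lambda \Lambda^+ = \Lambda^+ \Lambda$ is the diagonal $0/1$ matrix supported on the nonzero entries of $\Lambda$.

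There is essentially no obstacle here — this is a routine pseudoinverse manipulation, which is why the paper relegates it to a footnote. The only thing to be careful about is not to divide by $A$ "globally" (one cannot cancel $A$ on both sides of $A^2 = ABA$ without projecting), and to make sure that after multiplying by $A^+$ one lands on the \emph{symmetric} projected identity $A^0 = A^0 B A^0$ rather than an asymmetric fragment; doing the left-multiplication and the right-multiplication in two separate steps, as above, makes this automatic. I would present it in three short displayed lines: the left-multiplication, the simplification using $A^0 A = A$, and the right-multiplication yielding the conclusion.
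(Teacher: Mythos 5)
Your proof is correct and follows exactly the route the paper indicates (the paper dispatches this lemma with the one-line remark that it follows ``by multiplying $A^{-1}$ from the left and right,'' where $A^{-1}$ denotes the pseudoinverse). Your two-step multiplication, together with the standard facts $A^+A = AA^+ = A^0$ and $A^0 A = A$ for symmetric $A$, is precisely that argument spelled out.
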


\begin{lemma}
    Let $A$, $B$ be symmetric matrices such that $A = A BA$, then, 
    \begin{equation}
        A^{-1} = A^0 B A^0.
    \end{equation}
\end{lemma}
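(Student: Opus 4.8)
The plan is to follow the footnoted hint literally and sandwich the hypothesis between two copies of the pseudoinverse $A^{-1} = A^+$. The argument rests on just two facts about the Moore--Penrose pseudoinverse. The first is the general identity $A^+ A A^+ = A^+$, valid for any matrix. The second is that for a \emph{symmetric} $A$ one has $A^+ A = A A^+ = A^0$, the orthogonal projector onto the column space of $A$; this is precisely the definition of $A^0$ recalled in the footnote, combined with the fact that a symmetric matrix has identical column and row spaces, so its left and right projectors coincide and are symmetric.

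The single computational step is then to left- and right-multiply the hypothesis $A = ABA$ by $A^+$, obtaining $A^+ A A^+ = A^+ (ABA) A^+$. On the left, $A^+ A A^+$ collapses to $A^+$ by the first pseudoinverse identity. On the right, regrouping gives $(A^+ A)\, B\, (A A^+)$, and each outer factor equals $A^0$ by the second fact, so the right-hand side is $A^0 B A^0$. Equating the two sides yields $A^+ = A^0 B A^0$, that is, $A^{-1} = A^0 B A^0$, which is exactly the claim.

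As a consistency check I would note that both sides are symmetric: $A^+$ is symmetric because $A$ is, and $A^0 B A^0$ is symmetric because $A^0$ is an orthogonal (hence symmetric) projector and $B$ is symmetric. There is no serious obstacle here, as the proof is a one-line manipulation; the only point requiring care is the pseudoinverse bookkeeping. One must not conflate $A^+ A$ with the identity $I$ (when $A$ is singular it is only the projector $A^0$), and one must invoke the correct collapse $A^+ A A^+ = A^+$ rather than the weaker $A A^+ A = A$. Tracking which Moore--Penrose identity applies on which side is the entire content of the argument.
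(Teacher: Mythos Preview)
Your proof is correct and is essentially identical to the paper's own argument, which simply says to multiply $A = ABA$ by $A^{-1}$ (i.e., $A^+$) from the left and the right. Your additional bookkeeping about which Moore--Penrose identity is invoked and the symmetry consistency check are sound but go beyond what the paper records.
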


\begin{lemma}
    If ${\rm ker}(A) = {\rm ker}(B)$, then ${\rm ker}(CAC^\top) = {\rm ker}(CBC^\top)$, for any symmetric $A$ and $B$ and arbitrary matrix $C$.
\end{lemma}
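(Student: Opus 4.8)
The plan is to reduce the kernel of a congruence $CAC^\top$ to the pullback of the kernel of $A$ under $C^\top$, using the fact that in our setting $A$ and $B$ are positive semidefinite (they are second-moment matrices $\E[hh^\top]$, $\E[gg^\top]$, or Gram matrices $MM^\top$, and the same holds for their projections $PDP$). Positive semidefiniteness is the one ingredient that does the work here: the conclusion genuinely fails for indefinite symmetric matrices, so the lemma should be read with $A,B$ positive semidefinite, which is automatic everywhere it is applied.

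First I would record the elementary fact that for a positive semidefinite symmetric matrix $A$ and any vector $v$, one has $v^\top A v = 0$ if and only if $A v = 0$. This follows by writing $A = A^{1/2} A^{1/2}$ with $A^{1/2}$ the symmetric positive semidefinite square root, so that $v^\top A v = \|A^{1/2} v\|^2$, together with the observation ${\rm ker}(A^{1/2}) = {\rm ker}(A)$.

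Next I would apply this with $v = C^\top x$. For any $x$, $C A C^\top x = 0$ clearly implies $x^\top C A C^\top x = 0$; conversely $x^\top C A C^\top x = (C^\top x)^\top A (C^\top x) = 0$ forces $A C^\top x = 0$ by the fact above, and a fortiori $C A C^\top x = 0$. Hence
\begin{equation}
    {\rm ker}(C A C^\top) = \{x : A C^\top x = 0\} = \{x : C^\top x \in {\rm ker}(A)\},
\end{equation}
an expression that depends on $A$ only through its kernel. Since ${\rm ker}(A) = {\rm ker}(B)$ by hypothesis, the identical computation gives ${\rm ker}(C B C^\top) = \{x : C^\top x \in {\rm ker}(B)\} = \{x : C^\top x \in {\rm ker}(A)\} = {\rm ker}(C A C^\top)$, which is the claim.

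There is no substantive obstacle; the only point requiring care is that each step of the chain of equivalences uses $A$ (and $B$) positive semidefinite — without it, $x^\top C A C^\top x = 0$ no longer forces $A C^\top x = 0$ — so one should either state the positive semidefiniteness hypothesis explicitly or note that all matrices to which the lemma is applied ($\E[h h^\top]$, $\E[g g^\top]$, $Z$, and their projections) are of this form.
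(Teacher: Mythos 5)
Your proof is correct and takes essentially the same route as the paper's: both reduce $\ker(CAC^\top)$ to $\{x : C^\top x \in \ker(A)\}$ via the positive semidefinite square root, using $\ker(A^{1/2}) = \ker(A)$. Your version is slightly more careful in two respects that the paper glosses over: you justify the step $x^\top CAC^\top x = 0 \Rightarrow A^{1/2}C^\top x = 0$ via the quadratic form, and you explicitly flag that positive semidefiniteness of $A$ and $B$ is a genuine (and genuinely needed) hypothesis, which the paper's appeal to $\sqrt{A}$ assumes only implicitly.
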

\begin{proof}
    For $n$ to be in the null space of $CAC^\top$, it must satisfy one of the following two conditions:
    \begin{equation}
        C^\top n = 0,
    \end{equation}
    which implies that $n \in {\rm ker}(CBC^\top)$. Or 
    \begin{equation}
        \sqrt{A}C^\top n = 0,
    \end{equation}
    which implies that $C^\top n \in {\rm ker}(\sqrt{A}) = {\rm ker}(A)= {\rm ker}(B)$, which again implies that 
    \begin{equation}
        n \in  {\rm ker}(CBC^\top).
    \end{equation}
    This finishes the proof.
\end{proof}

Now, we are ready to prove Theorem~\ref{theo: reciprocal}.

\begin{proof}
    (\textbf{Part 1}) Fix the direction to be forward or backward. Let $A,B,C$ be a permutation of three moment matrices. Then, that two alignments hold implies that 
    \begin{equation}
        A \propto B,
    \end{equation}
    and $B\propto C$. By transitivity, $A\propto B \propto C$. This proves part 1.

    (\textbf{Part 2}) There are many combinations, which can be divided into a few types. We thus prove a prototype for each type of relation, and the rest can be derived in a similar but tedious manner. The key mechanism is that every forward relation implies a backward relation and vice versa. For example, if we know $\E[h_bh_b^\top] \propto \E[ g_b g_b^\top]$, we also have
    \begin{equation}
        W\E[h_ah_a^\top]W^\top \propto \E[ g_b g_b^\top],
    \end{equation}
    which implies that 
    \begin{equation}
        W^\top W\E[h_ah_a^\top]W^\top W \propto W^\top\E[ g_b g_b^\top] W = \E[ g_a g_a^\top].
    \end{equation}

    Now, let $A$ be either $W^\top W$ or $WW^\top$, $B,C$ be some permutation of the gradient and neuron covariance.

    Type 0: this is the simplest and most straightforward type of relations. First, consider relation 5 in the table. We have, by assumption,
    \begin{equation}
        H_a \propto Z_a,
    \end{equation}
    \begin{equation}
       WH_aW^\top =  H_b \propto G_b.
    \end{equation}
    Together, we have
    \begin{equation}
        WH_aW^\top \propto W Z_a W^\top = Z_b^2.
    \end{equation}
    So, 
    \begin{equation}
        Z_a^3\propto  W^\top G_b W = G_a.
    \end{equation}
    This means that 
    \begin{equation}
        H_a^3 \propto Z_a^3 \propto G_a.
    \end{equation}
    Also, this implies that 
    \begin{equation}
        H_b \propto Z_b^2 \propto G_b.
    \end{equation}
    A similar proof derives relation 6 is thus not shown.

    %Another straightforward relation is relation 8. Here, by assumption, we have
    %\begin{equation}
    %%    W^\top G_b W =  G_a \propto Z_a,
    %\end{equation}
    %\begin{equation}
    %   W H_a W^\top = H_b \propto  Z_b.
    %\end{equation}
    %This implies that
    %\begin{equation}
    %    W G_a W^\top =  Z_b G_b Z_b = Z_b^2
    %\end{equation}

    Type 1: $ABA \propto ACA \propto B \propto C$. Using the above lemmas and defining $\tilde{D} :=  P DP $, where $P = B^0= C^0$, we obtain
    \begin{equation}
        \tilde{A} = B^0 = C^0.
    \end{equation}

    Type 2: $ABA \propto A^2 \propto ACA$. This type of equation simply solves to
    \begin{equation}
        \tilde{B} \propto A^0 \propto \tilde{C},
    \end{equation}
    where the tilde is defined with respect to $A^0$.

    Type 3: $B \propto A \propto A C A$. By the above lemmas, we have 
    \begin{equation}
        A^{-1} \propto \tilde{C}.
    \end{equation}
    We thus have
    \begin{equation}
        B \propto A \propto \tilde{C}^{-1}.
    \end{equation}

    Type 4: $A B A \propto A^2 \propto C^2$. This implies that 
    \begin{equation}
        \tilde{B}  \propto A^0 \propto C^0,
    \end{equation}
    where tilde is defined with respect to $P = A^0$.

    Type 5: This type is a little strange, and so we explicitly solve them. There are two cases:
    \begin{align}
        \E[g_a g_a^\top] \propto Z_a \propto Z_a^2\\
        \E[g_b g_b^\top] \propto Z_b \propto Z_b^2
    \end{align}
    This directly implies that $Z_a$ and $Z_b$ are projection matrices. For $\E[hh^\top]$, we have that by definition
    \begin{equation}
        \E[h_bh_b^\top] \propto W\E[h_ah_a^\top]W^\top.
    \end{equation}
    This means that the kernel of $\E[h_bh_b^\top]$ must contain the kernel of $WW^\top = Z_b$. Therefore, letting $P = \E[h_bh_b^\top]^0 $
    \begin{equation}
        \E[h_bh_b^\top]^0 \propto PZ_bP \propto P\E[g_b g_b^\top] P.
    \end{equation}
    Likewise, we have that 
    \begin{equation}
     W\E[h_ah_a^\top]W^\top  = \E[h_bh_b^\top],
    \end{equation}
    and so 
    \begin{equation}
        {\rm ker}(W\E[h_ah_a^\top]W^\top) = {\rm ker}(\E[h_bh_b^\top]) = {\rm ker Z_b}.
    \end{equation}
    This implies that
    \begin{equation}
        {\rm ker}(Z_a\E[h_ah_a^\top]Z_a)  = {\rm ker Z_a}.
    \end{equation}
    This means that we have proved 
    \begin{equation}
        \tilde{\E[h_a h_a^\top]}^0 = Z^0_a = \E[g_a g_a^\top]. 
    \end{equation}
    A similar derivation applies to the case when 
    \begin{align}
        \E[h_a h_a^\top] \propto Z_a \propto Z_a^2\\
        \E[h_b h_b^\top] \propto Z_b \propto Z_b^2.
    \end{align}

    (\textbf{Part 3}) By the pigeonhole principle, two of either the forward or backward relations must be satisfied. This means that by part 2 of the theorem, there are two cases: (a) three forward relations are satisfied, (b) three backward relations are satisfied. Since the argument is symmetric in forward and backward directions, we focus on case (a).

    Case (a). We have
    \begin{equation}
        \E[h_ah_a^\top] \propto Z_a \propto \E[g_a g_a^\top] =  W^\top\E[g_b g_b^\top] W.
    \end{equation}
    There are now three subcases depending on which of the backward relations are satisfied.

    Case (a1): $Z_b \propto \E[g_b g_b^\top] $, which implies that 
    \begin{equation}
      Z_a \propto \E[g_a g_a^\top] =   W^\top\E[g_b g_b^\top] W  \propto W^\top Z_b W = Z_a^2 ,
    \end{equation}
    and so all the three forward matrices are (scalar multiples of) projections. 

    But $W^\top W$ and $W W^\top$ share eigenvalues and so $Z_b \propto \E[g_b g_b^\top]$ are also projection matrices. Now, 
    \begin{equation}
         \E[h_b h_b^\top]  = W\E[h_a h_a^\top] W^\top \propto W Z_a W^\top = Z_b^2,
    \end{equation}
    which is also a projection matrix. This, in turn, implies that all three backward relations hold. 

    Case (a2): $\E[h_b h_b^\top]\propto \E[g_b g_b^\top]$. This implies that 
    \begin{equation}
       Z_b^2 = WZ_aW^\top \propto W\E[h_a h_a^\top]W^\top =  \E[h_b h_b^\top]\propto \E[g_b g_b^\top].
    \end{equation}
    In turn, this implies that 
    \begin{equation}
        Z_a \propto \E[g_a g_a^\top] = W^\top  \E[g_b g_b^\top] W \propto  W^\top Z_b^2 W = Z_a^3.
    \end{equation}
    This implies that $Z_a$ only contains zero and one as eigenvalues, which implies that $Z_a = Z_a^2 = Z_a^3$ is a projection. Similarly, $Z_b$ is a projection. Together, this implies that all six relations hold.
    
    Case (a3): $Z_b \propto \E[h_b h_b^\top] $. This case is subtly different. Here, 
    \begin{equation}
       \E[h_b h_b^\top] = W \E[h_a h_a^\top] W^\top = W Z_a W^\top = Z_b^2 = Z_b,
    \end{equation}
    and so all three forward matrices are projections. 

    For the backward relations,
    \begin{equation}
         Z_b \propto \E[h_b h_b^\top]  = W\E[h_a h_a^\top] W^\top \propto Z_b^2,
    \end{equation}
    which is also a projection matrix. 
    Now, we have essentially no relation for $\E[g_b g_b^\top]$ as it cannot be directly derived from any other quantity. But noting that 
    \begin{equation}
        W^\top\E[g_b g_b^\top] ^\top = \E[g_a g_a^\top] \propto Z_a,
    \end{equation}
    one obtains that 
    \begin{equation}
        P \E[g_b g_b^\top] P = P,
    \end{equation}
    where $P = Z_b$. Therefore, when restricted to the subspace of $W$, $\E[g_b g_b^\top]$ is also a projection matrix.

    For case (b), it is the same, except for the case when $Z_a \propto \E[g_a g_a^\top]$. When this is the case, the alignment happens with 
    \begin{equation}
        P \E[h_a h_a^\top] P  = P,
    \end{equation}
    for $P = Z_a^0$.

    \textbf{At local minimum}. At any local minimum or first-order stationary point, Lemma~\ref{lemma: local min} applies, which implies that 
    \begin{equation}
        {\rm ker}(Z_c) \subseteq {\rm ker}(\E[h_ch_c^\top]),
    \end{equation}
    \begin{equation}
        {\rm ker}(Z_c) \subseteq {\rm ker}(\E[g_cg_c^\top]),
    \end{equation}
    for $c\in\{a,b\}$. This is because 
    \begin{equation}
        Z_c \propto \E[g_c h_c^\top],
    \end{equation}
    and for any $n \in {\rm ker}(\E[h_c h_c^\top])$, it holds with probability one that
    \begin{equation}
        h_c^\top n =0,
    \end{equation}
    and so 
    \begin{equation}
        Z_c n  = \E[h_c g_c^\top]n=0.
    \end{equation}
    The same argument applies to $g_c$. 

    Now, it suffices to prove the two subtle cases. For case (a3), we have that 
    \begin{equation}
        P \E[g_b g_b^\top ]P =P.
    \end{equation}
    However, the local minimum condition implies that 
    \begin{equation}
        {\rm ker}\E[g_b g_b^\top ] \subseteq {\rm ker}(P).
    \end{equation}
    This can only happen if $P \E[g_b g_b^\top ]P = \E[g_b g_b^\top ] = P$. Therefore, all six relations hold. The same applies to the other subtle case.
    
    (\textbf{Part 4}) We have
    \begin{equation}
        \E[h_ah_a^\top] \propto Z_a \propto \E[g_a g_a^\top] =  W^\top\E[g_b g_b^\top] W,
    \end{equation}
    \begin{equation}
        \E[h_bh_b^\top] = W\E[h_a h_a^\top] W \propto Z_b \propto \E[g_b g_b^\top].
    \end{equation}
    Plugging the forward relation into the backward relation, we obtain that 
    \begin{equation}
        WZ_a W^\top \propto Z_b^2 \propto Z_b,
    \end{equation}
    which implies that $Z_b$ is proportional to a projection matrix. This implies that $\E[h_bh_b^\top]$ and $\E[g_b g_b^\top]$ are also projection matrices.

    Likewise, one can plug the backward relation into the forward, which implies that 
    \begin{equation}
        Z_a^2 \propto Z_a,
    \end{equation}
    and is thus proportional to a projection matrix. This completes the proof.
    \end{proof}

\subsection{Proof of Theorem~\ref{theo: fdt}}\label{app sec: proof of fdt}

The proof will be divided into several theorems, each of which proves a claimed relation in Section~\ref{sec: theory}. The following theorem proves Eq.~\ref{eq: fdt}.

\begin{theorem}
    Assume assumption~\ref{assump: mean field norm}. If $\E[\Delta H_a]=0$ and $\E[\Delta H_b]=0$, then,
    \begin{equation}
         0 =  z_b\E [g_b h_b^\top] + z_b \E [h_b  g_b^\top]  + \eta z_b^2\E[g_bg_b^\top] - 2 \gamma \E[\Delta (h_b h_b^\top)] + O(\eta^2 \gamma ),
    \end{equation}
    where $z_b = \E\|h_a\|^2$. 
\end{theorem}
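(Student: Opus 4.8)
The plan is a direct one-step expansion of $h_b h_b^\top$ under the SGD-with-weight-decay update, followed by collecting terms by their order in $\eta$ and $\gamma$ and then taking the empirical expectation. First I would record the update rule for the layer weights. Since $\ell$ depends on $W$ only through $h_b = W h_a$, we have $\nabla_W \ell = (\nabla_{h_b}\ell)\, h_a^\top = -\,g_b h_a^\top$, so one step of weight-decayed SGD gives $\Delta W = -\eta\big(\nabla_W\ell + \gamma W\big) = \eta\big(g_b h_a^\top - \gamma W\big)$. The key observation is that $h_b h_b^\top = W\,(h_a h_a^\top)\,W^\top$, so over one step there are two sources of change: the change in $W$, and the change in $h_a$ induced by the updates to the parameters of the earlier layers. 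Writing $S := h_a h_a^\top$, I would expand $\Delta(h_b h_b^\top) = \Delta W\, S\, W^\top + W\, S\, \Delta W^\top + \Delta W\, S\, \Delta W^\top + (W+\Delta W)\,\Delta S\,(W+\Delta W)^\top$.

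Next I would substitute $\Delta W = \eta(g_b h_a^\top - \gamma W)$ and use $h_a^\top h_a = \|h_a\|^2$ to simplify. The first two terms collapse to $\eta\big(\|h_a\|^2 g_b h_b^\top + \|h_a\|^2 h_b g_b^\top - 2\gamma\, h_b h_b^\top\big)$; the third term equals $\eta^2\|h_a\|^4 g_b g_b^\top$ together with corrections of size $O(\eta^2\gamma)$ (the two cross terms $\propto \eta^2\gamma$ and the $\propto \eta^2\gamma^2$ term); and the last term collects everything proportional to $\Delta S = \Delta(h_a h_a^\top)$. Applying Assumption~\ref{assump: mean field norm}(A1) I replace the random scalars $\|h_a\|^2$ and $\|h_a\|^4$ by $z_b = \E[\|h_a\|^2]$ and $z_b^2$ respectively. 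Finally I take the empirical expectation $\E[\cdot]$ over the training set: the left-hand side vanishes by the stationarity hypothesis $\E[\Delta(h_b h_b^\top)] = 0$; the $\Delta S$ term vanishes because $\E[\Delta(h_a h_a^\top)] = 0$ (with the surviving cross-terms absorbed into the $O(\eta^2\gamma)$ error); and what remains is exactly $0 = z_b\E[g_b h_b^\top] + z_b\E[h_b g_b^\top] + \eta z_b^2 \E[g_b g_b^\top] - 2\gamma\,\E[h_b h_b^\top] + O(\eta^2\gamma)$, which is the claimed identity, consistent with Eq.~\eqref{eq: fdt} (the $-2\gamma$ term is the regularization contribution $\E[h_b h_b^\top]$).

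Note that only A1 is actually needed here; A2 enters only in the backward direction. The main obstacle is the bookkeeping of the $h_a$-variation term $(W+\Delta W)\,\Delta(h_a h_a^\top)\,(W+\Delta W)^\top$: one must argue that, to the order retained, $W$ and $\Delta W$ may be pulled outside the expectation (they are deterministic given $\theta_t$), so that the term reduces to $W\,\E[\Delta(h_a h_a^\top)]\,W^\top = 0$ plus corrections that are higher order in $\eta$ and hence harmless; this is the step where the online / mean-field setting is used, and where the argument is only required to hold at leading order. The other place A1 is essential is the replacement of $\|h_a\|^2$ by its mean \emph{inside} an expectation such as $\E[\|h_a\|^2 g_b h_b^\top]$, which is precisely the concentration content of A1; everything else is routine algebra in the powers of $\eta$ and $\gamma$.
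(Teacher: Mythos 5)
Your proposal is correct and follows essentially the same route as the paper's proof: expand $\Delta(h_b h_b^\top)=\Delta W\,(h_ah_a^\top)\,W^\top + W\,(h_ah_a^\top)\,\Delta W^\top + \Delta W\,(h_ah_a^\top)\,\Delta W^\top$ plus the $\Delta(h_ah_a^\top)$ correction, substitute $\Delta W=\eta(g_bh_a^\top-\gamma W)$, invoke A1 to replace $\|h_a\|^2$ by $z_b$, and take expectations so the stationarity hypotheses kill the left-hand side and the $\Delta(h_ah_a^\top)$ term. Your handling of the earlier-layer variation term is, if anything, slightly more explicit than the paper's $O(\|\Delta(h_ah_a^\top)\|)$ bookkeeping, and you correctly obtain the intended regularization term $-2\gamma\,\E[h_bh_b^\top]$ (the $\Delta$ appearing inside that term in the printed statement is a typo).
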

\begin{proof}
    Let $B_a= h_ah_a^\top$, and $B_b = h_b h_b^\top$. Let $H_a =\E[B_b]$ and $H_b= \E[B_a]$. We consider an online learning setting. Let $x$ denote the input at time step $t$. Consider the time evolution of the feature covariance during the SGD training:
\begin{align}
    \Delta &(h_b(x) h_b^\top(x)) = \Delta (Wh_ah_a^\top W^\top)\\
    &= \Delta W B_a W^\top + {W} B_a \Delta{W}^\top  + \Delta W B_a \Delta W^\top + O(\Delta B_a)\\
    &= - \eta(\nabla_{h_b} \ell h_a^\top + \gamma W)  B_a W^\top - \eta W B_a (h_a  \nabla_{h_b}^\top \ell + \gamma W^\top)  + \eta^2(\nabla_{h_b}  \ell h_a^\top + \gamma W) B_a (h_a \nabla_{h_b}^\top \ell + \gamma W^\top)\\
    &=  - \eta( - \|h_a\|^2 g_b h_b^\top + \gamma B_b) - \eta ( -\|h_a\|^2 h_b g_b^\top + \gamma B_b)  + \eta^2  \|h_a\|^4 g_bg_b^\top  +  O (\eta^2 \gamma)\\ %+ \gamma \|h_a\|^2 (  h_b g^\top + g h_b^\top) )   
    &= \eta (\|h_a\|^2 g_b h_b^\top  + \|h_a\|^2  h_b g_b^\top - 2 \gamma B_b )  + \eta^2  \|h_a\|^4 g_bg_b^\top.
\end{align}
%At the end of training, the learned representations should reach stationarity and so $ \E [\Delta B ]\approx 0$. 

Taking expectation of both sides, we obtain that 
\begin{equation}%\label{eq: fdt}
    0 =  z_b\E [g_b h_b^\top] + z_b\E [h_b  g_b^\top]  + \eta z^2\E[g_bg_b^\top] - 2 \gamma H_b,
\end{equation}
where $z_b = \E\|h_a\|^2$. 
\end{proof}

Similarly, one can show that the pre-activations follow a similar relationship.

\begin{theorem}
    Assume assumption~\ref{assump: mean field norm}. If $\E[\Delta G_a]=0$ and $\E[\Delta G_b]=0$, then,
    \begin{equation}\label{eq: back fdt}
        z_a(\E[h_a g_a^\top] + \E[g_a h_a^\top]) + \eta z_a^2 \E[h_a h_a^\top]=  2 \gamma \E[g_a g_a^\top],
    \end{equation}
    where we have defined $z_a = \E [\|g_b\|^2]$
\end{theorem}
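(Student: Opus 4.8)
The plan is to mirror the proof of the forward relation exactly, exploiting the formal symmetry between the backward dynamics of $g_a g_a^\top$ and the forward dynamics of $h_b h_b^\top$. The key observation is that under SGD the gradient of a preactivation transforms by the transpose of the weight, $g_a = W^\top g_b$, in the same way that $h_b = W h_a$; so the quantity $g_a g_a^\top = W^\top g_b g_b^\top W$ plays the role that $h_b h_b^\top = W h_a h_a^\top W^\top$ played before, with $W$ and $W^\top$ swapped and with $(h_a, h_b)$ replaced by $(g_b, g_a)$. I would therefore first write $\Delta(g_a g_a^\top) = \Delta(W^\top g_b g_b^\top W)$ and expand it, treating $g_b g_b^\top$ as the (approximately frozen, up to $O(\Delta(g_b g_b^\top))$) inner matrix, just as $h_a h_a^\top$ was treated as frozen in the forward proof.

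The main computational step is to expand $\Delta(W^\top C_b W)$ with $C_b := g_b g_b^\top$ to second order in $\eta$, using the SGD-with-weight-decay update $\Delta W = -\eta(\nabla_{h_b}\ell\, h_a^\top + \gamma W) = \eta(g_b h_a^\top - \gamma W)$, hence $\Delta W^\top = \eta(h_a g_b^\top - \gamma W^\top)$. Substituting, the cross terms $\Delta W^\top C_b W + W^\top C_b \Delta W$ produce, after contracting $h_a$ against $W^\top$ and invoking Assumption~\ref{assump: mean field norm}(A2) $\|g_b\|^2 = \E[\|g_b\|^2] = z_a$ to pull out the scalar norm (this is the analogue of using A1 to replace $\|h_a\|^2$ by $z_b$ in the forward proof), a term $\eta z_a\big(\E[h_a g_a^\top] + \E[g_a h_a^\top]\big)$ together with $-2\eta\gamma\, g_a g_a^\top$. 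The second-order term $\Delta W^\top C_b \Delta W$ contributes $\eta^2 \|g_b\|^4 h_a h_a^\top = \eta^2 z_a^2\, h_a h_a^\top$ as the leading piece, with the remaining pieces of order $\eta^2\gamma$ and hence absorbed into the error. Taking the empirical expectation and imposing stationarity $\E[\Delta(g_a g_a^\top)] = 0$ then yields $0 = z_a\big(\E[h_a g_a^\top] + \E[g_a h_a^\top]\big) - 2\gamma\,\E[g_a g_a^\top] + \eta z_a^2\,\E[h_a h_a^\top] + O(\eta^2\gamma)$, which rearranges to Eq.~\eqref{eq: back fdt}.

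The one place that requires genuine care — and which I expect to be the main obstacle — is justifying that $g_b g_b^\top$ can be treated as approximately constant across the update, i.e.\ that the $O(\Delta(g_b g_b^\top))$ terms dropped in the expansion are legitimately negligible. In the forward proof the frozen matrix was $h_a h_a^\top$, the postactivation of the \emph{previous} layer, and the hypothesis $\E[\Delta(h_a h_a^\top)] = 0$ was invoked; here the frozen matrix is $g_b g_b^\top$, the gradient of the \emph{next} layer, so the corresponding hypothesis to invoke is precisely $\E[\Delta(g_b g_b^\top)] = 0$, which is one of the two assumptions in the theorem statement (the other, $\E[\Delta(g_a g_a^\top)] = 0$, supplies the stationarity condition). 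One must also be slightly careful that $g_b$ depends on the parameters through the backward pass and so $\Delta g_b$ is not literally zero sample-by-sample, but only in the relevant averaged sense; the same caveat was present in the forward derivation, so no new idea is needed, only the bookkeeping observation that the roles of $a$ and $b$, and of $h$ and $g$, are interchanged. Once that dictionary is fixed, the rest is a routine transcription of the already-proven forward theorem.
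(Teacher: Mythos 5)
Your proposal is correct and follows essentially the same route as the paper: expand $\Delta(W^\top g_b g_b^\top W)$ to second order in $\eta$ using $\Delta W = \eta(g_b h_a^\top - \gamma W)$, contract $\|g_b\|^2$ to $z_a$ via Assumption~\ref{assump: mean field norm}(A2), absorb the $O(\|\Delta(g_b g_b^\top)\|)$ remainder using the hypothesis $\E[\Delta(g_b g_b^\top)]=0$, and impose stationarity of $\E[g_a g_a^\top]$. The symmetry dictionary you describe ($W \leftrightarrow W^\top$, $h_a \leftrightarrow g_b$, A1 $\leftrightarrow$ A2) is exactly how the paper transcribes the forward proof into the backward one.
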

\begin{proof} We have
    \begin{align}
    \Delta  (g_a g_a^\top) &= \Delta ( W^\top g_b g_b^\top W )\\
    &=  \Delta W^\top g_b g_b^\top W +  W^\top g_b g_b^\top \Delta W + \Delta W^\top g_b g_b^\top \Delta W + O(\|\Delta g_b g_b^\top\|)\\
    &= \eta (z_a  h_a g_a^\top  -\gamma g_a g_a^\top) + \eta (z_a  h_a g_a^\top  -\gamma g_a g_a^\top)^\top + \eta^2 z_a^2 h_a h_a^\top + O(\eta^2 \gamma ),
\end{align}
where we have used the relations $g_a =  W^\top g_b$ and $\Delta W = \eta ( g_b h_a^\top  - \gamma W)$. We have also defined $z_a = \|g_b\|^2 = \E [\|g_b\|^2]$. 

Taking expectation, this leads to
\begin{equation}
    z_a(\E[h_a g_a^\top] + \E[g_a h_a^\top]) + \eta z_a^2 \E[h_a h_a^\top]=  2 \gamma \E[g_a g_a^\top].
\end{equation}
\end{proof}

%\subsection{Derivation of Eq.~\eqref{eq: cross term solution}}
The following lemma proves the balance condition at a local minimum.

\begin{lemma}\label{lemma: local min}
    (First-order stationary point condition / local minimum balance.) At any stationary point of the loss function, we have
\begin{equation}\label{eq: WW relation 1}
     \E[g_b h_b^\top] =  \E[g_b h_b^\top]^\top = \gamma W W^\top, 
\end{equation}
\begin{equation}
     \E[g_a h_a^\top] =  \E[g_a h_a^\top]^\top = \gamma W^\top W.
\end{equation}
\end{lemma}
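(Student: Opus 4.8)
The plan is to derive the stated identities directly from the first-order optimality condition for the weight matrix $W$ under the weight-decayed objective, and then propagate that condition through the chain rule in both the forward and backward directions. No approximation or appeal to Assumption~\ref{assump: mean field norm} is needed; the statement should hold pointwise at the stationary point.

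First I would write down the stationarity condition. The algorithm minimizes the regularized empirical loss $L(\theta) + \tfrac{\gamma}{2}\|W\|_F^2$, whose gradient flow produces the update $\Delta W = -\eta(\E[\nabla_W \ell] + \gamma W)$ used throughout Section~\ref{app sec: proof of fdt}; hence at a stationary point $\E[\nabla_W \ell] + \gamma W = 0$. Since $h_b = W h_a(x)$ with $h_a$ carrying no dependence on $W$ (it is the postactivation of the previous layer), the per-sample chain rule gives $\nabla_W \ell = (\nabla_{h_b}\ell)\,h_a^\top = -\,g_b h_a^\top$. Substituting and rearranging yields the basic balance
\[
    \E[g_b h_a^\top] = \gamma W .
\]

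Next I would obtain the two claimed relations by multiplying this identity on the appropriate side. Right-multiplying by $W^\top$ and using $W h_a = h_b$ gives $\E[g_b h_b^\top] = \E[g_b h_a^\top]\,W^\top = \gamma W W^\top$; since $WW^\top$ is symmetric, $\E[g_b h_b^\top] = \E[g_b h_b^\top]^\top$, which is the forward claim in Eq.~\eqref{eq: WW relation 1}. For the backward claim, the chain rule $\nabla_{h_a}\ell = W^\top \nabla_{h_b}\ell$ gives $g_a = W^\top g_b$, so left-multiplying the basic balance by $W^\top$ gives $\E[g_a h_a^\top] = W^\top \E[g_b h_a^\top] = \gamma W^\top W$, and symmetry again follows because the right-hand side is symmetric.

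The main obstacle is bookkeeping rather than mathematical difficulty: I must be careful that ``stationary point'' refers to the regularized objective actually optimized (so that the $\gamma W$ term genuinely appears), and that the layer-wise decomposition $f(x) = f(h_b(x))$ of Eq.~\eqref{eq: layer} holds, so that $\ell$ depends on $h_a$ only through $h_b$ and $h_a$ itself is independent of $W$ — this is exactly what makes the two chain-rule identities $\nabla_W \ell = -\,g_b h_a^\top$ and $g_a = W^\top g_b$ \emph{exact} rather than approximate. Given those conventions, the argument is a two-line computation.
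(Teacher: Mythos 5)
Your proposal is correct and matches the paper's own argument: both start from the stationarity condition $\E[\Delta W]=0$, i.e.\ $\E[g_b h_a^\top]=\gamma W$, and then right- and left-multiply by $W^\top$ together with $h_b=Wh_a$ and $g_a=W^\top g_b$ to obtain the two identities, with symmetry inherited from $WW^\top$ and $W^\top W$. Your explicit bookkeeping of why the chain-rule identities are exact is a welcome clarification but does not change the route.
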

\begin{proof}
    Close to a stationary point, we have
\begin{equation}
    0 = \E[\Delta W] = \eta ( \E [g_b h_a^\top] - \gamma W ).
\end{equation}
Multiplying $W^\top$ from the left and the right, we obtain, respectively:
\begin{equation}
    0 =  \E[g_b h_a^\top] W^\top - \gamma W W^\top =  \E[g_b h_b^\top]  - \gamma W W^\top,
\end{equation}
\begin{equation}
    0 =  W^\top \E [g_b h_a^\top] - \gamma W^\top W =  \E [g_a h_a^\top] - \gamma W^\top W ,
\end{equation}
where we have used the definition $h_b = W h_a$ and the chain rule $g_a = W^\top g_b$.

This condition implies that 
\begin{equation}
     \E[g_b h_b^\top] =  \E[g_b h_b^\top]^\top = \gamma W W^\top, 
\end{equation}
\begin{equation}
     \E[g_a h_a^\top] =   \E[g_a h_a^\top]^\top = \gamma W^\top W.
\end{equation}
\end{proof}

Now, we derive the stationarity condition for $\Delta WW^\top$ and $\Delta W^\top W$.

\begin{lemma}
    (Stationary alignment of parameter-outer product.) If at a local minimum and assuming Assumption~\ref{assump: mean field norm}, then,
    \begin{enumerate}
        \item if $\E[\Delta (WW^\top)] = 0$, 
        \begin{equation}
            z_b \E[g_b g_b^\top] = \gamma^2  W W^\top.
        \end{equation}
        \item if $\E[\Delta (W^\top W)] = 0$, 
        \begin{equation}
            z_a \E [h_a h_a^\top]  = \gamma^2 W^\top W.
        \end{equation}
    \end{enumerate}
\end{lemma}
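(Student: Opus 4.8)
The plan is a direct second-order expansion of the SGD update $\Delta W = \eta(g_b h_a^\top - \gamma W)$, followed by use of the first-order stationarity (local minimum) identities from Lemma~\ref{lemma: local min} to cancel the $O(\eta)$ contribution, so that the surviving $O(\eta^2)$ balance is exactly the claimed relation. The key structural fact I will exploit is that at a single online step $W$ is held fixed while $x$ (hence $h_a$, $g_b$) is the random quantity, so $WW^\top$ and $W^\top W$ factor out of the expectation over the minibatch.

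First I would expand $\Delta(WW^\top) = W\Delta W^\top + \Delta W W^\top + \Delta W \Delta W^\top$ exactly. Using $h_b = W h_a$ (so that $h_a^\top W^\top = h_b^\top$), the $O(\eta)$ part is $\eta(g_b h_b^\top + h_b g_b^\top - 2\gamma WW^\top)$ and the $O(\eta^2)$ part is $\eta^2(\|h_a\|^2 g_b g_b^\top - \gamma g_b h_b^\top - \gamma h_b g_b^\top + \gamma^2 WW^\top)$. Taking $\E$ over $x$ and invoking Assumption~\ref{assump: mean field norm}(A1) to replace $\|h_a\|^2$ by $z_b = \E[\|h_a\|^2]$ gives $\E[\Delta(WW^\top)] = \eta(\E[g_b h_b^\top] + \E[h_b g_b^\top] - 2\gamma WW^\top) + \eta^2(z_b\E[g_b g_b^\top] - \gamma\E[g_b h_b^\top] - \gamma\E[h_b g_b^\top] + \gamma^2 WW^\top)$. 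Next I would substitute the local-minimum identity $\E[g_b h_b^\top] = \E[g_b h_b^\top]^\top = \gamma WW^\top$ of Lemma~\ref{lemma: local min}: the $O(\eta)$ bracket vanishes identically, and the $O(\eta^2)$ bracket collapses to $z_b\E[g_b g_b^\top] - \gamma^2 WW^\top$. Imposing the hypothesis $\E[\Delta(WW^\top)] = 0$ and dividing by the positive constant $\eta^2$ yields part 1. Part 2 is the mirror image: expand $\Delta(W^\top W)$, use $g_a = W^\top g_b$ (so $g_b^\top W = g_a^\top$) so the cross terms become $\E[g_a h_a^\top]$ and $\E[h_a g_a^\top]$, apply Assumption~\ref{assump: mean field norm}(A2) to replace $\|g_b\|^2$ by $z_a$, substitute $\E[g_a h_a^\top] = \gamma W^\top W$ from Lemma~\ref{lemma: local min}, and set $\E[\Delta(W^\top W)] = 0$.

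There is no deep obstacle here; the proof is essentially bookkeeping. The two points that require care are (i) tracking which factors are data-dependent versus frozen at the current step, so that $WW^\top$ (respectively $W^\top W$) can be pulled out of $\E$, and (ii) making sure the $O(\eta)$ terms are killed \emph{exactly} by the stationarity condition rather than only to leading order — this is what legitimizes the clean identity with an exact coefficient $\gamma^2$ rather than a $\gamma^2$ plus higher-order error. I would also note in passing that only A1 is needed for part 1 and only A2 for part 2.
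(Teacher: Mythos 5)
Your proposal is correct and follows essentially the same route as the paper: expand $\Delta(WW^\top)$ (resp.\ $\Delta(W^\top W)$) to second order in the update $\Delta W = \eta(g_b h_a^\top - \gamma W)$, use the mean-field norm assumption to pull $\|h_a\|^2$ (resp.\ $\|g_b\|^2$) out as $z_b$ (resp.\ $z_a$), and invoke the local-minimum identity $\E[g_b h_b^\top] = \gamma WW^\top$ (resp.\ $\E[g_a h_a^\top] = \gamma W^\top W$) to collapse the cross terms. The only cosmetic difference is that the paper kills the first-order bracket directly via $\E[\Delta W] = 0$ before expanding, whereas you kill it by substituting the derived cross-term identity; these are equivalent, and your added remarks (that $W$ is frozen within a step so $WW^\top$ factors out of $\E$, and that only A1 is needed for part 1 and only A2 for part 2) are accurate.
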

\begin{proof}
    We have
\begin{align}
    0 = \E [\Delta (WW^\top)] & =  \E[\Delta W] W^\top +  W\E[\Delta W^\top] + \E[\Delta W \Delta W^\top] \\
    &= 0 + \E[\Delta W \Delta W^\top]\\
    &= \E[(g_bh_a^\top -\gamma W) (g_bh_a^\top -\gamma W)^\top]\\
    &= z_b \E[g_b g_b^\top] - \gamma \E[g_b h_b^\top] - \gamma \E[g_b h_b^\top]^\top + \gamma^2 W W^\top\\
    &=  z_b \E[g_b g_b^\top] - \gamma^2  W W^\top%\frac{1}{2} \gamma ( \E[g_b h_b] + \E[g_b h_b]^\top),
\end{align}
where we have used Eq.~\eqref{eq: WW relation 1} in the last line. 

Likewise, when $\E[\Delta W^\top W]=0$,
\begin{align}
    0 = \E [\Delta (W^\top W)] &=\E[(g_bh_a^\top -\gamma W)^\top (g_bh_a^\top -\gamma W)]\\
    &= z_a \E [h_a h_a^\top] - \gamma  \E [h_a g_a^\top] - \gamma  \E [h_a g_a^\top]^\top  + \gamma^2 W^\top W\\
    & = z_a \E [h_a h_a^\top]  - \gamma^2 W^\top W,
\end{align}
where we have defined $z_a = \E[\|g_b\|^2]$.

\end{proof}

Now, we are ready to prove Theorem~\ref{theo: fdt}.
\begin{proof}
    The above two lemmas imply that
\begin{equation}
    \frac{2 z_b}{\gamma} \E[g_b g_b^\top] = \E[g_b h_b^\top] + \E[g_b h_b^\top]^\top.
\end{equation}
This relation can be substituted into Eq.~\eqref{eq: fdt} to obtain 
\begin{equation}\label{app eq: foward cross term alignment}
    z_b^2 (\eta  \gamma + 2 )   \E[g_b g_b^\top]  = 2\gamma^2 H_b = z_b (\eta \gamma + 2) \gamma^2 WW^\top.
\end{equation}

Likewise,
\begin{equation}
    \frac{2z_a}{\gamma} \E [h_a h_a^\top] = \E [h_a g_a^\top] + \E [h_a g_a^\top]^\top. 
\end{equation}

This relation can be plugged into Eq.~\eqref{eq: back fdt} to obtain that 
\begin{equation}\label{app eq: backward cross term alignment}
    z_a^2(\eta \gamma  + 2 ) \E[h_a h_a^\top ] =  2 \gamma^2 \E[g_a g_a^\top] = {z_a(\eta \gamma  + 2 )} \gamma^2 W^\top W.
\end{equation}

This completes the proof.
\end{proof}

\subsection{Proof of Proposition~\ref{prop: invariance}}
\begin{proof}
    We first prove that part (1) implies part (2). By assumption, we have that $\E[hh^\top] \propto \E[gg^\top]$, and so\footnote{Note that the proof still works if we replace the second moments with the covariances.}
    \begin{align}
         E[hh^\top] \hat{n} &\propto  \E[gg^\top]\hat{n} \\
        &= \E[\nabla_h \ell \nabla_h^\top \ell]  \hat{n}  \\
        &= 0,
    \end{align}
    where the last line follows from the fact that 
    \begin{equation}
        \ell(f(h + \epsilon \hat{n})) - \ell(f(h)) = \epsilon \hat{n}^\top  \nabla_h \ell(h)  + O(\epsilon^2) =   O(\epsilon^2),
    \end{equation}
    which is possible only if $\hat{n}^\top  \nabla_h \ell(h) = 0$. 
    One can derive the other two relations simply using the definition of CRH: $\E[hh^\top] \propto \E[gg^\top] \propto Z$.

    For the backward direction, we have that 
    \begin{equation}
         E[hh^\top] \hat{n} =  \E[gg^\top] \hat{n} = 0.
    \end{equation}
    This implies that $n^\top g =0$ with probability $1$. Now,
    \begin{align}
        \ell(f(h + \epsilon \hat{n})) - \ell(f(h))  &=  \epsilon \hat{n}^\top g + O(\epsilon^2)\\
        &= O(\epsilon^2).
    \end{align}
    The proof is complete.
 
\end{proof}

\subsection{Proof of Theorem~\ref{theo: neural collapse}}

As NC4 is a trivial consequence of NC1-3, we focus on NC1-3 here. For notational simplicity, we state neural collapse in the case when there is no bias in the last layer. The first three properties are defined as, at the end of training
\begin{enumerate}
    \item NC1: $h(x_c) =\mu_c$, where $x_c$ is any data point in class $c$;
    \item NC2: $\mu^\top_c \mu_{c'} = \delta_{cc'}$;
    \item NC3: $W^\top W =\sum_c^C \mu_c\mu_c^\top$.
\end{enumerate}

\begin{proof}
    We first prove that NC1-4 implies the CRH. When NC1 holds,
    \begin{equation}
        h_a(x_c) = \mu_c.
    \end{equation}
    This means that
    \begin{equation}
        H_a \propto \sum_c \mu_c \mu_c^\top.
    \end{equation}
    By NC2, we have
    \begin{equation}
        \mu_c^\top \mu_{c'} = \delta_{cc'}.
    \end{equation}
    This means that $H_a$ is proportional to an orthogonal projection.

    By NC3, we have that 
    \begin{equation}
        W^\top W \propto \sum_c \mu_c \mu_c^\top.
    \end{equation}
    Additionally,
    \begin{equation}
        G_a = W^\top G_b W = W^\top W,
    \end{equation}
    where we have used the assumption that $G_b = \E[\nabla_f \ell \nabla_f^\top \ell] = I$.

    Together, this implies the backward CRH
    \begin{equation}
        H_a \propto G_a \propto Z_a.
    \end{equation}
    For the forward CRH, because $WW^\top \in \mathbb{R}^{C\times C}$ has rank $C$ and they have equal eigenvalues, it must be proportional to identity:
    \begin{equation}
        Z_b \propto I.
    \end{equation}
    By the interpolation assumption, we also have
    \begin{equation}
        \E[h_b h_b^\top] \propto \sum_c^C \mathbf{1}_c \mathbf{1}_c^\top \propto I.
    \end{equation}
    Therefore, we have proved the forward CRH. This proves one direction of the theorem.
    
    Now, we prove that the CRH implies NC1-NC3. 
    We first prove NC1. By the (backward) alignment hypothesis for the last layer, we have 
    \begin{equation}
        H_a\propto G_a = W^\top G_b W  = W^\top W,
    \end{equation}
    where $W$ is the weight matrix for the last layer. This means that there exists an orthonormal matrix $U$ such that 
    \begin{equation}
        W \propto U \sqrt{H_a},
    \end{equation}
    and that the rank of $H_a$ must be $C$. By the interpolation hypothesis, it must be the case that 
    \begin{equation}
        Wh_c = \mathbf{1}_c, 
    \end{equation}
    which implies that for a fixed $c$. This implies that
    \begin{equation}
        h_c = z_c + v,
    \end{equation}
    where $z_c$ is a constant vector and $Wv =0$. However, by Proposition~\ref{prop: invariance}, $v$ must vanish, and so $h_c=z_c=\mu_c$. This proves NC1.
    
    %When NC1 holds, 
    %\begin{equation}
    %    Z_a = H_a = \frac{1}{C} \sum_{x_c} h_a(x_c) h_a(x_c)^\top = \frac{1}{C} \sum_{c} \mu_c \mu_c^\top. %- \frac{1}{C^2} \sum_{x_c} h(x_c) \sum_{x_c} h(x_c)^\top = \frac{1}{C} \sum_{c} \mu_c \mu_c - \mu_G \mu_G^\top,
    %\end{equation}
    %where $\mu_G = \frac{1}{C} \sum_{x_c} h(x_c)$. 

    This means that
    \begin{equation}
        Z_a \propto H_a
    \end{equation}
    is essentially a projection matrix. So,
    \begin{equation}
        W h_a(x_c)= U H_a h_a(x_c) = U h_a(x_c) = \mathbf{1}_c.
    \end{equation}
    This implies that in turn, $h_a(x_c) =\mu_c = U_{c:}$. This proves NC3. Due to the orthogonality of $U$, we have that 
    \begin{equation}
        \mu_c^\top \mu_{c'} = U_{c:}^\top U_{c':} = \delta_{cc'}. 
    \end{equation}
    This proves NC2. The proof is complete.
\end{proof}

%\subsection{Approximate Hebbian Learning}

\clearpage
\section{Experiments}\label{app sec: exp}

\subsection{Experimental Details}

\paragraph{fc1:} Fully connected neural networks trained on a synthetic dataset that we generated using a two-layer teacher network. This experiment is used for a controlled study of the effect of different hyperparameters. To control the variables, we consider a synthetic task where the input $x \in\mathbb{R}^{100}$ is sampled from an isotropic Gaussian distribution, and the label generated by a nonlinear function of the form: $y(x) = \sum_{i=1}^{100}  u^*_i\sin((W_i^*)^\top x + b_i^*) \in \mathbb{R}$, where $u^*$, $w_i^*$, and $b_i^*$ are fixed variables drawn from a Gaussian distribution. In this form, the target function can be seen as a two-layer network with sin activation. Our model is a fully connected neural network trained with SGD in an online fashion, and the representations are computed with unseen data points. Unless specified to be the independent variable, the controlled variables of the experiment are: depth of the network ($D=4$), the width of the network ($d=100$), weight decay strength ($\gamma=2\times 10^{-5}$), minibatch size ($B=100$).

\paragraph{fc2:} Here, we choose a high-dimensional setting where the teacher net is given by $y(x) = \sum_{i=1}^{100}  u^*_i\sin((W_i^*)^\top x + b_i^*) \in \mathbb{R}$, where $u_i^* \in \mathbb{R}^{100}$, $w_i^* \in \mathbb{R}^{100}$, and $b_i^* \in \mathbb{R}$ are fixed variables drawn from a Gaussian distribution. In this form, the target function can be seen as a two-layer network with sin activation. The distribution of $x$ is controlled by an independent variable $\phi_x$ such that $x' \sim \mathcal{N}(0, I_{100})$, and $x = (1-\phi_x) Z + \phi_x I_{100}$, where $Z$ is a zero-one matrix generated by a Bernoulli distribution with probability $0.8$. When $\phi_x$ is small, the input features are thus highly correlated to each other, and the covariance matrix deviates far from $I$. The training proceeds with SGD with a learning rate of $0.1$ with momentum $0.9$ and $\gamma =10^{-4}$ for $10^5$ steps when the loss function value has stopped decreasing. The training proceeds with a batch size of $100$. All the expectation and covariance matrices are estimated using $3000$ independently sampled data points. The trained model is a $5$-hidden layer fully connected network with the ReLU activation.

\paragraph{res1:} ResNet-18 ($11$M parameters) for CIFAR-10; we apply the standard data augmentation techniques and train with SGD with a learning rate $0.01$, momentum $0.9$, cosine annealing for 200 epochs, and batch size $128$. The model has four convolutional blocks followed by two fully connected layers with ReLU activations. The model has $11$M parameters and achieves $94\%$ test accuracy after training, in agreement with the standard off-the-shelf ResNet-18 for the dataset.

\paragraph{res2:} ResNet-18 for self-supervised learning tasks with the CIFAR-10/100 datasets. The model is the same as res1, except that the last fc layer output becomes $128$-dimensional, which is known as the projection dimension in SSL. The training follows the default procedure in the original paper \citep{chen2020simple}, proceeding with a batch size of $512$ and $\gamma = 5\times 10^{-5}$ for $1000$ epochs.

\paragraph{llm:} a six-layer transformer ($100$M parameters) trained on the OpenWebText (OWT) dataset \citep{Gokaslan2019OpenWeb}; the number of parameters of this model matches the smallest version of GPT2. The model has six layers with eight heads per layer, having $100$M trainable parameters in total. For each experiment, we train with Adam with a weight decay strength of $1\times 10^{-4}$ for $10^5$ iterations, when the training loss stops changes significantly. Since every representation has three dimensions: data $N$, token $T$, and feature $F$, we treat each token as if they are a separate sample in computing the covariances. Namely, we contract the representation tensor along the data and token dimension, resulting in a $F\times F$ covariance matrix. 

%\subsubsection{Neural Feature Ansatz}

\clearpage
\subsection{Second Moment Alignments}\label{app sec: second moment alignment}
This section shows the results for the alignment of the matrices $\E[hh^\top]$ and $\E[gg^\top]$. See Figure~\ref{fig:llm second moment}. The results are qualitatively similar to the result for the alignment between $\cov(h,h)$ and $\cov(g,g)$, but with a larger variation. The reason for the similarity is that it is often the case that the covariance term dominates the second moments at the end of training. See Figure~\ref{fig:norm ratio}.

\begin{figure}[t!]
    %\centering
    \includegraphics[width=0.31\linewidth]{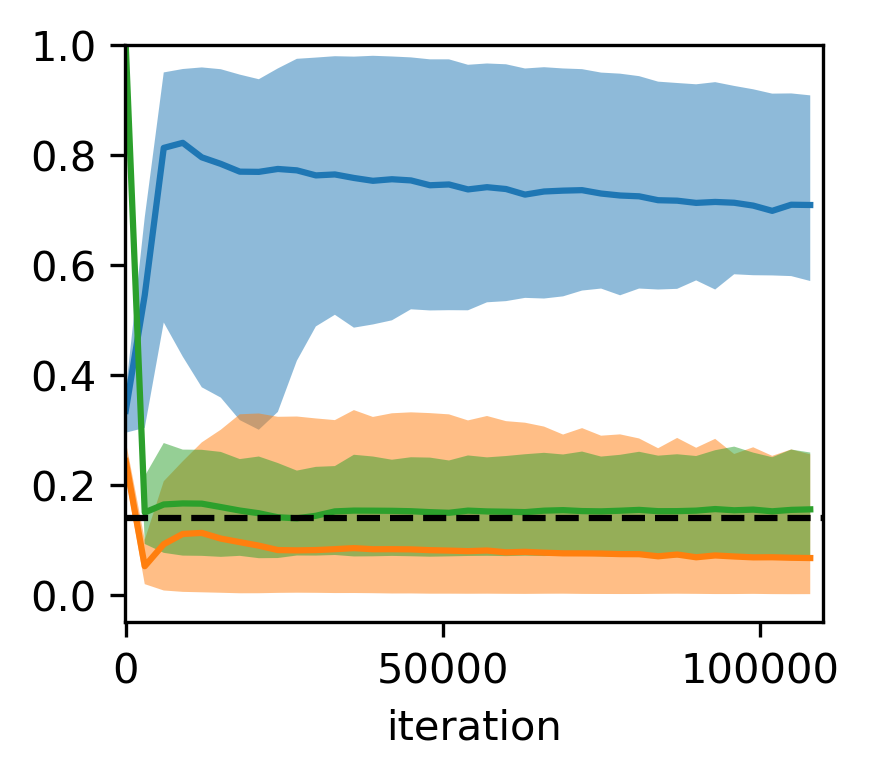}
    \includegraphics[width=0.31\linewidth]{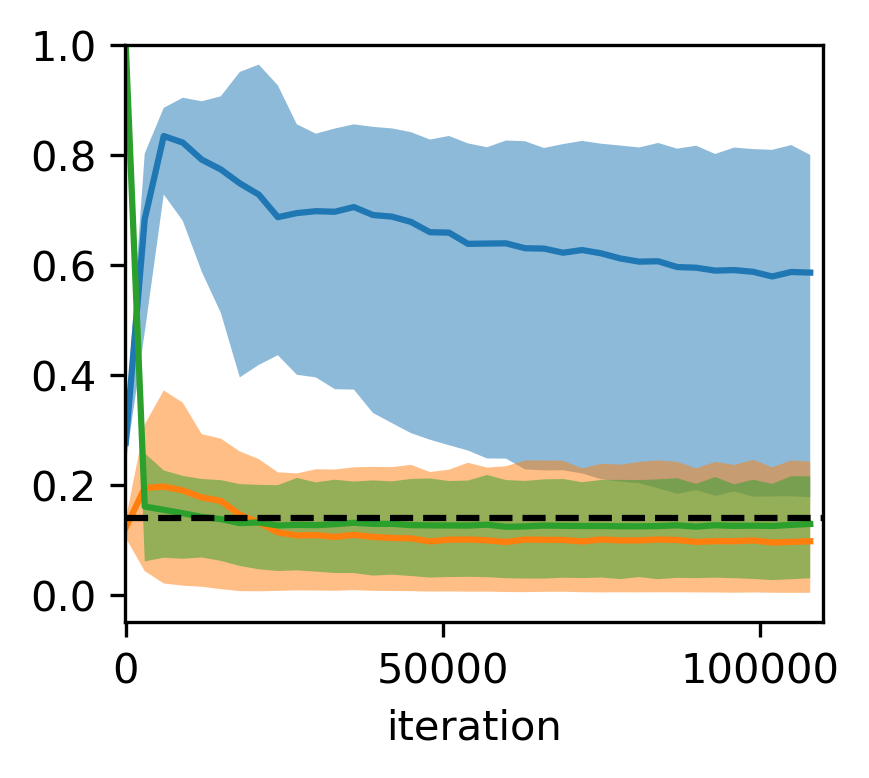}
    \includegraphics[width=0.31\linewidth]{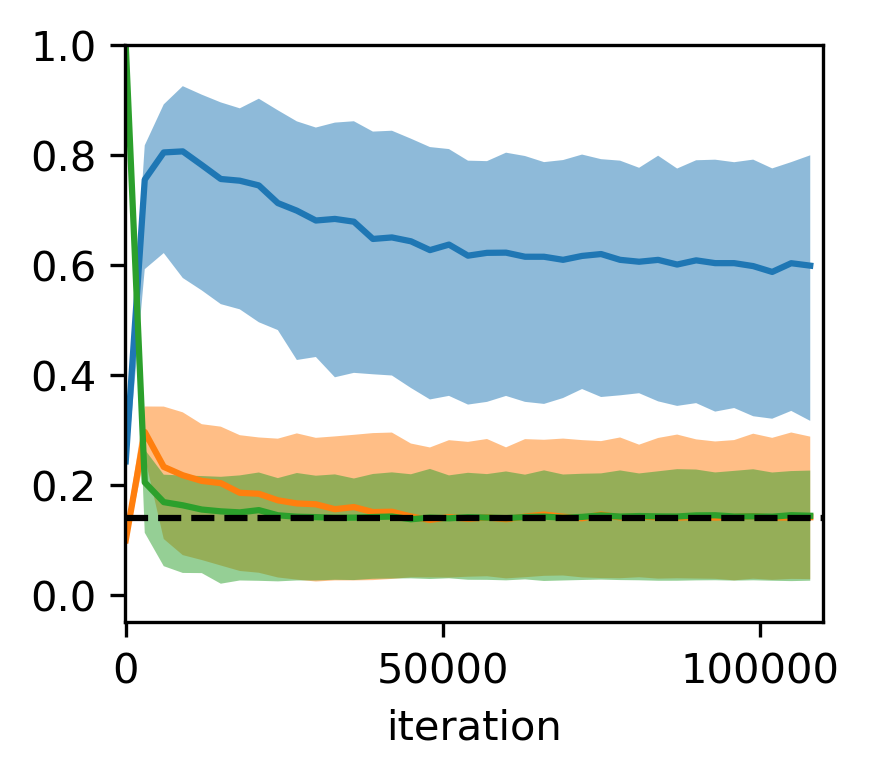}
    
    \includegraphics[width=0.31\linewidth]{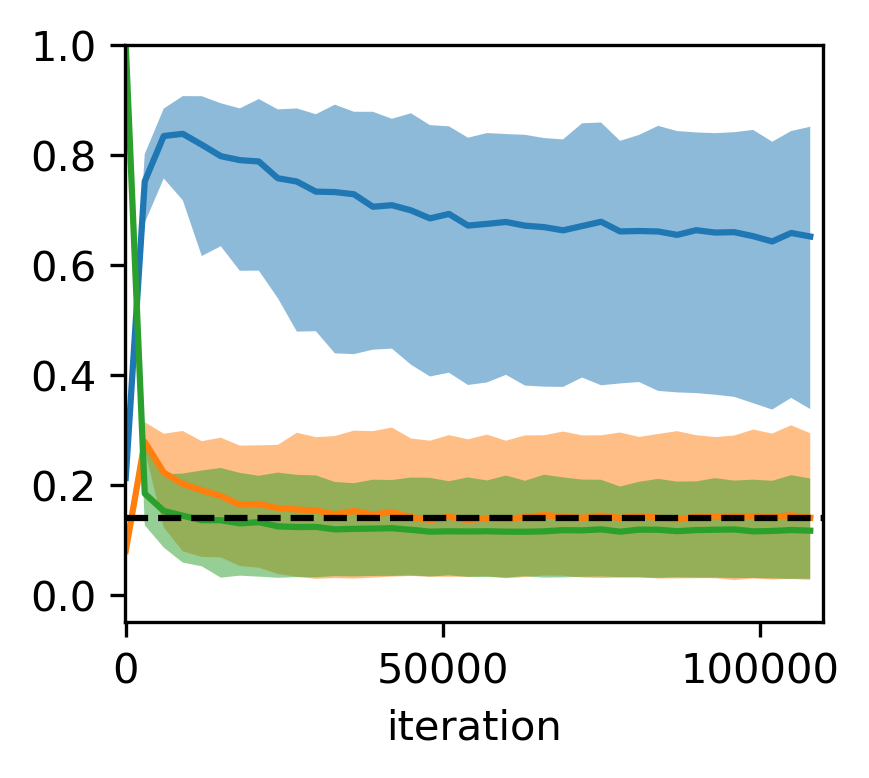}
    \includegraphics[width=0.31\linewidth]{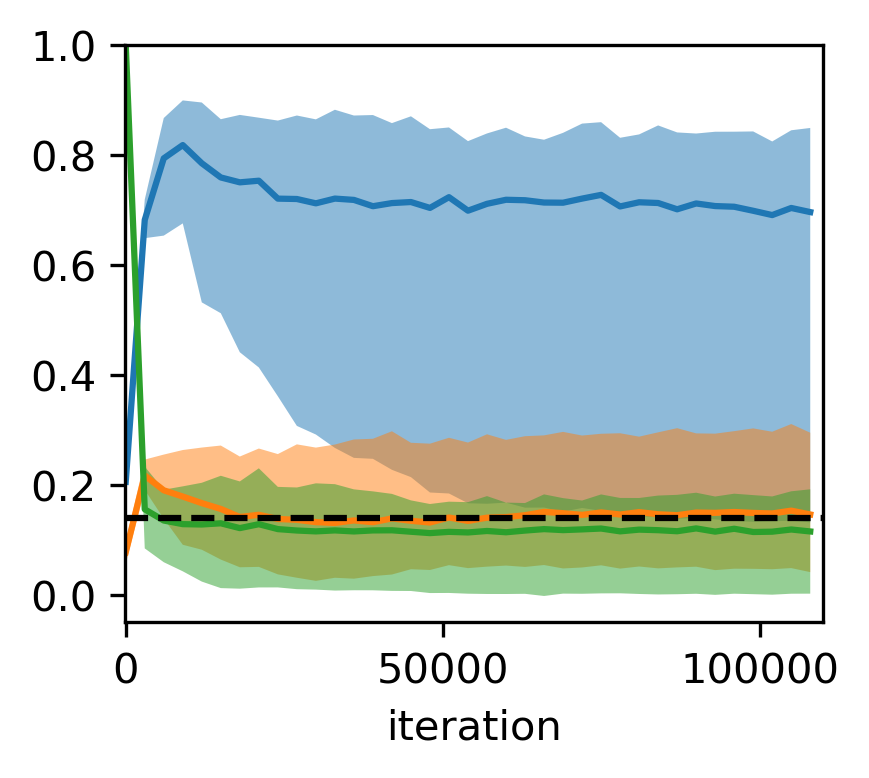}
    \includegraphics[width=0.35\linewidth]{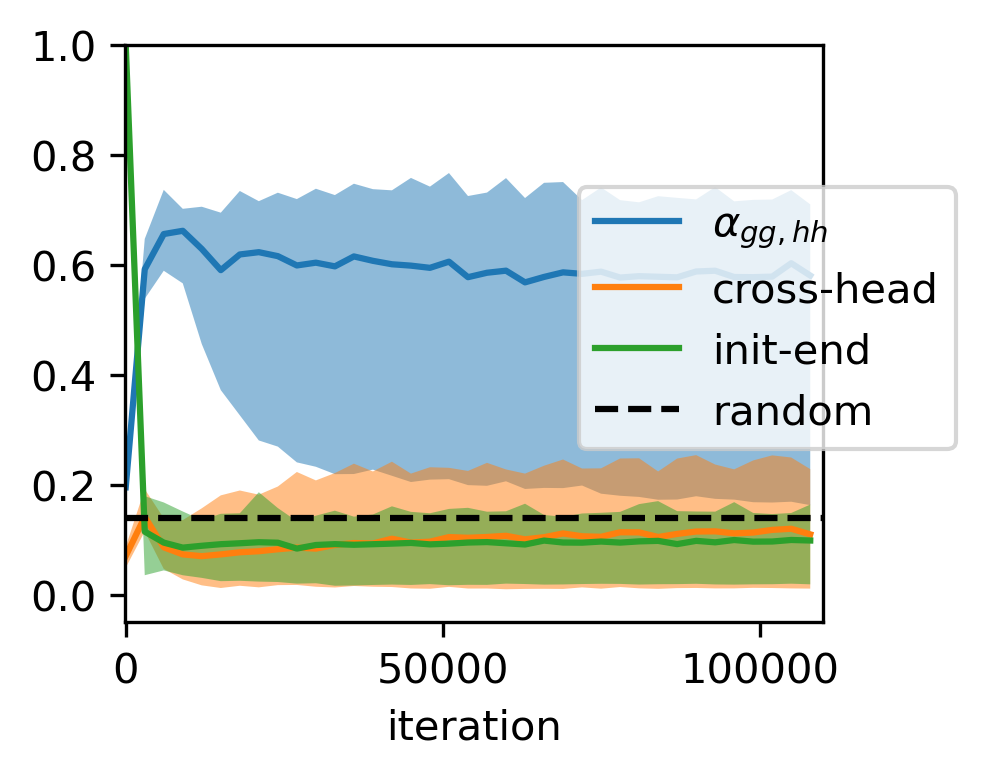}

    \caption{Alignment of the matrices $\E[hh^\top]$ and $\E[gg^\top]$. The experimental setting is the same as the LLM experiment.}
    \label{fig:llm second moment}
\end{figure}
\begin{figure}[t!]
    \centering
    \includegraphics[width=0.33\linewidth]{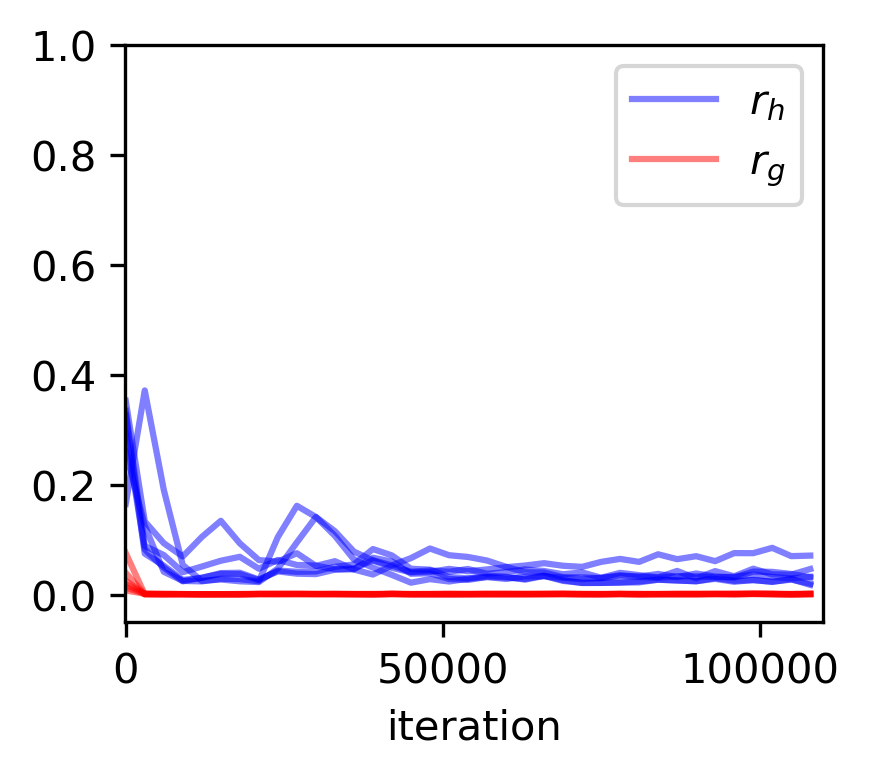}
    \caption{The ratios of the traces of the second moment matrices for transformer: $r_g = \Tr[\E[g]\E[g^\top]]/ \Tr[\E[gg^\top]]$ and $r_g = \Tr[\E[h]\E[h^\top]]/ \Tr[\E[hh^\top]]$. We see that $r_g$ essentially converges to zero, which means that $\E[gg^\top] =\cov(g,g)$ at the end of training. $r_h$ is generally non-zero but is essentially negligible. The experiment setting is the same as the LLM experiments.}
    \label{fig:norm ratio}
\end{figure}

\clearpage

\subsection{Representations of Resnet18}\label{app sec: resnet18}

See Figure~\ref{fig:conv rep} for the representations of the last convolutional layer of Resnet18 before and after training on CIFAR-10. See Figure~\ref{fig:output rep} for the representations of the output layer. Interestingly, for the classification task, both $\cov(g,g)$ and $\cov(h,h)$ become proportional to the identity.

\begin{figure}[t!]
    \centering
    
    \includegraphics[width=0.23\linewidth]{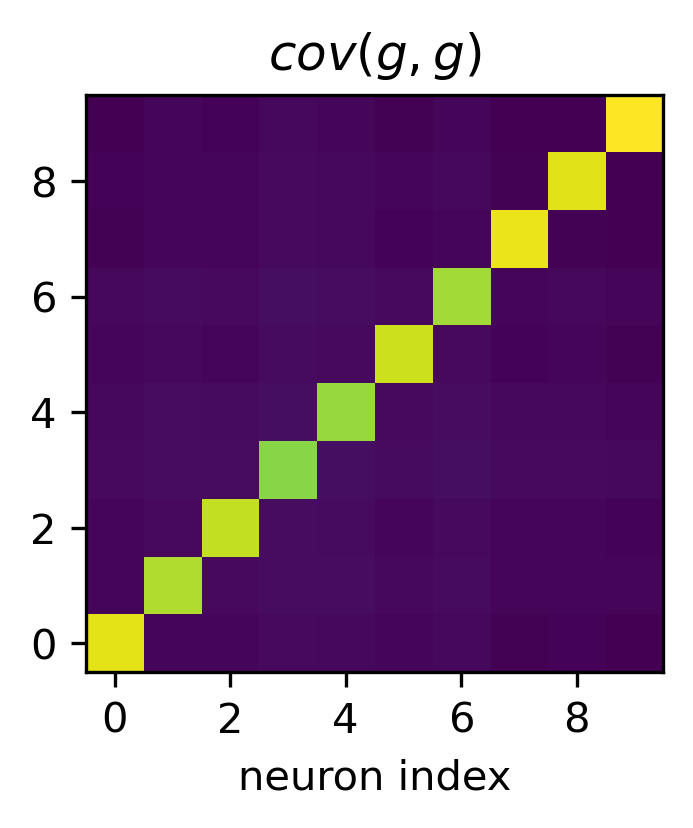}
    \includegraphics[width=0.23\linewidth]{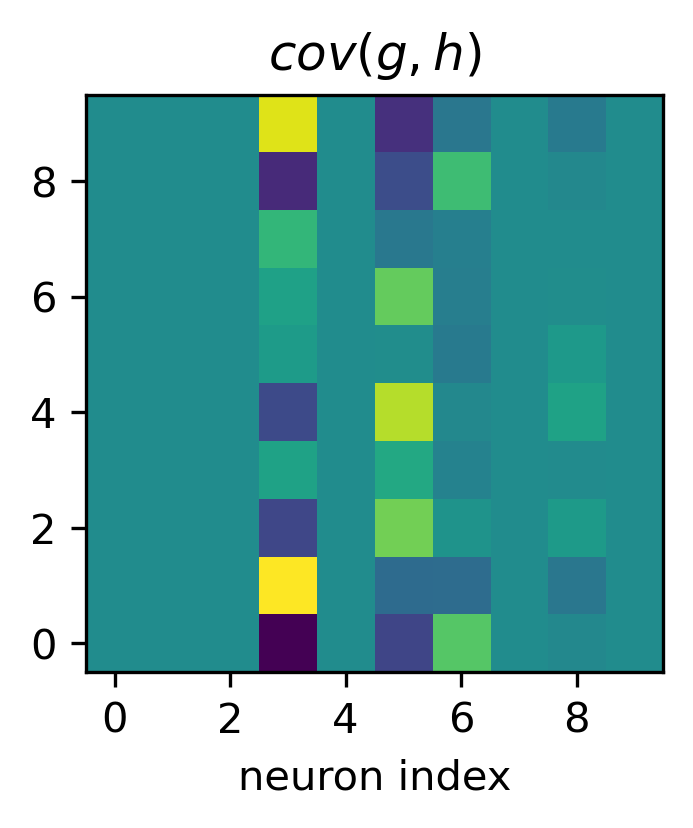}
    \includegraphics[width=0.23\linewidth]{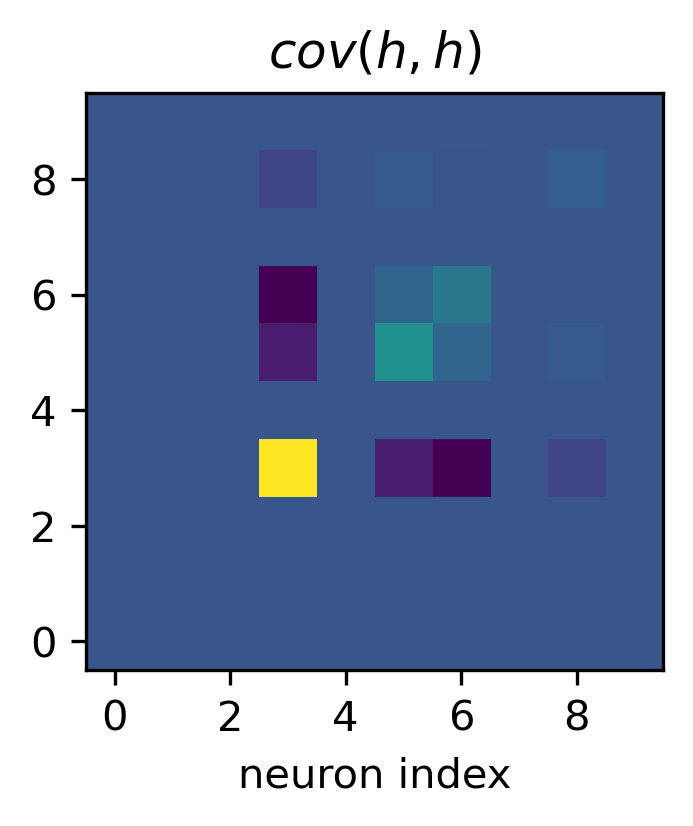}
    
    \includegraphics[width=0.23\linewidth]{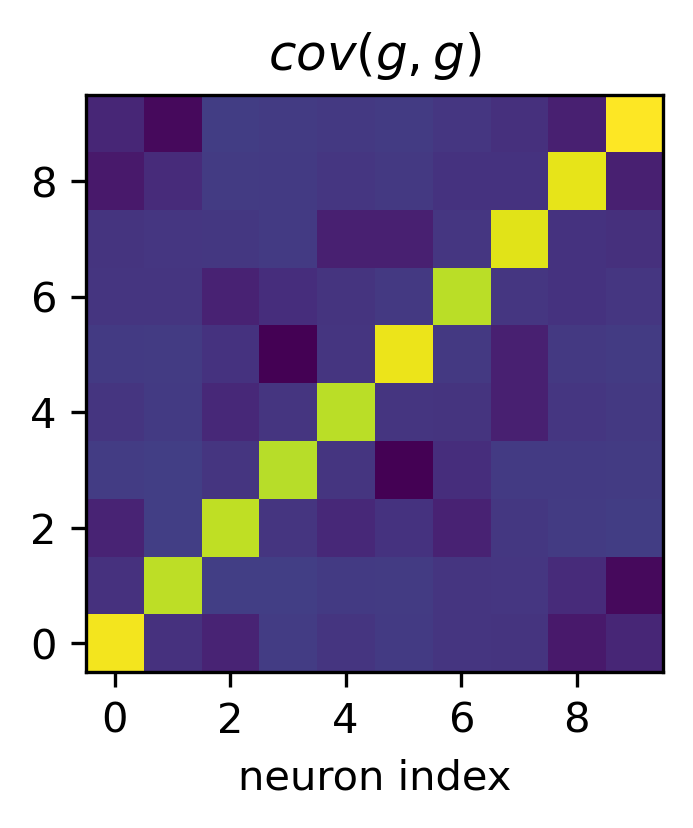}
    \includegraphics[width=0.23\linewidth]{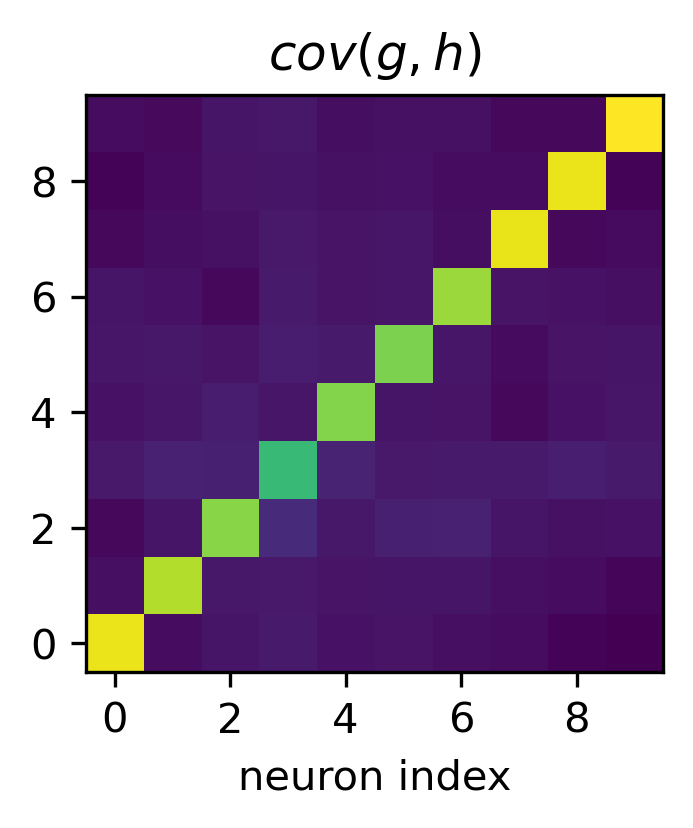}
    \includegraphics[width=0.23\linewidth]{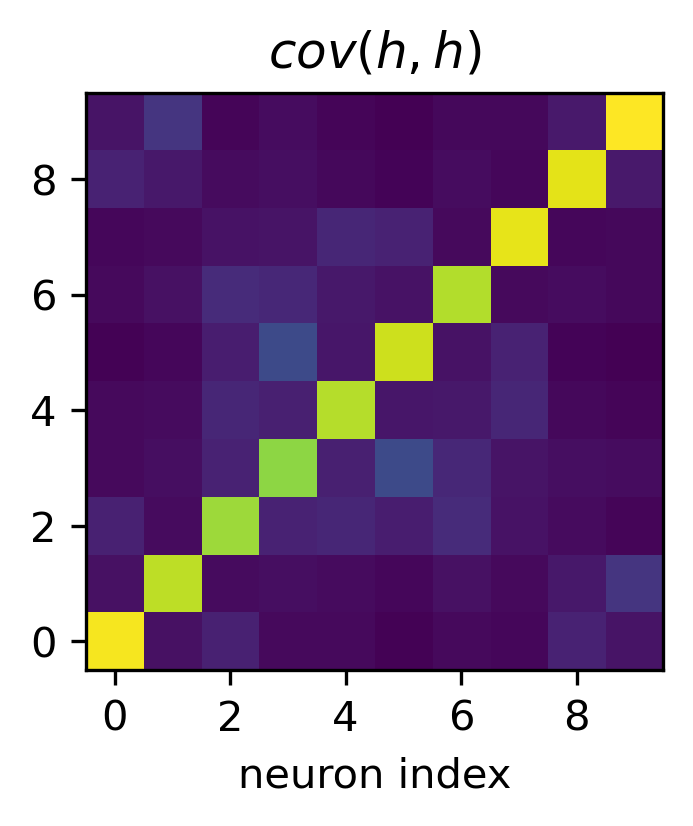}
    \caption{Representation of the output layer of Resnet18. Essentially, these are the covariances of the output at the end of training. \textbf{First Row:} Initialization. \textbf{Second Row}: End of training.}
    \label{fig:output rep}
%\end{figure}

%\begin{figure}[t!]
    \centering
    \includegraphics[width=0.23\linewidth]{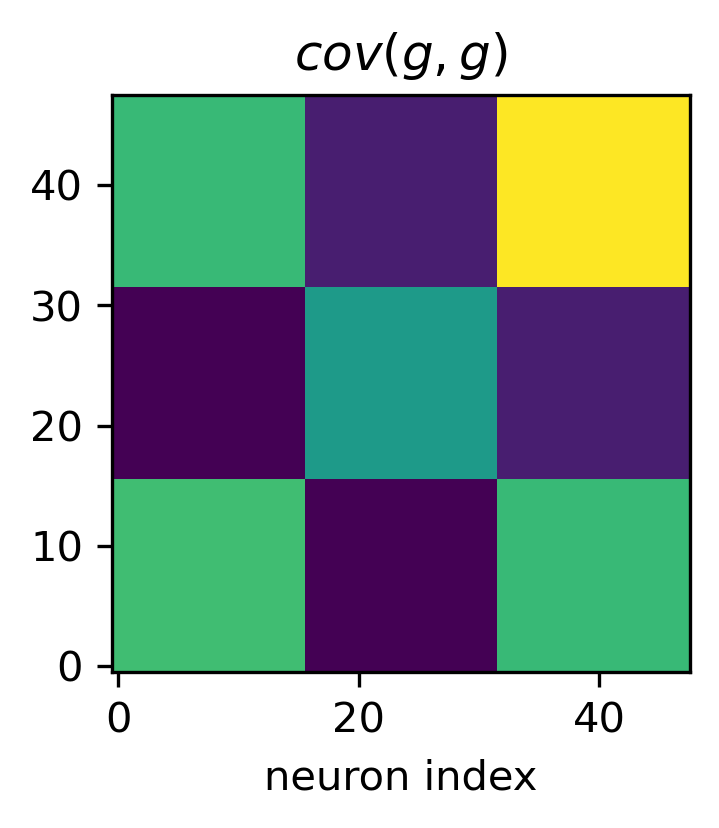}
    \includegraphics[width=0.23\linewidth]{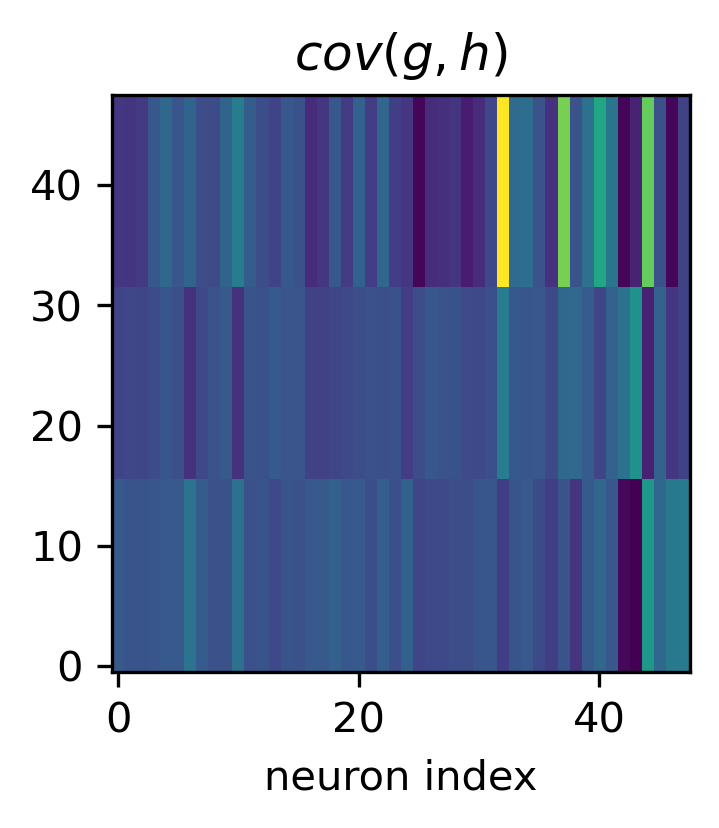}
    \includegraphics[width=0.23\linewidth]{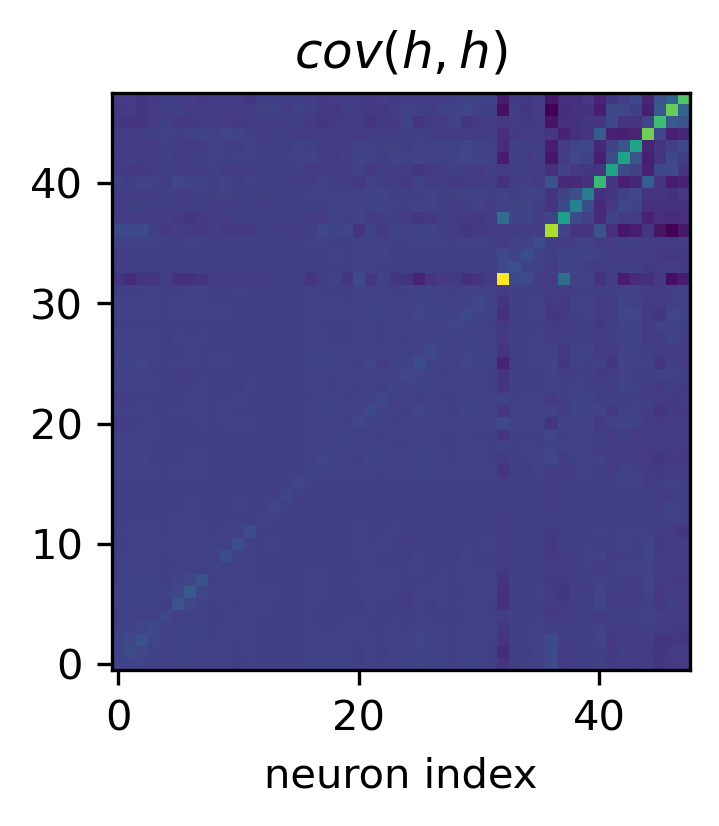}
    
    \includegraphics[width=0.23\linewidth]{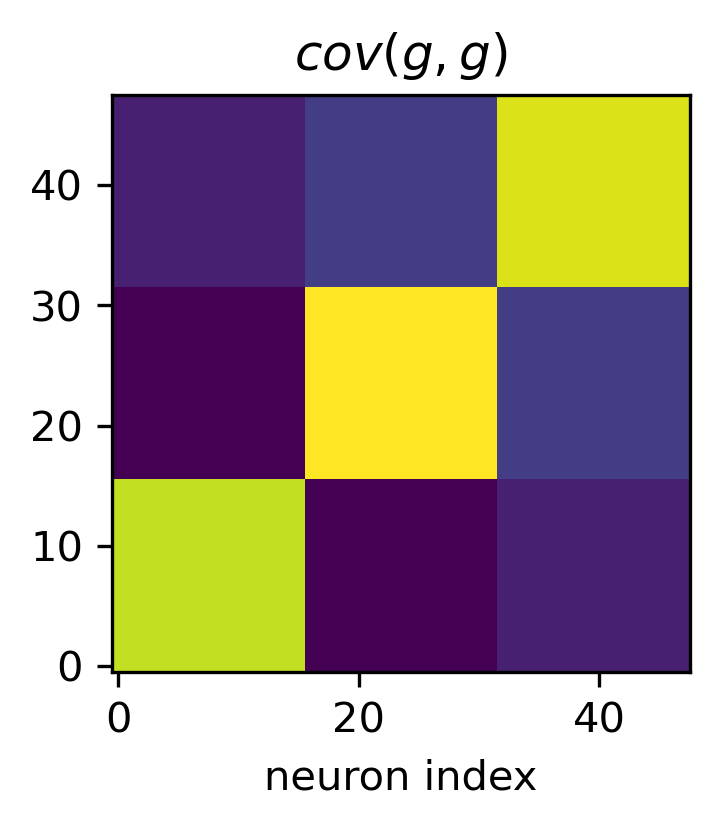}
    \includegraphics[width=0.23\linewidth]{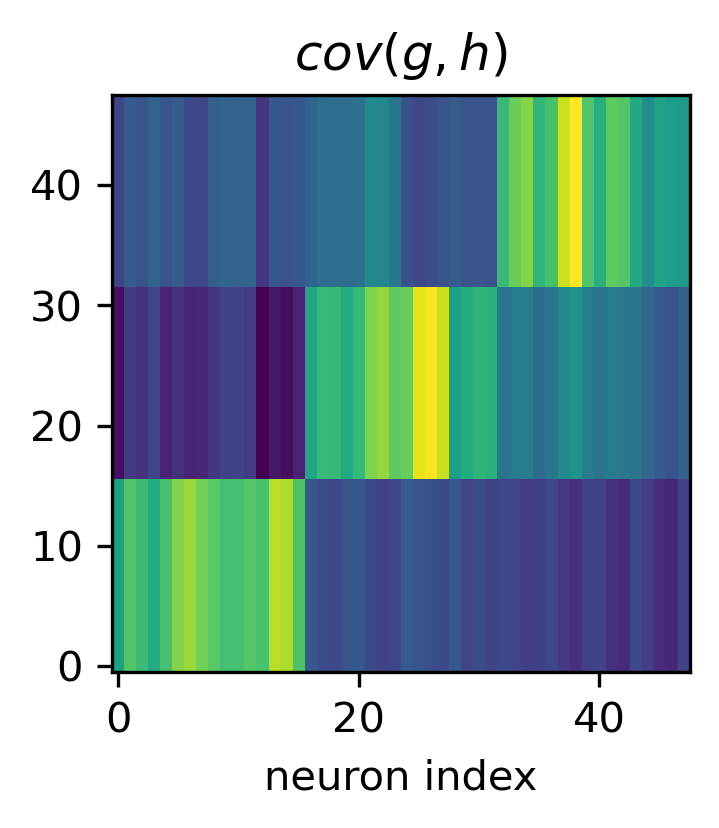}
    \includegraphics[width=0.23\linewidth]{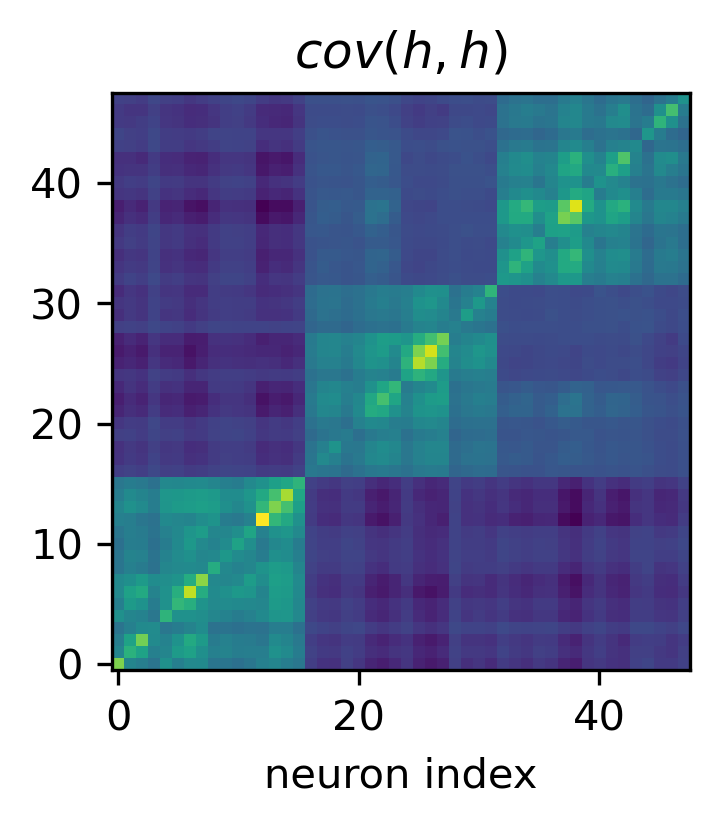}
    \caption{Representation covariance of the last convolution block at initialization (\textbf{upper}) and end of training (\textbf{lower}).}
    \label{fig:conv rep}
\end{figure}
%\begin{figure}[t!]
%    \centering
%    \includegraphics[width=0.242\linewidth]{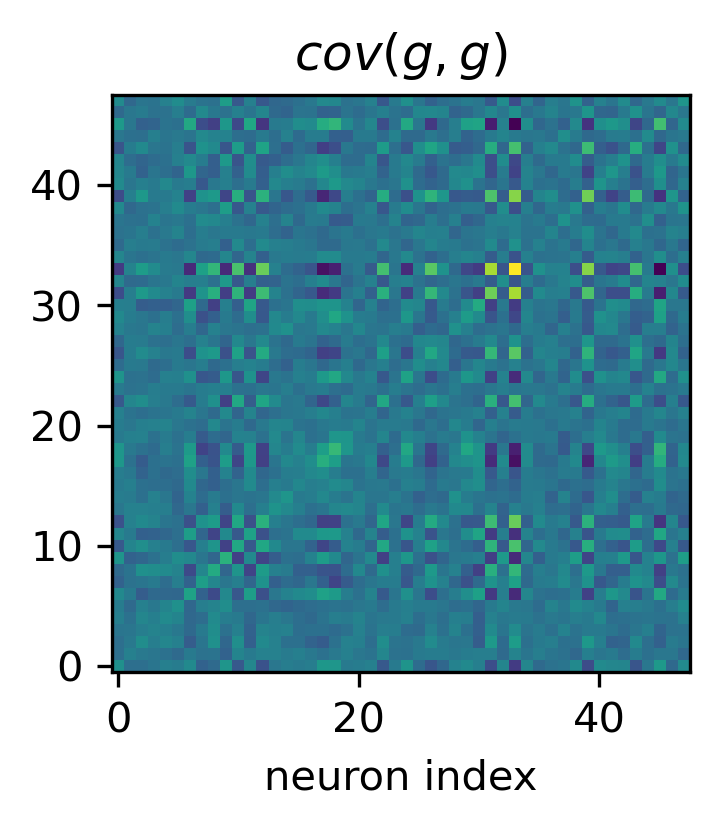}
%    \includegraphics[width=0.242\linewidth]{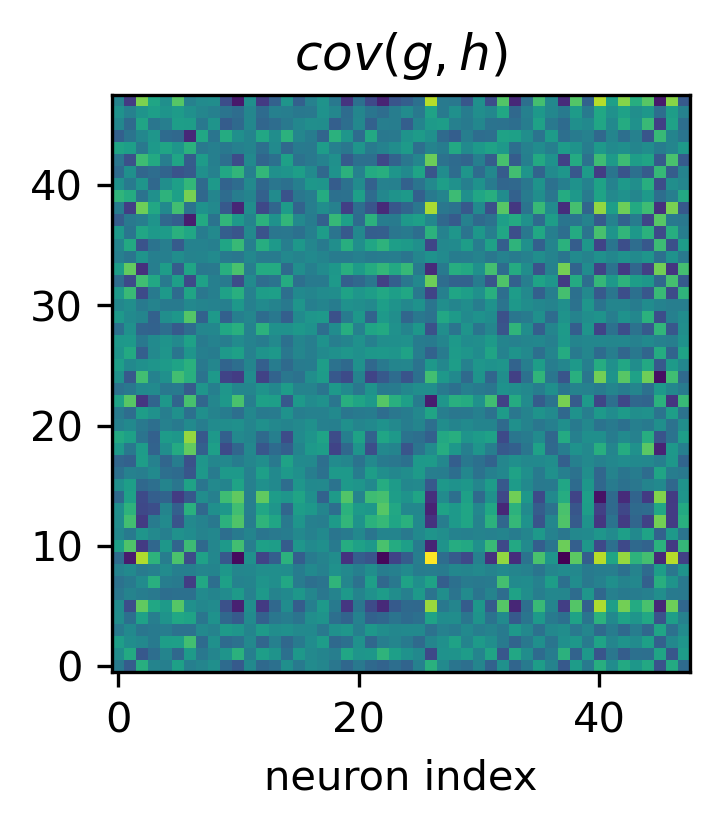}
%    \includegraphics[width=0.242\linewidth]{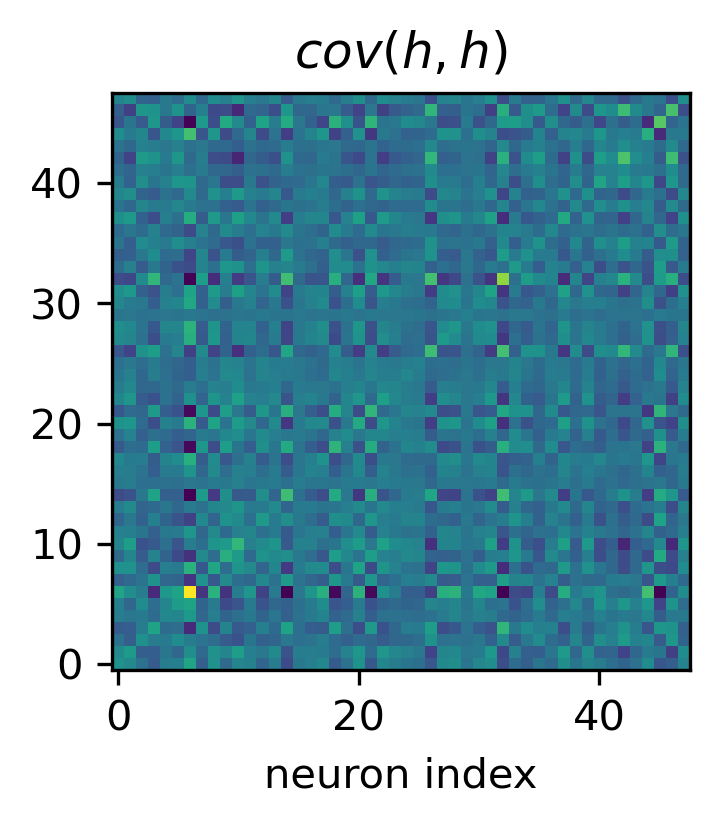}
    
%    \includegraphics[width=0.242\linewidth]{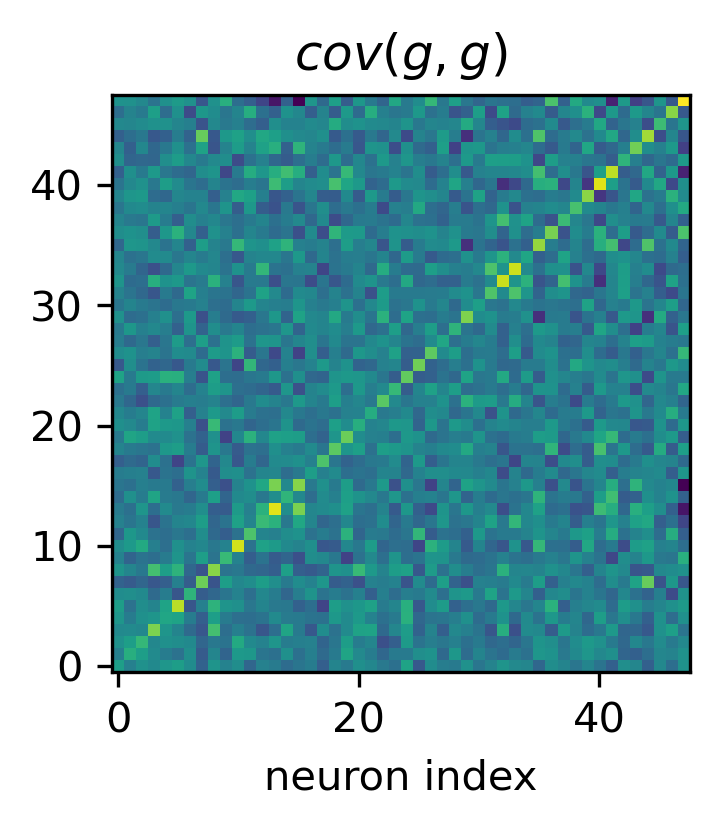}
%    \includegraphics[width=0.242\linewidth]{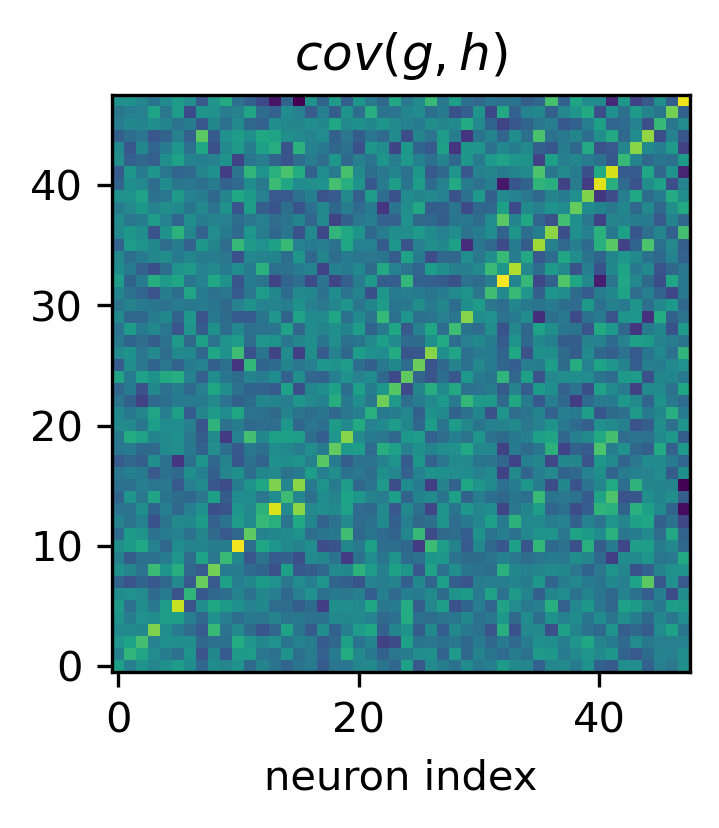}
%    \includegraphics[width=0.242\linewidth]{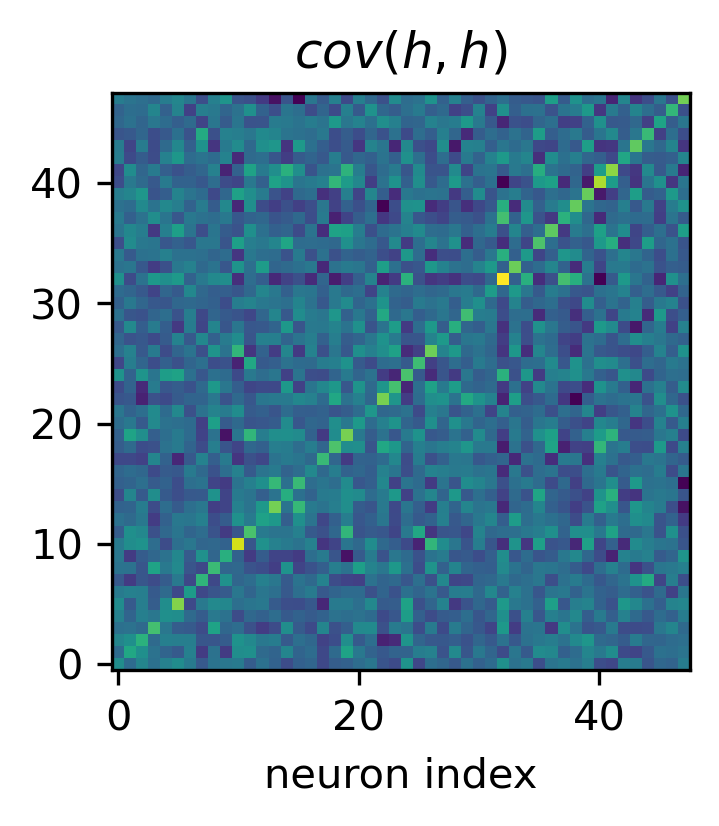}
%    \caption{Representation covariance of the penultimate fully connected layer at initialization (\textbf{upper}) and end of training (\textbf{lower}).}
%    \label{fig:fc rep}
%\end{figure}

\clearpage
\begin{figure}[t!]
    \centering
    \includegraphics[width=0.242\linewidth]{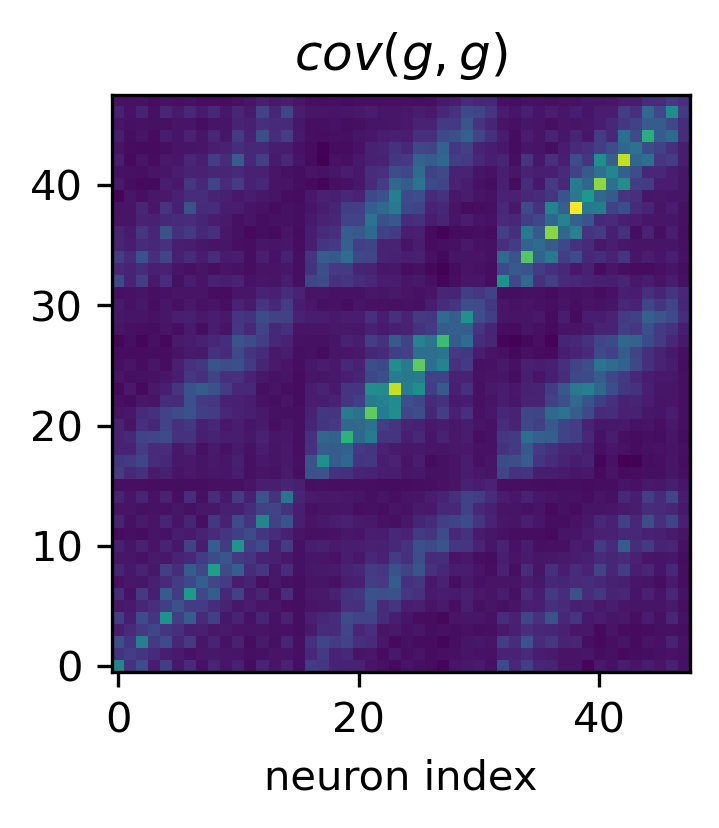}
    \includegraphics[width=0.242\linewidth]{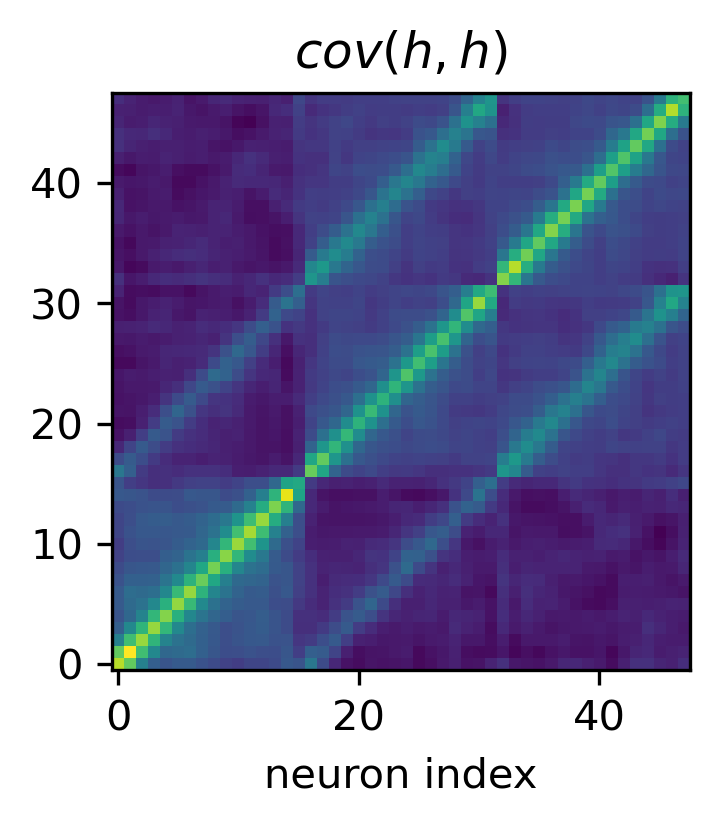}
    \includegraphics[width=0.242\linewidth]{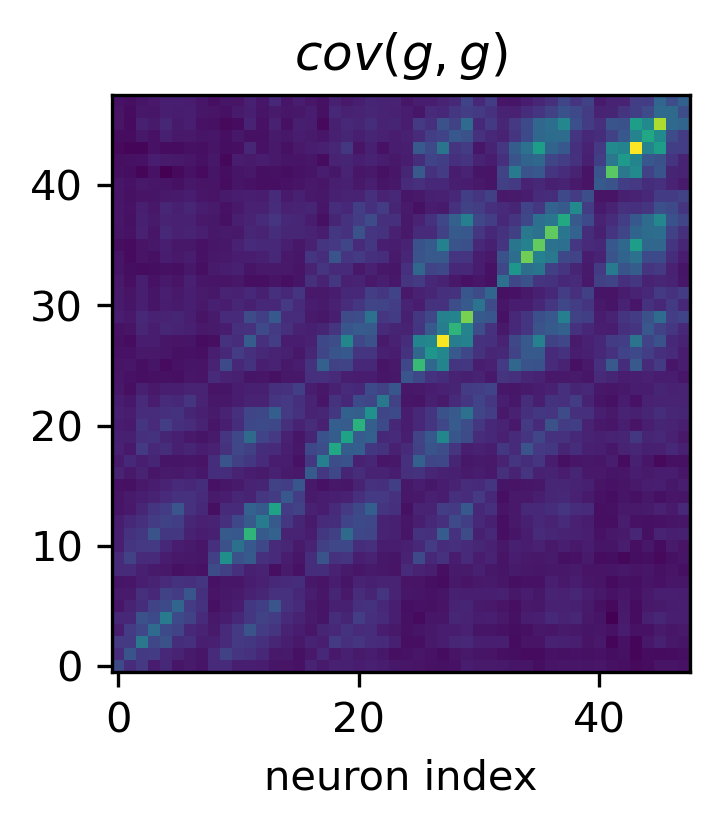}
    \includegraphics[width=0.242\linewidth]{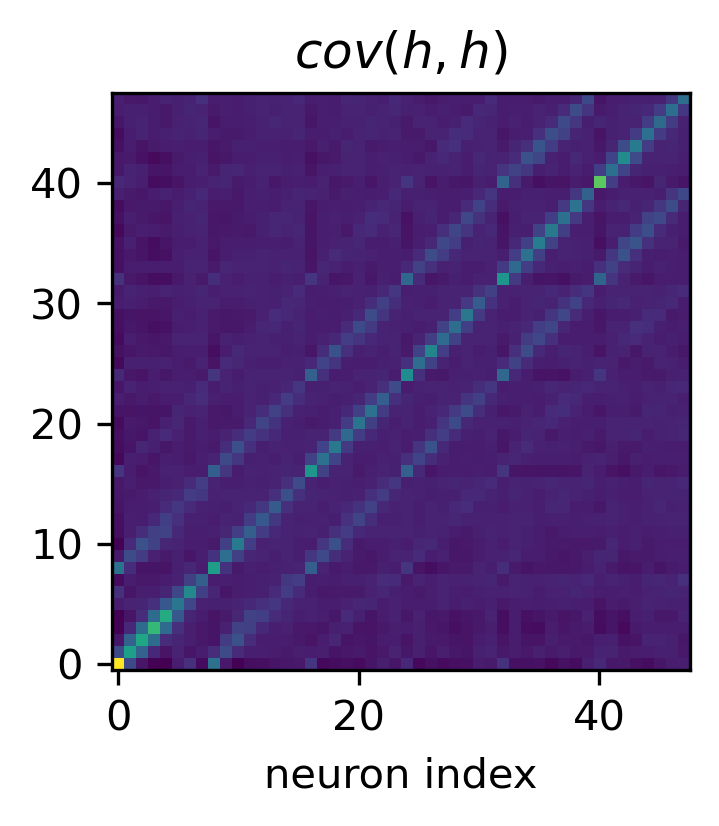}
    \caption{Examples of representation for Resnet-18 after a self-supervised contrastive training. \textbf{Left}: second convolution block representation, \textbf{Right}: penultimate convolution block representation.}
    \label{fig:resnet simclr cov examples}
\end{figure}
\subsection{Representations of Self-Supervised Learning}
See Figure~\ref{fig:resnet simclr cov examples} for the representations of the last and the penultimate convolutional layers. They have significant alignments, but the agreement is perfect. For fully connected layers, the alignment is much better (see the main text; examples not shown).

\subsection{Large Language Model}
See Figure~\ref{fig:gpt rep} and \ref{fig: intro gpt}.

\begin{figure}[t!]
    \centering
    \includegraphics[width=0.242\linewidth]{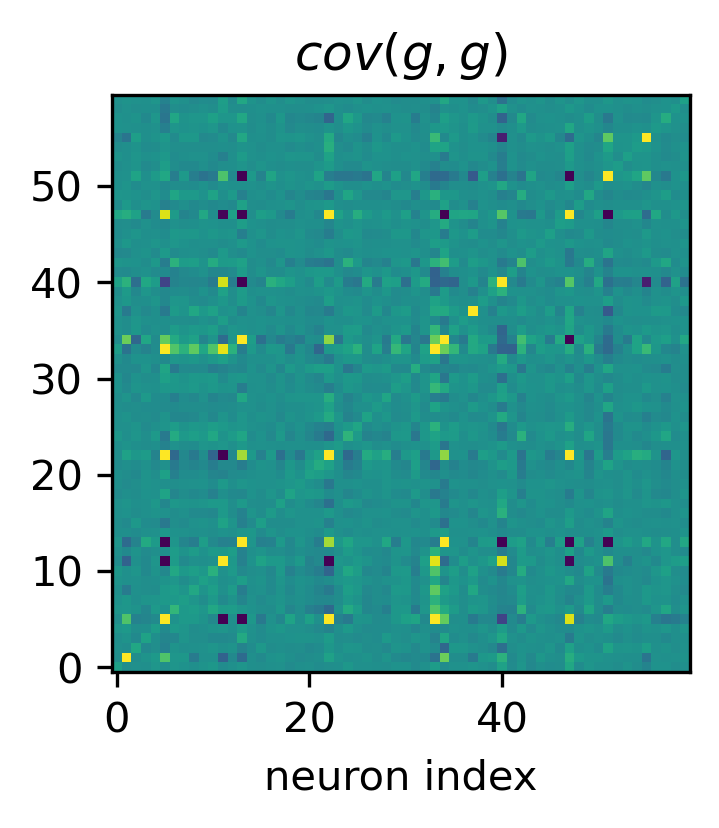}
    \includegraphics[width=0.242\linewidth]{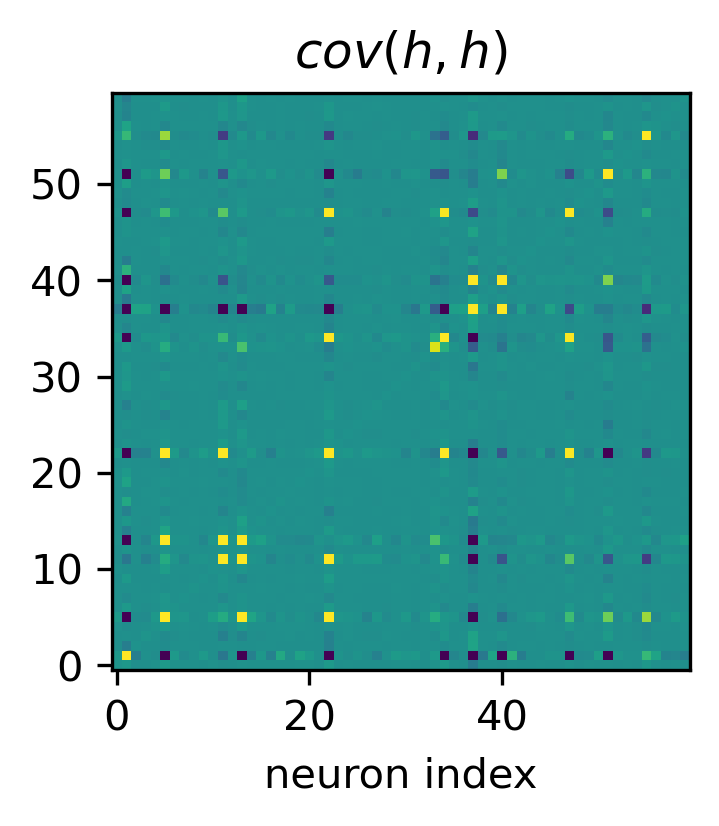}
    \includegraphics[width=0.3\linewidth]{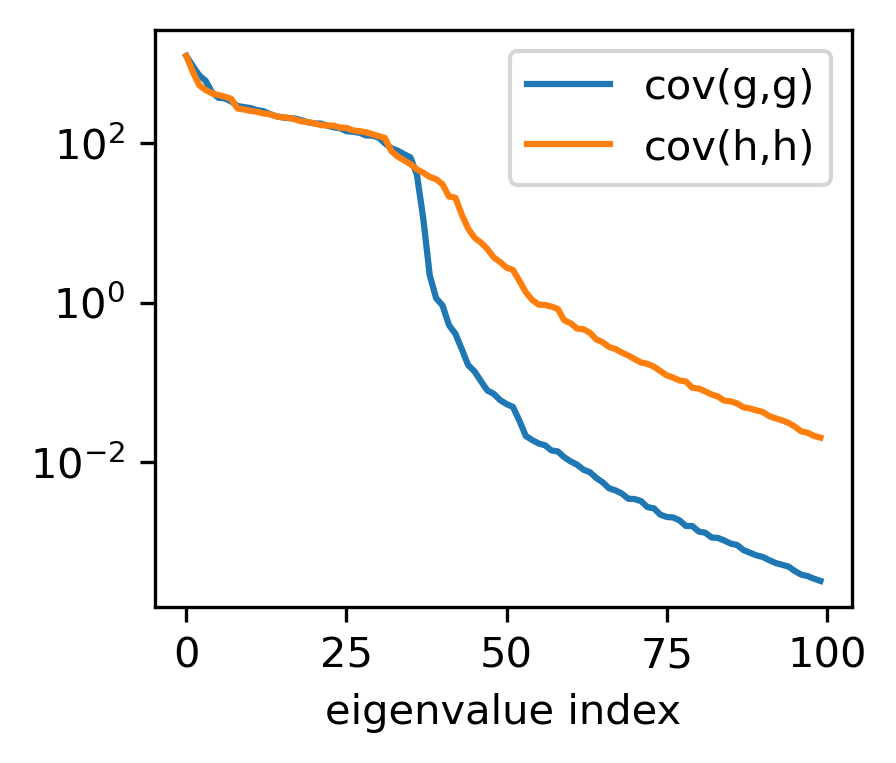}

    \caption{Examples of the representation learned by the transformer in the third layer. \textbf{Left-Mid}: examples. \textbf{Right}: The spectra of the two matrices are exactly the same for the leading eigenvalues. The difference is mainly in the smaller eigenvalues, and this difference gets smaller as the training proceeds.}
    \label{fig:gpt rep}
\end{figure}

\begin{figure}[t!]
    %\vspace{-3em}
    \centering
    \includegraphics[width=0.3\linewidth]{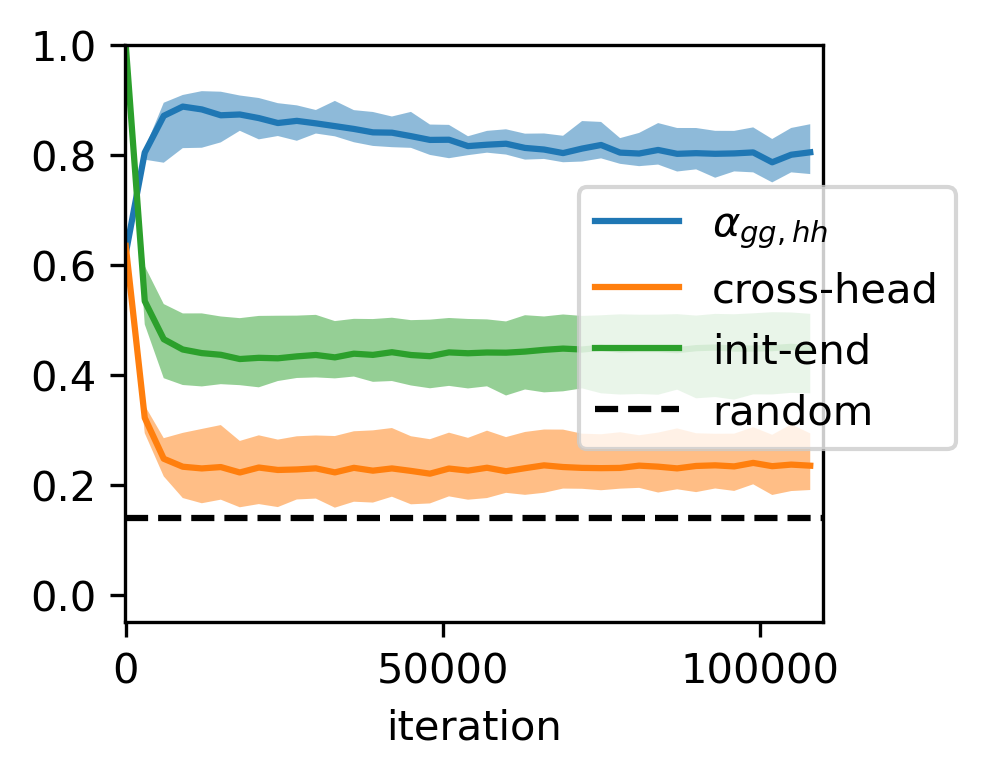}
    %\vspace{-2em}
    \caption{The alignment of feature and gradient covariance ($\alpha_{gg,hh}$) remains high during most of the training (\textbf{llm}). The shaded region shows the variation across 8 different heads in the same layer.}
    \label{fig: intro gpt}
    %\vspace{-2em}
\end{figure}

\clearpage
\subsection{Fully Connected Nets}\label{app sec: fc nets}
See Figure~\ref{fig:sgd} and \ref{fig:fc}. We see that the alignment effect is significant for both SGD and Adam. Also, see Figure~\ref{fig:fc depth} for the effect of having different depths.  %s~\ref{fig:sgd fc batch}, \ref{fig:sgd fc decay}, \ref{fig:sgd fc depth} and \ref{fig:sgd fc width}.

\begin{figure}[h!]
    \centering
    \includegraphics[width=0.32\linewidth]{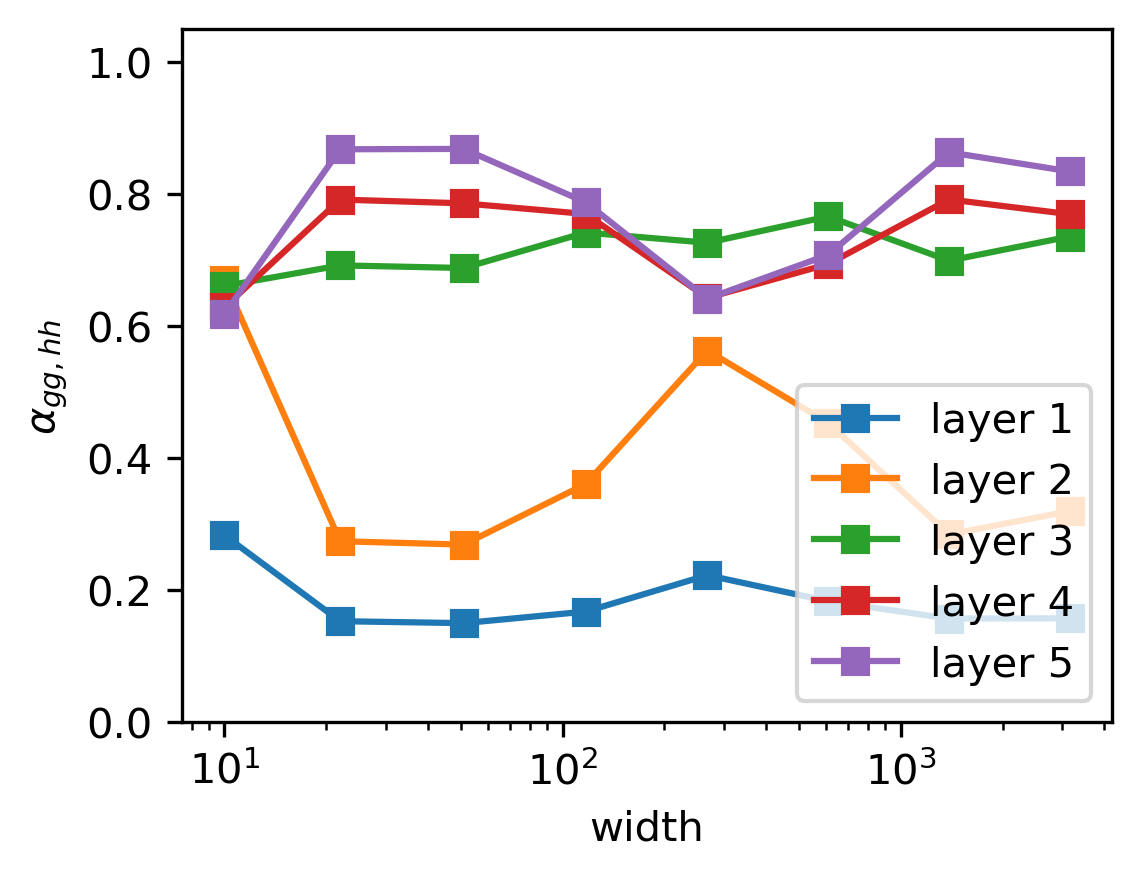}
    \includegraphics[width=0.32\linewidth]{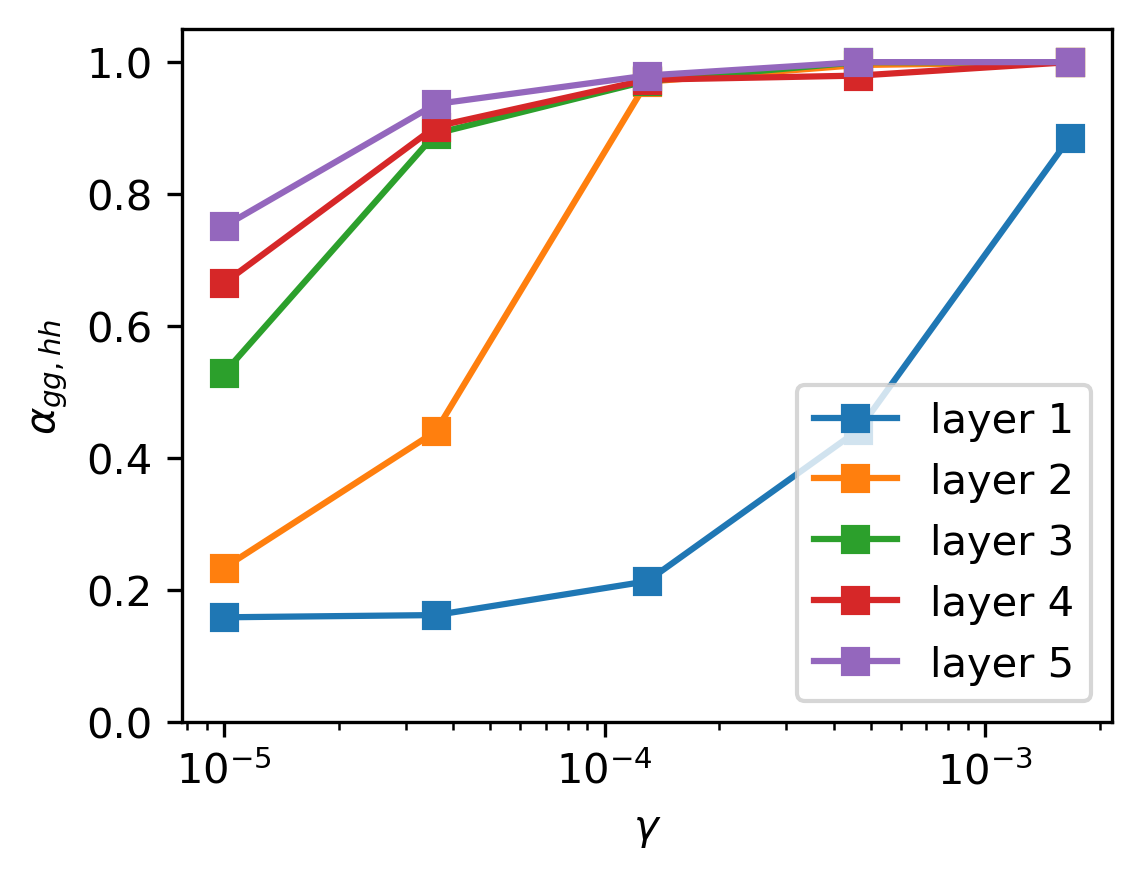}
    \includegraphics[width=0.32\linewidth]{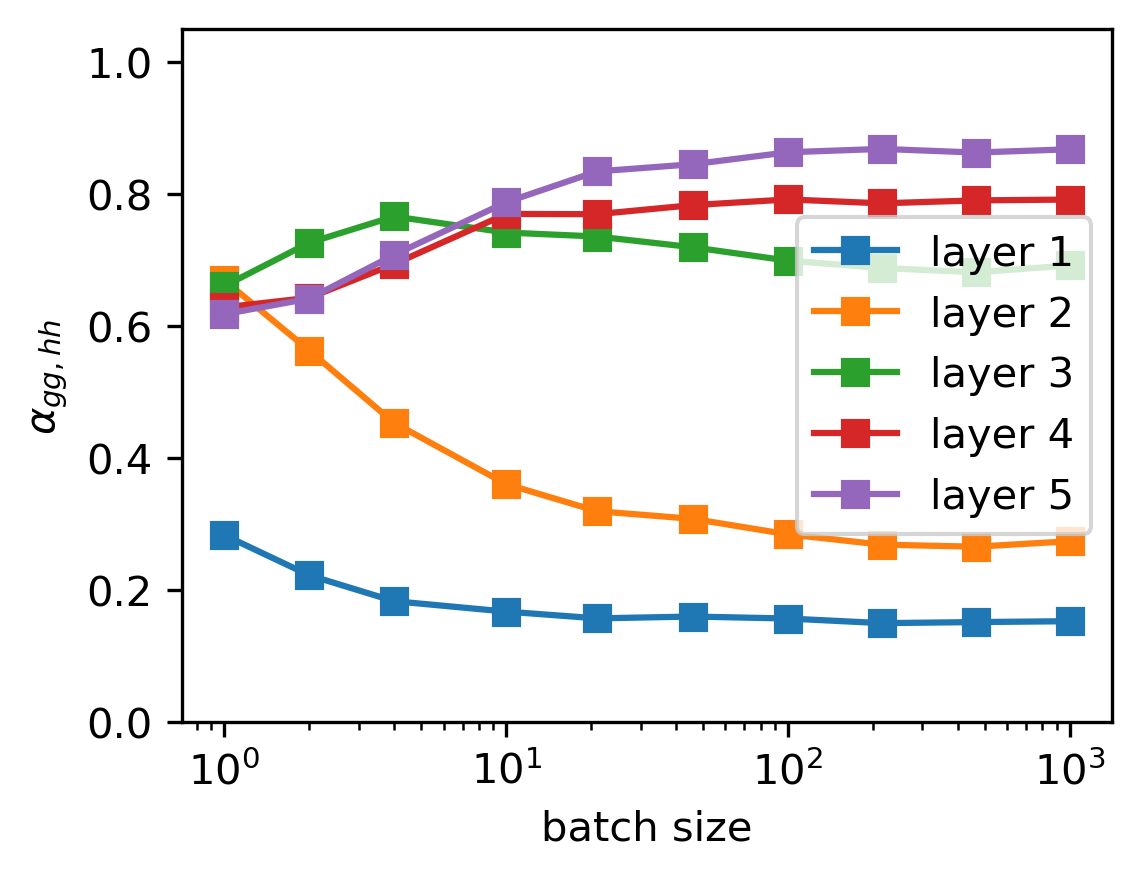}
     \caption{The effect of the width, weight decay $\gamma$, and batch size on the alignment for a fully connected network. The training proceeds for $10^5$ iterations, when the training stops decreasing significantly. The training proceeds with \textbf{SGD}.}
    
    \label{fig:sgd}
\end{figure}

\begin{figure}[h!]
    \centering
    \includegraphics[width=0.32\linewidth]{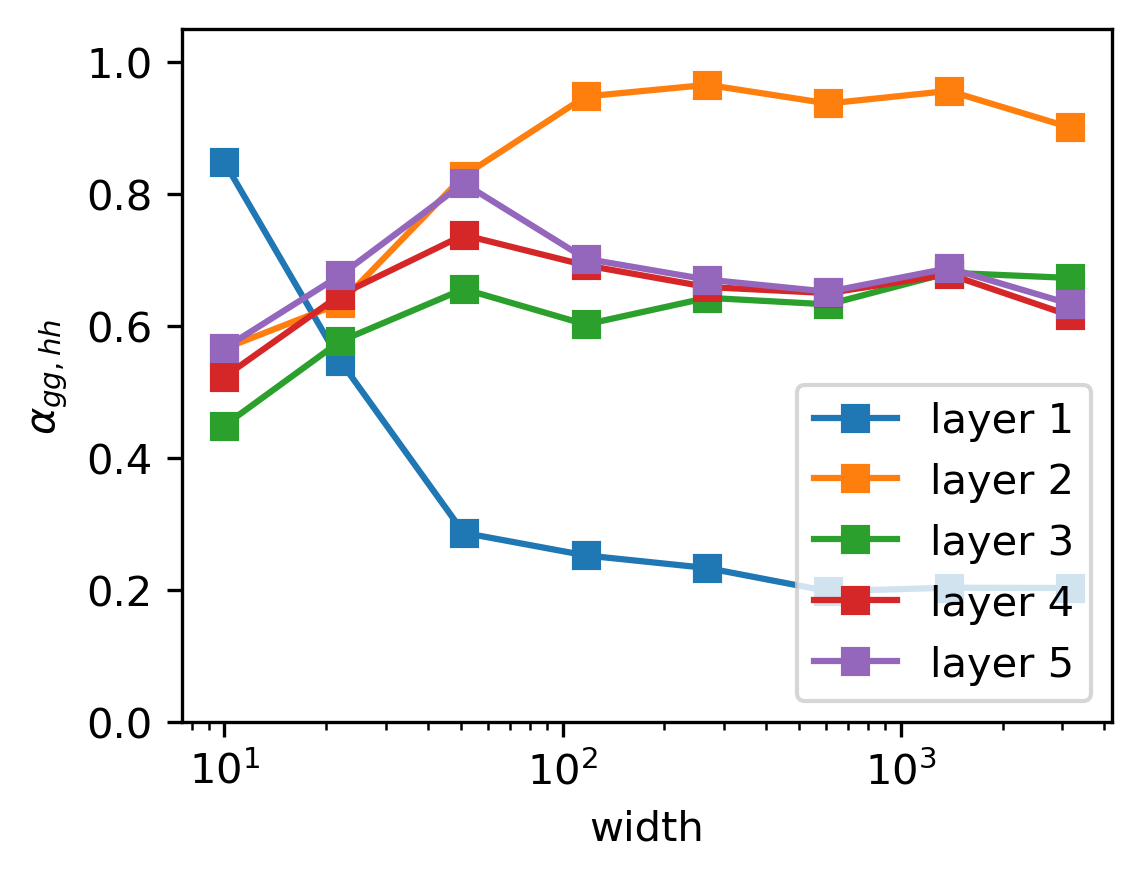}
    \includegraphics[width=0.32\linewidth]{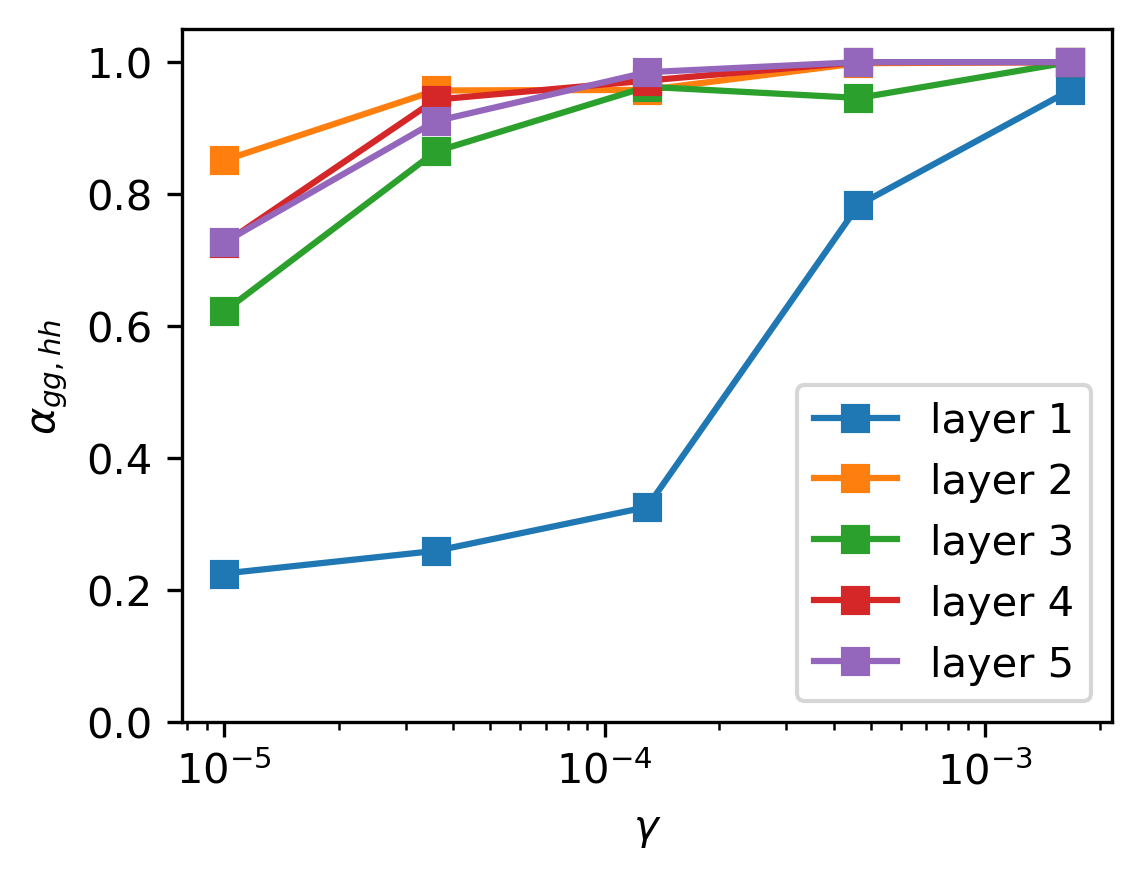}
    \includegraphics[width=0.32\linewidth]{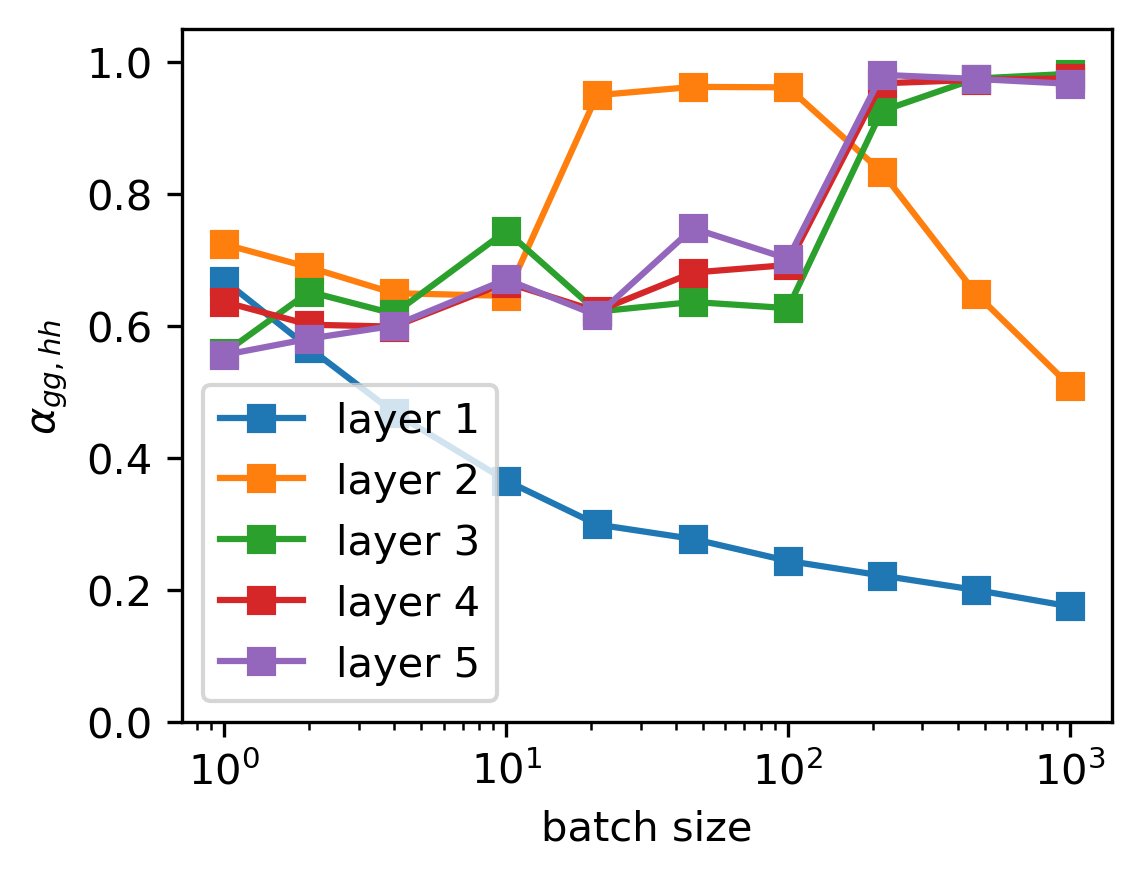}
   \caption{Same as the previous figure, except that the training proceeds with \textbf{Adam}.}
    \label{fig:fc}
\end{figure}

\begin{figure}[h!]
    \centering
    \includegraphics[width=0.27\linewidth]{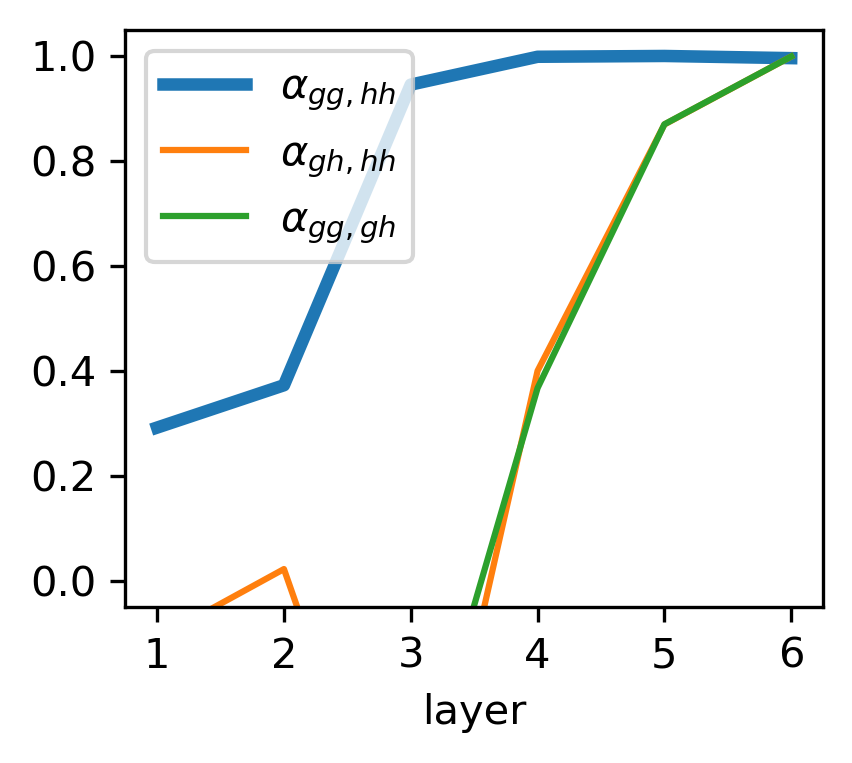}
    \includegraphics[width=0.27\linewidth]{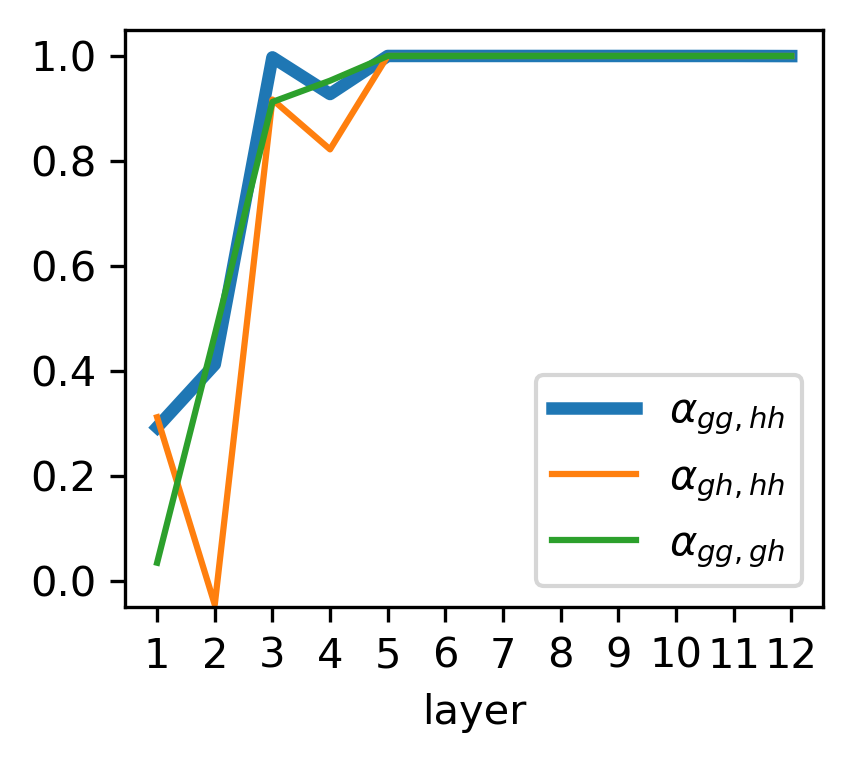}
    \caption{Alignment of of different layers of a fully connected ReLU network at different layers (layer $0$ is the input layer). \textbf{Left}: a 6-layer network. \textbf{Right}: a 12-layer network.}
    \label{fig:fc depth}
\end{figure}

\clearpage

\subsection{CRH in ResNet-18}\label{app sec: resnet crh}
See Figure~\ref{fig:resnet CRH}.

\begin{figure}[t!]
    \centering
    \includegraphics[width=0.242\linewidth]{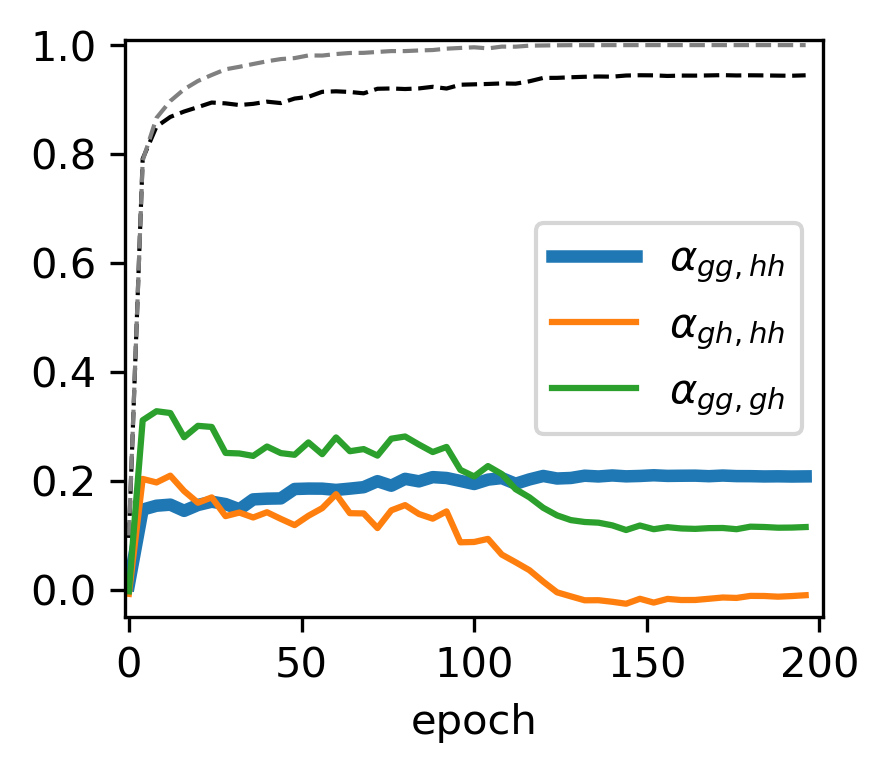}
    \includegraphics[width=0.242\linewidth]{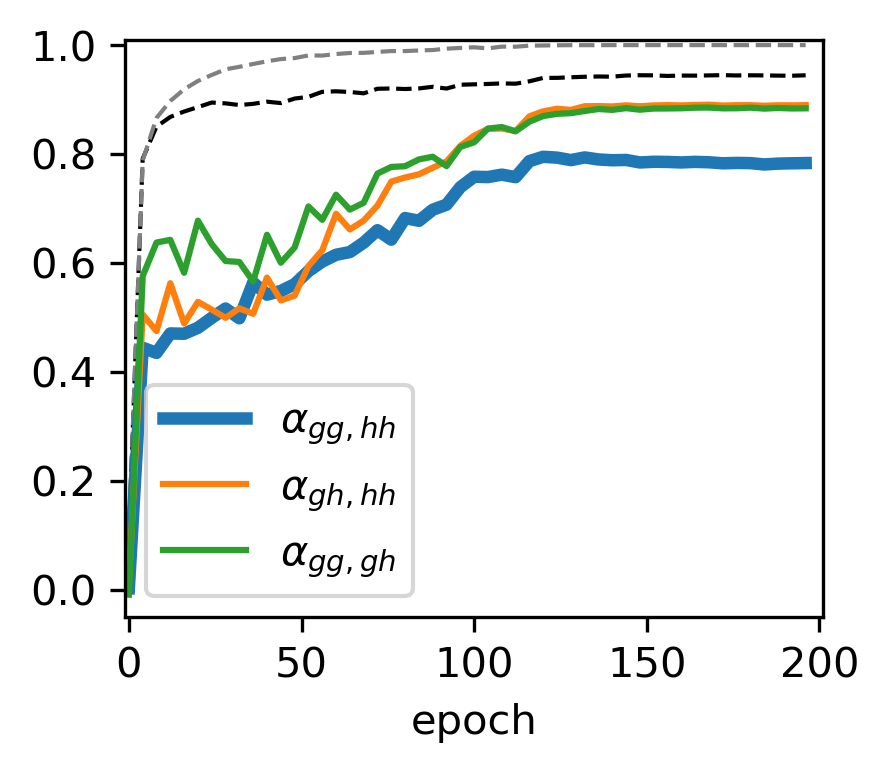}
    \includegraphics[width=0.242\linewidth]{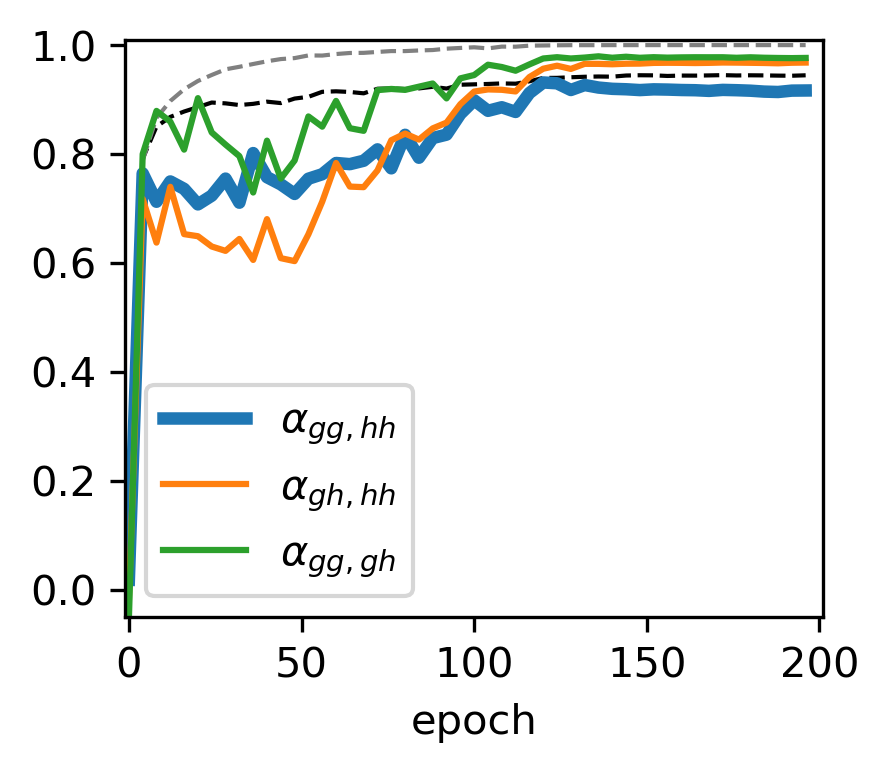}
    \includegraphics[width=0.242\linewidth]{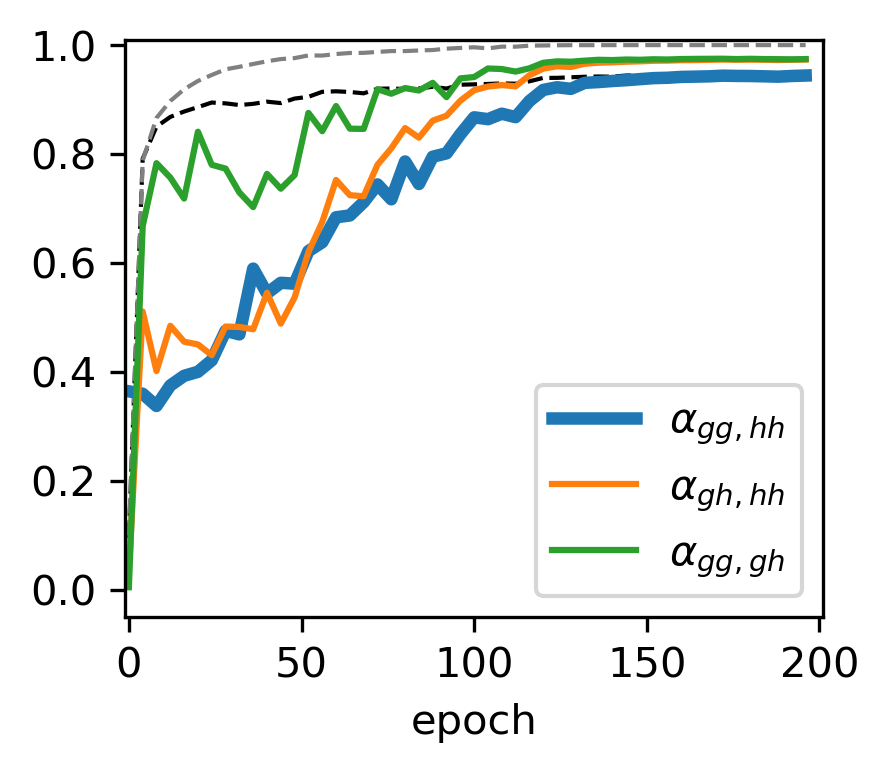}
    \vspace{-1em}
    \caption{Evolution of $\alpha$ during training for different layers of Resnet-18 on CIFAR-10. For reference, the training accuracy (grey) and testing accuracy (black) are shown in the dashed line. \textbf{Left} to \textbf{Right}: (1) penultimate convolution block representation, (2) last convolution block representation, (3) penultimate fully connected layer, (4) output layer.}
    \label{fig:resnet supervised alpha}
\end{figure}
\subsection{CRH and PAH in Fully Connected Nets}\label{app sec: fc crh}
See Figure~\ref{fig:fc crh}. The task is the same as other fully connected net experiments. The model is a 4-hidden-layer tanh net with the same width.
\begin{figure}[h!]
    \centering
    \includegraphics[width=0.24\linewidth]{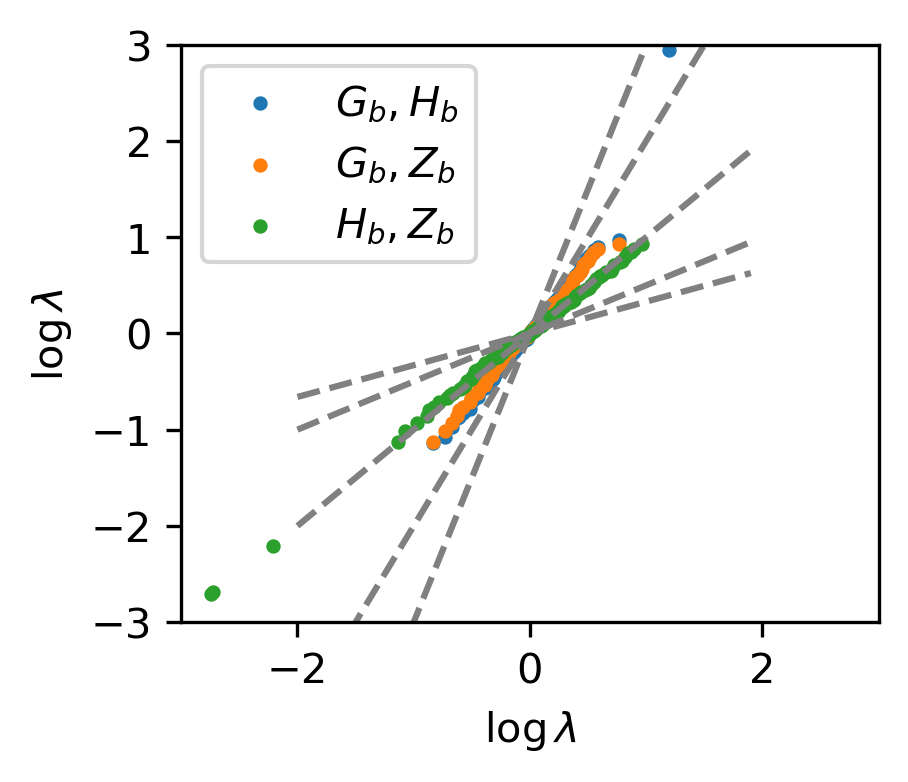}
    \includegraphics[width=0.24\linewidth]{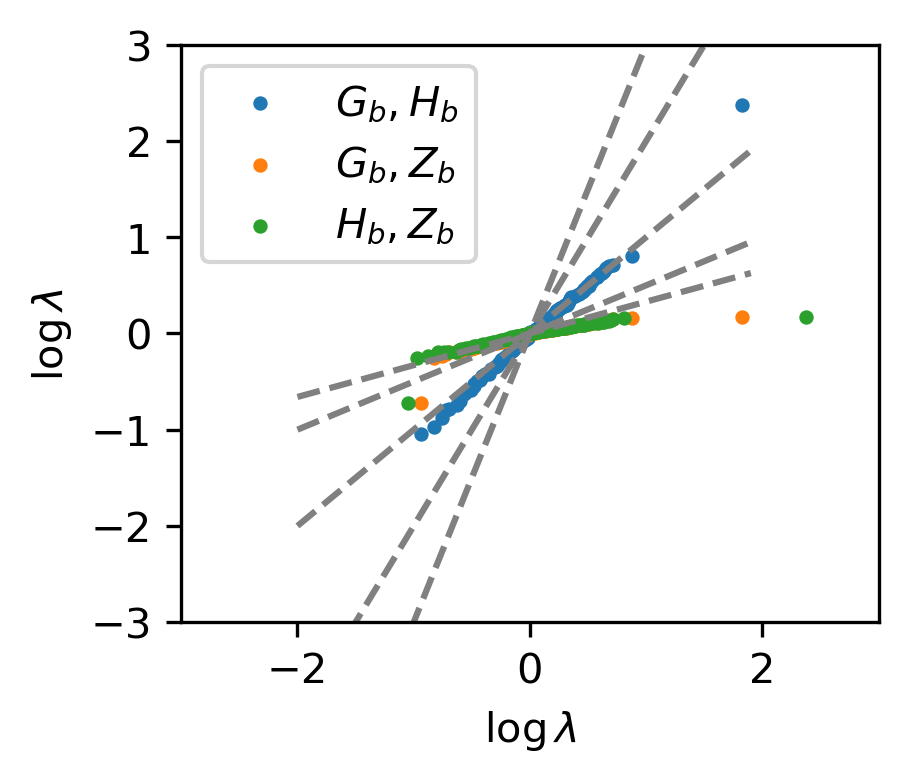}
    \includegraphics[width=0.24\linewidth]{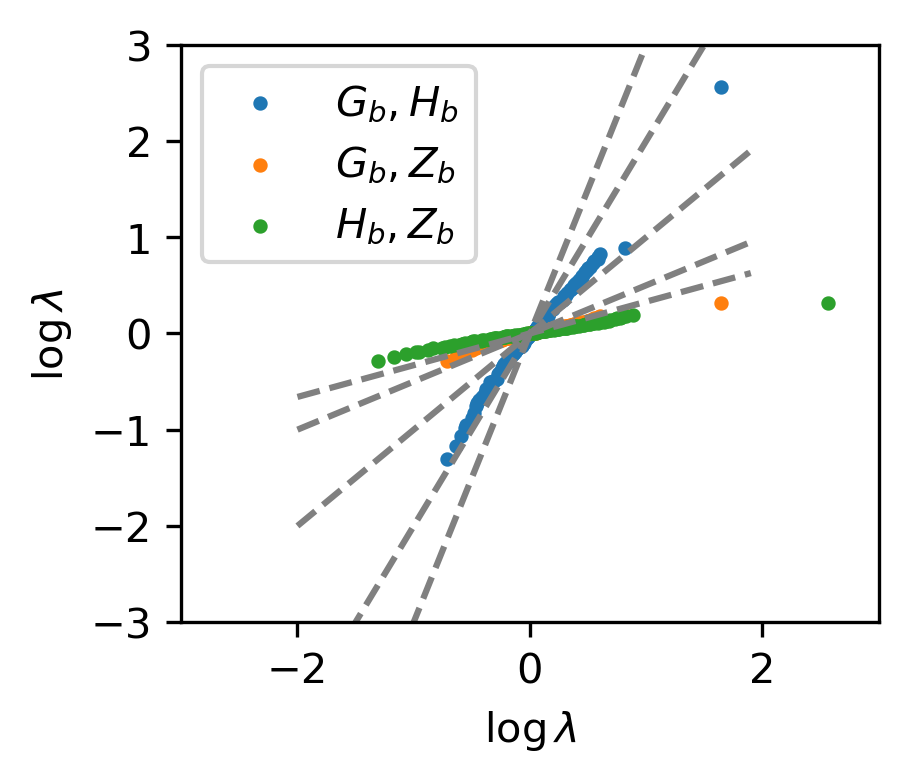}
    \includegraphics[width=0.24\linewidth]{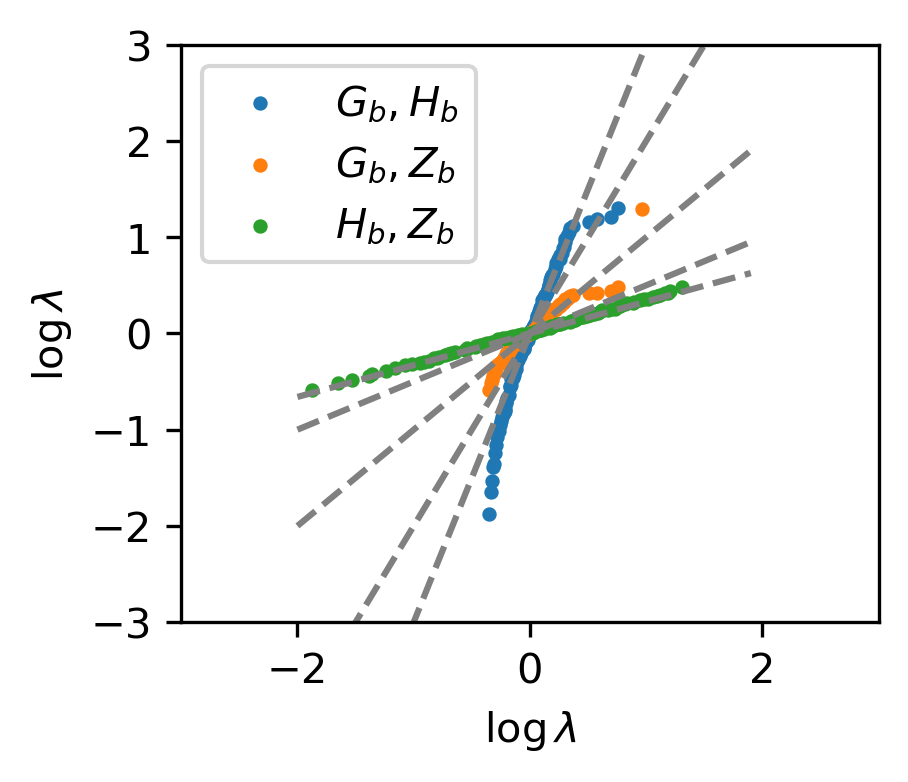}
    \caption{The alignment scalings in fully connected nets. The dashed lines show power laws with exponents $1/3,\ 1/2,\ 1,\ 2,\ 3$, respectively. }
    \label{fig:fc crh}
\end{figure}

\subsection{Stationarity of the covariance matrix}
See Figure~\ref{fig: convergence resnet} for the evolution of $\Delta \cov(h,h)$ to zero for three convolutional layers (0-2) and the two fully connected layers (3-4) in a Resnet-18 during training.

\begin{figure}[h!]
%\begin{wrapfigure}{r!}{0.3\linewidth}
\centering
\vspace{-1em}
\includegraphics[width=0.33\linewidth]{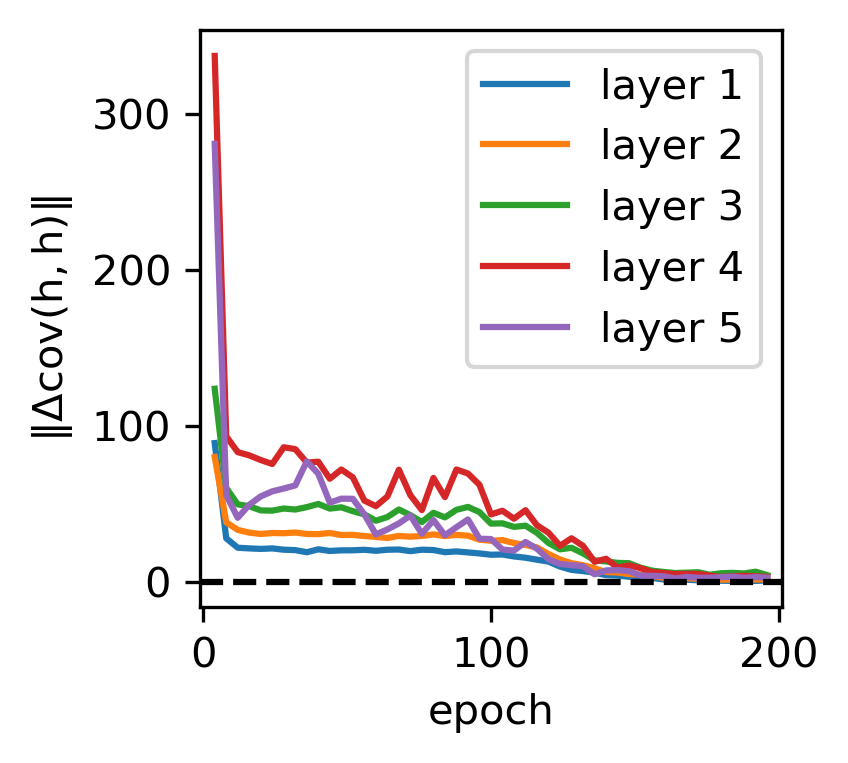}
\vspace{-1em}
\caption{\small The change in the representation covariance converges to (near) zero at the end of training. The model is Resnet-18 trained on CIFAR-10 with standard SGD.}
\label{fig: convergence resnet}
%\end{wrapfigure}
\end{figure}

\subsection{Full Figure to Figure~\ref{fig:rank}}

See Figure~\ref{fig:rank full}.

\begin{figure}[h]
    \centering
    \includegraphics[width=0.5\linewidth]{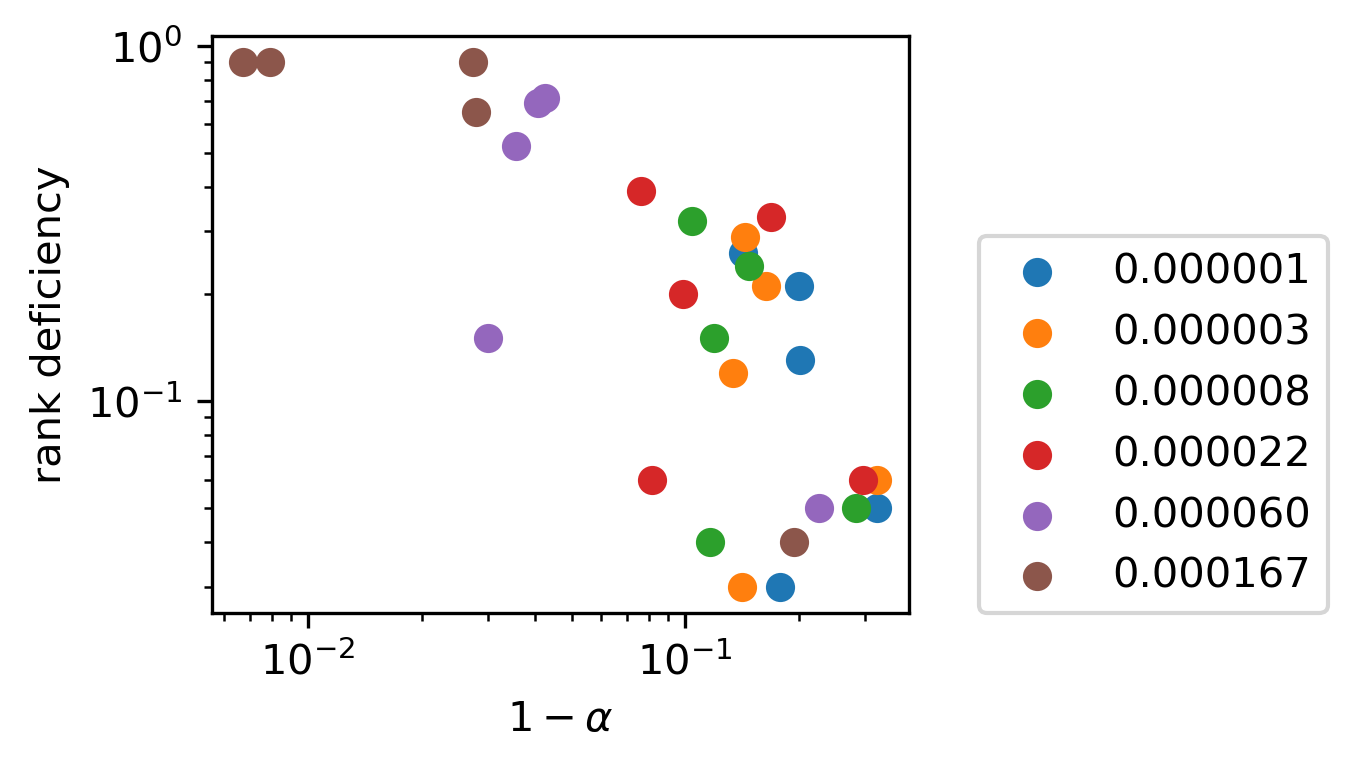}
    \caption{{This is the same figure as Figure~\ref{fig:rank}. The legend shows the weight decay value used for each experiment. Dots with the same color come from the five different hidden layers for the model trained at this specific weight decay value.}}
    \label{fig:rank full}
\end{figure}

{
\section{Meaning of the PAH}\label{app sec: pah}
\subsection{Linear Regression and Phases of the PAH}\label{app sec: linear regression}
In some sense, the phases of the PAH already appear implicitly in many standard models. Consider a linear regression problem:
\begin{equation}
    W^* = \min_W \E \|Wx-y\|^2,
\end{equation}
where $\E$ denotes averaging over the training set. Its solution is well known:
\begin{equation}
    (W^*)^\top =  \E[xx^\top]^{-1}\E[xy].
\end{equation}
From the perspective of the PAH, this solution can be seen as a composition of two functional layers: $W^* =  V_1 V_2$, where the first layer is $V_1= \E[xx^\top]^{-1/2}$ and the second layer is $V_2 = \E[\tilde{x}y]$, and $\tilde{x} = \E[xx^\top]^{-1/2} x$. 

The first layer normalizes the input distribution and is apparently related to the PAH because one can identify 
\begin{equation}
    H_a = \E[xx^\top]
\end{equation}
as the input representation to the layer, and so
\begin{equation}
    \E[xx^\top]^{-1} = (W^*)^\top W^* = Z_a,
\end{equation}
which implies that 
\begin{equation}
    H_a^{-1} = Z_a,
\end{equation}
which can be identified as the 7th phase of the PAH in Table~\ref{tab:reciprocal relations}.

}

{
\subsection{Potential Implications of the PAH}
Here, we discuss some possible and interesting meanings of the phases of the PAH. Validating these intuitions could be of great interest.

One way the phase could imply, as we suggested in the manuscript, is that a positive relation between $H_a$ and $Z_a$ could imply a low-rank mechanism, where larger eigenvalues of $H_a$ are expanded while smaller eigenvalues are suppressed. The fact that neural networks learn these low-rank representations is consistent with common observation. For example, see \cite{kobayashi2024weight}.
Another implied mechanism is representation normalization. This happens when $H_a \propto Z_a^{-1}$, which means that $H_b$ will be normalized after this weight. See the discussion about linear regression in Section~\ref{app sec: linear regression} for how this phase exists implicitly, even in linear regression.

Another kind of interesting phase is where the alignment condition implies gradient vanishing or explosion problems, which have been well-known for decades \citep{kanai2017preventing, hochreiter1998vanishing}. Noticing that $g_a =W^T g_b$, it is naturally the case that if $Z_b$ and $G_b$ are aligned with a positive exponent, the gradient in the previous layer will face an explosion and vanishing problem simultaneously (as larger eigenvalues become larger). In contrast, the phase $Z_b\propto G_b^{-1}$ seems to be the ideal training phase as it implies that the gradient in the previous will be normalized. Identifying the causes of these phases is also of great future interest.
}

\end{document}